
\documentclass[11pt]{amsart}

\usepackage{verbatim, amssymb,hyperref}

\usepackage{color}
\usepackage{bbm}
\usepackage{graphicx}

\setlength{\textwidth}{16.0cm}
\setlength{\textheight}{22.5cm}
\setlength{\hoffset}{-2cm}
\setlength{\voffset}{-0.4cm}
\addtolength{\headheight}{3.5pt}
\frenchspacing

\newcommand{\eps}{\varepsilon}


\newcommand{\cF}{\mathcal F}

\newcommand{\cC}{\mathcal C}
\newcommand{\cL}{\mathcal L}

\newcommand{\cE}{\mathcal E}

\newcommand{\cR}{\mathcal R}

\newcommand{\cM}{\mathcal M}

\newcommand{\cO}{\mathcal O}

\newcommand{\cX}{\mathcal X}

\newcommand{\dd}{\mathrm{d}}
\newcommand{\sfrac}[2]{\mbox{$\frac{#1}{#2}$}}

\newcommand{\loja}{\L ojasiewicz}
\newcommand{\CL}{\L}

\newcommand{\IB}{\mathbb B}

\newcommand{\IG}{\mathbb G}


\newcommand{\1}{1\hspace{-0.098cm}\mathrm{l}}

\renewcommand{\P}{{\mathbb P}}

\newcommand{\N}{{\mathbb N}}
\newcommand{\IA}{{\mathbb A}}
\newcommand{\IL}{{\mathbb L}}
\newcommand{\IC}{{\mathbb C}}
\newcommand{\E}{{\mathbb E}}
\newcommand{\IM}{{\mathbb M}}

\newcommand{\R}{{\mathbb R}}

\newcommand{\IU}{{\mathbb U}}
\newcommand{\Hess}{\text{Hess}\,}

\newcommand{\cP}{{\mathcal P}}

\setcounter{secnumdepth}{2}

\theoremstyle{plain}
\newtheorem{theorem}{Theorem}[section]
\newtheorem{prop}[theorem]{Proposition}
\newtheorem{lemma}[theorem]{Lemma}

\newtheorem{defi}[theorem]{Definition}


\theoremstyle{definition}
\newtheorem{rem}[theorem]{Remark}

\begin{document}
		
	\title[Convergence of SGD for \L ojasiewicz-landscapes]%
	{Convergence of stochastic gradient descent schemes\\ for \L ojasiewicz-landscapes}
	
	\author[]
	{Steffen Dereich}
	\address{Steffen Dereich\\
		Institut f\"ur Mathematische Statistik\\
		Fachbereich 10: Mathematik und Informatik\\
		Westf\"alische Wilhelms-Universit\"at M\"unster\\
		Orl\'eans-Ring 10\\
		48149 M\"unster\\
		Germany}
	\email{steffen.dereich@wwu.de}
	
	\author[]
	{Sebastian Kassing}
	\address{Sebastian Kassing\\
		Fakult\"at f\"ur Mathematik\\
		Universit\"at Bielefeld\\
		Universit\"atsstraße 25\\
		33615 Bielefeld\\
		Germany}
	\email{skassing@math.uni-bielefeld.de}
	
		\keywords{Stochastic gradient descent; stochastic approximation; Robbins--Monro; \loja-inequality, almost sure convergence, deep learning}
	\subjclass[2020]{Primary 62L20; Secondary 60J05, 60J20, 65C05}

\begin{abstract}In this article, we consider convergence of stochastic gradient descent schemes (SGD), including momentum stochastic gradient descent (MSGD), under weak assumptions on the underlying landscape. More explicitly, we show that on the event that the SGD stays bounded we have convergence of the SGD if there is only a countable number of critical points or if the objective function satisfies  \loja-inequalities around all critical levels as all analytic functions do. In particular, we show that for neural networks with analytic activation function such as softplus, sigmoid and the hyperbolic tangent, SGD converges on the event of staying bounded, if the random variables modelling the signal and response in the training are compactly supported.
\end{abstract}

\maketitle

\section{Introduction}
In this article, we analyse stochastic gradient descent schemes for $C^1$-objective functions $f:\R^d\to \R$ with $d\in\N:=\{1,2,\dots\}$ being an arbitrary dimension. We denote by $(\Omega,\cF,(\cF_n)_{n\in\N_0},\P)$ a filtered probability space and consider an $\R^d$-valued stochastic process $(X_n)_{n\in\N_0}$  that admits a representation 
\begin{align} \label{eq:main}
	X_n=X_{n-1} + \gamma_n( \Gamma_{n}+D_n),
\end{align}
for $n\in\N$, where 
\begin{itemize}
	\item $(\gamma_n)_{n\in\N}$ is a sequence of strictly positive reals, the \emph{step-sizes} or \emph{learning rates},
	{\item $(\Gamma_{n})_{n \in \N}$ is an $(\mathcal F_n)_{n \in \N_0}$-predictable sequence of random variables, the \emph{drift},
	}
	\item $(D_n)_{n\in\N}$ is an $(\cF_n)_{n\in\N}$-adapted sequence of random variables, the \emph{perturbation}, and
	\item $X_0$ is an $\cF_0$-measurable random variable, the \emph{initial value}.
\end{itemize}
The choice $(\Gamma_n)_{n \in \N}= (-\nabla f(X_{n-1}))_{n \in \N}$ leads to a standard representation of stochastic gradient descent. However, our results hold for a more general class of dynamical systems, including momentum stochastic gradient descent (see Section~\ref{sec:MSGD}), assuming the drift is comparable in size and direction to the gradient vector field. (The precise condition is given in Definition~\ref{def:events}.)
Additional assumptions will be imposed in the theorems below.

Stochastic gradient descent schemes form a subclass of Robbins--Monro schemes which were introduced in 1951~\cite{RM51} and have been highly influential since then. Their relevance stems from their applicability of finding zeros of functions $F:\R^d \to \R^d$ in the case where one has only simulations at hand which give approximations to the value of $F$ in mean.
Following the original papers a variety of results were derived and we refer the reader to the  mathematical accounts \cite{BMP90,Duf96,KY03} on stochastic approximation methods. 

In this article, we analyse convergence of $(X_n)_{n \in \N_0}$. This problem is intimately related to understanding  the asymptotic  behaviour of $(\nabla f(X_n))_{n \in \N_0}$.
The classical   analysis of Polyak and Tsypkin~\cite{PolTsy73} yields existence of the limit $\lim_{n\to\infty} f(X_n)$ and proves $\liminf_{n\to\infty} |\nabla f(X_n)|=0$ under appropriate assumptions.  Later, Walk~\cite{Walk92} showed that in an appropriate setting one has almost sure convergence    $\lim_{n\to\infty} \nabla f(X_n)=0$. Similar  results were established in various settings, see~\cite{Gai94, LuoTseng94, Grippo94, ManSol94,bertsekas2000gradient}.

We will provide a short proof for $\lim_{n\to\infty}\nabla f(X_n)=0$ under weak assumptions. Here, we include the case where $\nabla f$ is only Hölder continuous and where $(D_n)_{n \in \N}$ is an $L^p$-martingale difference sequence for a $p\in(1,2]$. Then, we will conclude that we have almost sure convergence of $(X_n)_{n \in \N_0}$ on the event that $(X_n)_{n \in \N_0}$ stays bounded in the case where the set of critical points of $f$ does not contain a continuum, see Theorem~\ref{theo1} below.

In the case where the set of critical points of $f$ contains a continuum of points the situation is more subtle. In that case, Tadic \cite{Tadic15} showed that under a \loja-inequality stochastic gradient descent schemes converges under appropriate additional assumptions. In this article, our considerations are also based on the validity of certain \loja-inequalities. However, here we allow the drift to be more general so that the new convergence theorem is applicable for a bigger class of problems such as   momentum stochastic gradient descent (MSGD). We stress that the proofs developed in this article are significantly different from the ones in \cite{Tadic15}. 
We mention that the asymptotic behaviour of a stochastic gradient descent scheme is tightly related to that of the first order differential equation
\begin{align} \label{eq:ODE}
	\dot x_t =-\nabla f(x_t)
\end{align}
(though many approximation results only hold for a finite time-horizon), see e.g. \cite[Proposition~1]{MHKC20}.
Convergence for the latter differential equation is a non-trivial issue even in the case where the solution stays on a compact set: one can find $C^\infty$-functions~$f$ together with solutions $(x_t)_{t\ge0}$  that stay on compact sets but do not converge, see Example 3 on page 14 of \cite{palis2012geometric}. Counterexamples of this structure have been known for a long time (see e.g.~\cite{curry1944method}) and include the famous Mexican hat function \cite{AMA05}.
To guarantee convergence (at least in the case where the solution stays on a compact set), one needs to impose additional assumptions. An appropriate assumption is the validity of a   \loja-inequality, see Definition~\ref{def:loja} below. This assumption has the appeal that it is satisfied by analytic functions, see~\cite{lojasiewicz1963propriete,lojasiewicz1965ensembles}.

Recently, there is a growing interest in the case where a  \loja-inequality holds 
with exponent $\beta=\frac 12$, often referred to as Polyak--\loja-inequality (or P\CL-inequality). The idea first appeared in 1963 in an article of Polyak \cite{Pol63} where linear convergence in the objective function is shown for non-perturbed gradient descent. In machine learning, this inequality turns out to be quite effective as a tool to weaken strict convexity assumptions and allow multiple global minima, see e.g.~\cite{KNS16, bassily2018exponential, vaswani2019fast,  xie2020linear, gower2021sgd,  wojtowytsch2021stochastic, gess2023convergence} and the references therein.
We stress that still the respective assumptions are significantly stricter than the ones we will impose. In particular, the existence of a global P\CL-inequality implies that the objective function $f$ is quasi-strongly convex, see e.g. \cite{aujol2022convergence, rebjock2023fast}. In contrast, our analysis allows the landscape to contain local minima, maxima or saddle points.

In the main result of this article, we prove convergence of $(X_n)_{n \in \N_0}$ for objective functions that are \loja-functions (i.e. locally satisfy a general \loja-inequality) on the event that the scheme stays bounded,
see Theorem~\ref{theo2} below. In particular, our result applies if the objective function is analytic. This generalises the analysis in~\cite{AMA05} where convergence of gradient descent without stochastic noise is considered. We stress that the limit points may be local minima as well as saddle points. After a preprint version of this article appeared, there has been a lot of interest in the interplay between stochastic optimisation algorithms and loss landscapes satisfying locally Kurdyka-\L ojasiewicz-inequalities (a slight generalisation of the \L ojasiewicz-inequality) and we point to \cite{fatkhullin2022sharp, jentzen2022, chouzenoux2023kurdyka, li2023convergence, milzarek2023convergence} for recent developments in this direction.

As an example we provide a machine learning application: in the case where in a deep learning representation the activation function is analytic (for instance, softplus, hyperbolic tangent, or sigmoid) and the random variables modelling the signal and the response are compactly supported, the respective objective function is analytic, see Theorem~\ref{theo3}. Hence, in that case our results show that a SGD scheme associated with the training of the network converges almost surely on the event of staying bounded.
Concerning the training of neural networks via SGD we also refer the reader to \cite{bach2011non, Bach2013, du19c, karimi2019non, lei2019stochastic, DK20, fehrman2020convergence, cheridito2021nonconvergence, jentzen2023overall}.
Related  objective functions (loss landscapes) are analysed in \cite{nguyen19a, petersen2020topological, qin2020reducing, cooper2021global, dereich2023existence}.

Our analysis is based on non-asymptotic inequalities and if one has additional information about the explicit \loja-type inequalities for the objective function to hold our results entail moment estimates for the distance that can be overcome by the SGD when observing convergence to a particular critical level. 

We proceed with the central definitions and statements.

\begin{defi} \label{def:events}
	\begin{enumerate}\item
		We denote by $\IL$ the event that $(X_n)_{n \in \N_0}$ stays bounded, i.e.,
		\begin{align*}
			\IL=\Bigl\{\limsup_{n\to\infty} |X_n|<\infty \Bigr\}.
		\end{align*}
		\item  We denote by $\mathbb G$ the event 
			\begin{align*}
				\mathbb G = \Bigl\{ \liminf\limits_{n \to \infty} \frac{\langle -\nabla f(X_{n-1}), \Gamma_{n} \rangle }{|\nabla f(X_{n-1})|^2}>0 \text{ and } \liminf\limits_{n \to \infty}\frac{|\nabla f(X_{n-1})|}{|\Gamma_{n}|}>0  \Bigr\},
			\end{align*}
			where we interpret $\sfrac 00$ as $\infty$. 
		\item For $q\ge1$ and a sequence $(\sigma_n)_{n\in\N}$ of strictly positive reals we denote by $\IM^{\sigma,q}$ the event\footnote{ Formally, $\E[D_n|\cF_{n-1}]=0$ is to be understood as condition $\E[D_n^{+}|\cF_{n-1}]=\E[D_n^{-}|\cF_{n-1}]<\infty$, where $D_n^{+}$ and $D_n^{-}$ denote the positive and negative part of $D_{n}$. Note that for general non-negative random variables conditional expectations are always well-defined and, in particular, we do not assume  integrability of $D_{n}$ in this article.}
		\begin{align*}
			\IM^{\sigma,q}=\Bigl\{  &\limsup_{n\to\infty} \sigma_n^{-1}\E[|D_n|^{q}|\cF_{n-1}]^{1/q}<\infty,\\
			&\qquad \text{ and \ } \E[D_n|\cF_{n-1}]=0\text{ for all but finitely many $n$}\Bigr\}.
		\end{align*}
	\end{enumerate}
\end{defi}

\begin{rem} Let us discuss the satisfiability of the latter events.
		First, for continuously differentiable objective functions $f$ with Lipschitz continuous differential that satisfy $\lim_{|x| \to \infty} f(x)=\infty$ the SGD process almost surely stays bounded (see Lemma~D.1 in~\cite{DK20}). In the case that $f$ does not necessarily satisfy the latter property, one can add an $L^2$-regularisation term, i.e., replace $f$ by $\tilde f$ given by
		$
		\tilde f(x) =f(x)+\frac{a}{2} |x|^2
		$
		for a sufficiently large $a >0$ (see Remark~2.8 in~\cite{DK20}).
		Next, when considering stochastic gradient descent, i.e. $(\Gamma_n)_{n \in \N}=(-\nabla f(X_{n-1}))_{n \in \N}$, we clearly have $\P(\IG)=1$. However, the class of optimisation methods satisfying $\P(\IG)=1$ is much broader and include, e.g., line-search methods (see Chapter~3 in~\cite{nocedal1999numerical} and Chapter~4.1 in~\cite{AMA05}), inexact gradient methods (see e.g. \cite{khanh2023new}) and momentum stochastic gradient descent (see Chapter~\ref{sec:MSGD}). The two conditions defining the set $\IG$ guarantee that the size of the drift~$\Gamma_n$ and the negative gradient $-\nabla f(X_{n-1})$ are asymptotically comparable and the angle between the two vectors stays uniformly below $90^\circ$ at late times. See~\cite{chill2009applications} for more examples of gradient-based optimisation dynamics satisfying this angle condition.
		Last, in stochastic optimisation a common assumption on the perturbation is the so-called ABC condition, stating that there exist constants $A,B,C \ge 0$ such that
		\begin{align*}
			\E[|D_n|^2 | \mathcal F_{n-1}] \le 2 A (f(X_{n-1})-f(x^*))+B |\nabla f(X_{n-1})|^2 + C,
		\end{align*}
		where $x^*$ denotes a critical point of $f$ (see e.g.~\cite{gower2021sgd, li2022unified, khaled2022better}). Note that, after establishing boundedness of $(X_n)_{n \in \N}$, the first and second term on the right-hand side in the latter inequality can be bounded by a constant depending on the bounds of the objective function and its differential on the relevant domain. In this work, we allow the size of the stochastic noise to be comparable to a (possibly unbounded) sequence $(\sigma_n)_{n \in \N}$.
	\end{rem}
	
	We state our first main result concerning the convergence of $(f(X_n))_{n \in \N_0}$ and $(\nabla f(X_n))_{n \in \N_0}$. We stress that, comparing to classical results, we weaken the assumptions on the objective function (allowing functions with Hölder continuous differential) and on the stochastic perturbation (allowing martingale differences in $L^{1+\alpha_1}$ for an $\alpha_1 >0$). See~\cite{simsekli2019tail, gurbuzbalaban2021heavy} for heavy-tailed noises in SGD that are in $L^{1+\alpha_1}$ for an $\alpha_1 <1$ but have an infinite second moment. Various sequences $(\sigma_{n})_{n\in\N}$ appear in multilevel stochastic gradient descent algorithms, see~\cite{dereichgronbach2019}.

\begin{theorem}\label{theo1}
	Let  $0<\alpha_1\le \alpha_2\le 1$ and  $(\sigma_n)_{n \in \N}$ be a sequence of strictly positive reals. Suppose that   $\nabla f$ is locally  $\alpha_2$-Hölder continuous, that $\gamma_n \to 0$,
	\begin{align*}
		\sum_{n=1}^\infty (\gamma_n\sigma_n)^{1+\alpha_1}<\infty
	\end{align*}
	and in the case where $\alpha_2<1$ that, additionally,
	\begin{align*}
		\sum_{n=1}^\infty \gamma_n^{\frac{1+\alpha_2}{1-\alpha_2}}<\infty.\end{align*}
	Then, almost surely, on 
	$\IC:=\IL\cap \IG \cap  \IM^{\sigma,1+\alpha_1}$, 
	the limit $\lim_{n\to\infty} f(X_n)$ exists and, if additionally,
	$
	\sum_{n=1}^\infty \gamma_n=\infty,
	$
	one has almost surely, on $\IC$, that
	\begin{align*}
		\lim_{n\to\infty} \nabla f(X_n)=0.
	\end{align*}
	Moreover, in the case where the set of critical points of $f$, $\cC=\{x:\nabla f(x)=0\}$, does not contain a continuum of elements, i.e., there does not exist an injective mapping taking $[0,1]$ to $\cC$, then we have almost sure convergence of $(X_n)_{n \in \N_0}$ on $\IC$.
\end{theorem}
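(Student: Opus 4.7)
The plan is to combine a Hölder-Taylor expansion of $F$ along the iterates with a Robbins-Siegmund supermartingale argument to obtain convergence of $F(X_n)$ and of $f(X_n)$, and finally to exploit $|X_n - X_{n-1}|\to 0$ to turn gradient convergence into iterate convergence via a topological fact about connected subsets of $\cC$. First, I fix $R>0$, set $\tau_R := \inf\{n : |X_n|>R\}$, and work on $\{\tau_R=\infty\}$, so that $\IL = \bigcup_R\{\tau_R=\infty\}$ almost surely; I likewise localise the moment hypothesis by imposing $\E[|D_n|^{1+\alpha_1}\,|\,\cF_{n-1}] \le M\sigma_n^{1+\alpha_1}$ and $\E[D_n\,|\,\cF_{n-1}] = 0$ for $n \ge N_0$. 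On this localised event the iterates stay in $\Bb(0,R)$, where $f$ is bounded and $\alpha_2$-Hölder with some constant $L_R$, so a standard Hölder-Taylor estimate gives
\begin{align*}
F(X_n) - F(X_{n-1}) \le {}&-\gamma_n|f(X_{n-1})|^2 - \gamma_n\langle f(X_{n-1}), D_n\rangle \\
&+ C_R\,\gamma_n^{1+\alpha_2}|f(X_{n-1}) + D_n|^{1+\alpha_2}.
\end{align*}

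Splitting the remainder by $|f(X_{n-1})+D_n|^{1+\alpha_2}\le 2^{\alpha_2}(|f(X_{n-1})|^{1+\alpha_2}+|D_n|^{1+\alpha_2})$, the deterministic piece is absorbed by Young's inequality with conjugate exponents $2/(1+\alpha_2)$ and $2/(1-\alpha_2)$ (when $\alpha_2<1$), producing $\tfrac14\gamma_n|f(X_{n-1})|^2 + C\gamma_n^{(1+\alpha_2)/(1-\alpha_2)}$, whose second term is summable by hypothesis; the case $\alpha_2=1$ is handled directly since $\gamma_n \to 0$ suffices for absorption. The noise piece $\gamma_n^{1+\alpha_2}|D_n|^{1+\alpha_2}$ cannot be handled by conditional expectations since $\alpha_1 \le \alpha_2$ may be strict and the $(1+\alpha_2)$-th conditional moment of $D_n$ need not exist. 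Instead, Markov together with Borel-Cantelli applied to $\P(|D_n|>1/\gamma_n\,|\,\cF_{n-1}) \le M^{1+\alpha_1}(\gamma_n\sigma_n)^{1+\alpha_1}$ shows $\gamma_n|D_n|\le 1$ for all but finitely many $n$ almost surely, whence
\[
\gamma_n^{1+\alpha_2}|D_n|^{1+\alpha_2} \le (\gamma_n|D_n|)^{1+\alpha_1}
\]
eventually, and the right-hand side is a.s. summable since its conditional expectation is dominated by a constant multiple of the summable sequence $(\gamma_n\sigma_n)^{1+\alpha_1}$.

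With these bounds the recursion for $F(X_n)$ is of Robbins-Siegmund type with a summable remainder; combined with a.s. convergence of the martingale $\sum\gamma_k\langle f(X_{k-1}), D_k\rangle\1_{\{k\le\tau_R\}}$ (which is $L^{1+\alpha_1}$-bounded via von Bahr-Esseen and hence converges by Chatterji's theorem), this yields existence of $\lim F(X_n)$ and the summability $\sum\gamma_n|f(X_{n-1})|^2<\infty$ on $\{\tau_R=\infty\}$. Adding $\sum\gamma_n=\infty$ forces $\liminf |f(X_n)| = 0$. To upgrade to the full limit I note that $|X_n - X_{n-1}|\to 0$ (because $\gamma_n\to 0$, $|f(X_{n-1})|$ is bounded, and $\gamma_n|D_n|\to 0$ by the Borel-Cantelli argument above), so local Hölder continuity yields $|f(X_n) - f(X_{n-1})|\to 0$, and the classical excursion argument (if $|f(X_n)|$ exceeded $\eta$ infinitely often and fell below $\eta/2$ infinitely often, the corresponding excursions would contribute a divergent mass to $\sum\gamma_n|f(X_n)|^2$) rules out $\limsup|f(X_n)|>0$. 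Sending $R\to\infty$ gives the first two statements on $\IC$.

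For the last claim, on $\IC$ the accumulation set $\cA$ of $(X_n)$ is nonempty, compact, and connected (since $|X_n - X_{n-1}|\to 0$), and contained in $\cC$ by the preceding step. If $\cA$ had two distinct points $a\neq b$, the continuous map $x\mapsto |x-a|$ would send $\cA$ onto the non-degenerate interval $[0,|a-b|]$; a right inverse (selected by the axiom of choice) composed with an affine rescaling would then produce an injection $[0,1]\to\cA\subseteq\cC$, contradicting the hypothesis on $\cC$. Hence $\cA$ is a singleton and $(X_n)$ converges. The main technical obstacle is the truncation step controlling $\gamma_n^{1+\alpha_2}|D_n|^{1+\alpha_2}$ with only the $L^{1+\alpha_1}$-moment assumption, which pairs the two exponents $\alpha_1 \le \alpha_2$ with the specific summability $\sum(\gamma_n\sigma_n)^{1+\alpha_1}<\infty$; the remaining steps combine standard ingredients (Young's inequality, Robbins-Siegmund, $L^p$-martingale convergence, and the short topological lemma).
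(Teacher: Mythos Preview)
Your overall architecture matches the paper's: localise, Taylor-expand $F$, control remainders, use $L^{1+\alpha_1}$-martingale convergence, run an excursion argument for $f(X_n)\to0$, then a topological step for $(X_n)$. Two points deserve comment.

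\textbf{The noise remainder.} Your truncation (Borel--Cantelli gives $\gamma_n|D_n|\le 1$ eventually, whence $(\gamma_n|D_n|)^{1+\alpha_2}\le(\gamma_n|D_n|)^{1+\alpha_1}$) is correct but unnecessary. The paper simply observes that $\alpha_2$-H\"older on a bounded set is automatically $\alpha_1$-H\"older, and splits the Taylor remainder into two pieces: the step $X_{n-1}\mapsto X_{n-1}-\gamma_n f(X_{n-1})$ is bounded with exponent $\alpha_2$, while the subsequent step by $-\gamma_n D_n$ is bounded with exponent $\alpha_1$. This yields $|\Delta R_n^{(2)}|\le C\gamma_n^{1+\alpha_1}|D_n|^{1+\alpha_1}$ directly, with no truncation needed. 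Your Young-inequality treatment of the deterministic piece and the paper's H\"older-inequality treatment are equivalent.

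\textbf{The excursion argument.} Here there is a genuine gap in your write-up. You justify the excursion step by $|f(X_n)-f(X_{n-1})|\to 0$, but this alone does \emph{not} force each excursion to carry a uniformly positive mass in $\sum\gamma_n|f(X_{n-1})|^2$: for an abstract sequence $(a_n)$ with $|a_n-a_{n-1}|\to0$ one can place all the oscillation on indices where $\gamma_n$ is extremely small and still have $\sum\gamma_n=\infty$ and $\sum\gamma_n a_n^2<\infty$. What makes the argument work in the SGD setting is that the vector martingale $\tilde M_n=\sum_{\ell\le n}\gamma_\ell D_\ell$ converges a.s.\ (same $L^{1+\alpha_1}$ argument you already used for the scalar one), so it is Cauchy. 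One then fixes an interval $[n_k,m_k]$ with $\sum_{\ell=n_k+1}^{m_k}\gamma_\ell\approx\varepsilon/(2\|f\|_\infty)$ and bounds $|X_\ell-X_{n_k}|\le \|f\|_\infty\sum\gamma_\ell+|\tilde M_\ell-\tilde M_{n_k}|\le\varepsilon$ for all $\ell$ in the interval; H\"older continuity of $f$ then keeps $|f(X_\ell)|\ge\delta/2$, and the contribution to $\sum\gamma_\ell|f(X_{\ell-1})|^2$ is bounded below by a constant independent of $k$. This is exactly the paper's argument; you have all the ingredients, but the input you cite is the wrong one.

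Your final step---connectedness of the accumulation set (from $|X_n-X_{n-1}|\to0$) plus a right inverse of $x\mapsto|x-a|$---is correct and slightly cleaner than the paper's coordinate-wise construction of the injection.
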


\begin{rem}
	Note that in the case where $\cC$ is countable the set $\cC$ does not contain a continuum of elements.
\end{rem}

Next we prepare the main result of convergence under \loja-type assumptions. 

\begin{defi}\label{def:loja}
	We call a function $f:\R^d\to\R$ \emph{\loja-function}, if $f$ is continuously differentiable with locally Lipschitz continuous differential and if  for every $x\in\cC=\nabla f^{-1}(\{0\})$, the \loja-inequality is true on a neighbourhood $U_x$ of $x$ with parameters $\CL>0$ and $\beta\in[\frac 12,1)$, i.e., for all $y\in U_x$
	\begin{align*}
		|\nabla f(y)|\ge \CL\,|f(y)-f(x)|^\beta.
	\end{align*}
\end{defi}

\begin{rem} 
	Note that for all $x \notin \mathcal C$ and $\beta >0$ there trivially exist a neighbourhood $U_x$ and constant $\CL>0$ such that the \loja-inequality with parameter $\CL$ and $\beta$ hold on $U_x$.
\end{rem}

\begin{theorem}\label{theo2}  Let $f$ be a \loja-function and $q\ge 2$.
	Suppose that for $n \in \N$
	\begin{align*}
		\gamma_n =C_\gamma n^{-\gamma} \text{ \ and \ }\sigma_n=n^{\sigma},
	\end{align*}
	where $C_\gamma>0$, $\gamma \in (\frac 12, 1]$ and $\sigma\in\R$. If 
	\begin{align*}
		\frac 23 (\sigma+1) <\gamma\text{ \ and \ }	  \frac{1}{2\gamma-\sigma-1}<q ,
	\end{align*}
	then, on $\mathbb C:= \IL\cap \IG \cap  \IM^{\sigma,q}$, the process $(X_n)_{n \in \N_0}$ converges, almost surely, to a critical point of $f$.
\end{theorem}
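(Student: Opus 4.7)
The strategy is to reduce the claim, via Theorem~\ref{theo1}, to showing that the set of accumulation points of $(X_n)$ cannot contain more than one point, and then to adapt the classical \loja{} argument for convergence of gradient flows to the present discrete stochastic setting. Since $F$ is a \loja-function, $f$ is locally Lipschitz, so Theorem~\ref{theo1} applies with $\alpha_2 = 1$. The hypothesis $\frac{2}{3}(\sigma + 1) < \gamma$ together with $\gamma \leq 1$ forces $\sigma < \frac 12$ and in particular implies $\gamma - \sigma > \frac 12$; hence with $\alpha_1 = 1$ one has $\sum_n (\gamma_n \sigma_n)^2 < \infty$. Since $q \geq 2$, Jensen's inequality gives $\IM^{\sigma, q} \subseteq \IM^{\sigma, 2}$, and $\sum_n \gamma_n = \infty$ because $\gamma \leq 1$. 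Theorem~\ref{theo1} then yields, almost surely on $\mathbb{C}$, that $F^\ast := \lim_n F(X_n)$ exists and $f(X_n) \to 0$, so the random accumulation set $\cA$ of $(X_n)$ is a nonempty compact subset of the critical set $\cC = f^{-1}(\{0\})$ on which $F \equiv F^\ast$.

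Now fix a trajectory $\omega \in \mathbb{C}$ and an accumulation point $x^\ast \in \cA(\omega)$; without loss of generality $F(x^\ast) = 0$. Let $U = B(x^\ast, r)$ and let $L > 0$, $\beta \in [\tfrac 12, 1)$ be the \loja{} data at $x^\ast$. Introduce the Lyapunov function $\phi(x) := F(x)_+^{1 - \beta}$. A second-order Taylor expansion of $F$ at $X_n$ (using the local Lipschitz continuity of $f$), the concavity of $t \mapsto t^{1-\beta}$, and the \loja{} inequality rewritten as $F(X_n)^{-\beta} |f(X_n)|^2 \geq L |f(X_n)|$ together yield, on $\{X_n \in U,\ F(X_n) > 0\}$, a one-step recursion of the form
$$
\phi(X_n) - \phi(X_{n+1}) \geq (1 - \beta) L\, \gamma_{n+1} |f(X_n)| + \Delta M_{n+1} - \Delta R_{n+1},
$$
where $\Delta M_{n+1}$ is a martingale increment proportional to $\gamma_{n+1} F(X_n)^{-\beta} f(X_n)^{\top} D_{n+1}$ and $\Delta R_{n+1}$ is a second-order remainder bounded by $C\, \gamma_{n+1}^{2} F(X_n)^{-\beta} |f(X_n) + D_{n+1}|^2$. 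Telescoping between time $n$ and a stopping time at which the iterate first exits a small sub-ball of $U$ then controls $\sum_{k} \gamma_{k+1} |f(X_k)|$---and therefore $\sum_{k} |X_{k+1} - X_k|$---by $\phi(X_n)$ plus the tails of $\sum \Delta M_k$ and $\sum \Delta R_k$.

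To make these noise tails negligible, note that on $\IL$ the iterates lie in a random compact so $|f(X_n)|$ is bounded, and that $\|\gamma_n D_n\|_q \lesssim n^{\sigma - \gamma}$. Burkholder--Davis--Gundy applied to the martingale $\sum \Delta M_k$ and Minkowski applied to the nonnegative series $\sum \Delta R_k$ then give $L^{q/2}$-summability of both tails, with bounds from time $n$ on tending to zero as $n \to \infty$. The two hypotheses $q(2\gamma - \sigma - 1) > 1$ and $\gamma > \frac{2}{3}(\sigma + 1)$ are tuned precisely to make these bounds small enough uniformly in the (a priori unknown) \loja{} exponent $\beta \in [\tfrac 12, 1)$, even when the weight $F^{-\beta}$ blows up near $x^\ast$. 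A standard trapping argument then shows that on an event of positive conditional probability whenever $X_n$ enters a small enough sub-ball of $U$ and the noise tail from time $n$ is small, the process stays in $U$ forever and is Cauchy. Since $x^\ast \in \cA$, such entries occur infinitely often almost surely, and a standard Borel--Cantelli/zero-one argument upgrades the quantitative trapping to almost-sure convergence $X_n \to x^\ast$; continuity of $f$ and $f(X_n) \to 0$ identify $x^\ast$ as a critical point.

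The main obstacle is the trapping step. The local character of the \loja{} inequality forces a careful stopping-time analysis to handle excursions from $U$, and the noise-tail estimates must hold in $L^{q/2}$ with quantitative rates strong enough to drive the zero-one argument. The precise numerical form of the hypotheses on $\gamma$, $\sigma$, and $q$ is dictated by this interplay between the \loja{} exponent $\beta$, the quadratic remainder term, and the moment control on $(D_n)$.
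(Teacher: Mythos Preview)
Your preliminary reduction via Theorem~\ref{theo1} is correct, and the overall architecture (localize around an accumulation point, use the \loja\ inequality to turn a target-value decrease into a length estimate, then trap) is indeed the natural strategy. However, the proposal has a genuine gap at exactly the step you flag as ``the main obstacle.''

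The problem is the weight $F(X_n)^{-\beta}$ in $\Delta M_{n+1}$ and $\Delta R_{n+1}$. In the deterministic gradient-flow argument this weight is harmless because $F$ decreases monotonically along the flow, so one works on $\{F>0\}$ throughout. In the stochastic setting $F(X_n)$ is not monotone: the noise can drive $F(X_n)$ arbitrarily close to $0$ (where $F(X_n)^{-\beta}$ blows up) or below $0$ (where your $\phi=F_+^{1-\beta}$ vanishes and the recursion yields nothing), and then back up again, infinitely often. For the remainder term $\gamma_{n+1}^2 F(X_n)^{-\beta}|D_{n+1}|^2$ there is no \loja-type cancellation available, and you give no mechanism to control $\sum_n \gamma_n^2 F(X_{n-1})^{-\beta}|D_n|^2$ on the trapped event; the assertion that the exponent conditions are ``tuned precisely'' for this does not correspond to any estimate in the sketch. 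Similarly, BDG on $\sum\Delta M_k$ requires a bound on $\sum \gamma_k^2 F(X_{k-1})^{-2\beta}|f(X_{k-1})|^2|D_k|^2$, which again involves the uncontrolled weight.

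The paper's proof avoids the Lyapunov function $\phi$ entirely. Instead it introduces a \emph{lower dropout}: one tracks $\xi_n=\E[\1_{\IB_n}(F(X_n)-F(x^\ast))]$ on events $\IB_n\subset\{F(X_n)\ge -w_n\}$ for a carefully chosen sequence $w_n\downarrow 0$, derives a recursive inequality for $\xi_n$ that is compared to the ODE $\dot\Phi=-\eta\Phi^{2\beta}$ (Proposition~\ref{le:loja}), and only then extracts the drift bound $\sum\gamma_n\E[\1_{\IB_{n-1}}|f(X_{n-1})|]<\infty$ (Proposition~\ref{prop83256}). A separate martingale argument (Lemmas~\ref{le:mart}--\ref{le:47236}) shows that when a dropout occurs at time $T$, with high probability $\sup_{n>T}F(X_n)$ stays below $0$, so the event $\{\lim F(X_n)=0\}$ is essentially contained in $\IB_\infty$. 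The numerical conditions on $\gamma,\sigma,q$ enter through Proposition~\ref{prop:assu}, which matches the rates $v_n,w_n\asymp t_n^{-1/(2\beta-1)}$ to the step sizes. This machinery is what replaces the missing control of $F^{-\beta}$ in your sketch.
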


\begin{rem} 
	In the case where $\gamma >1$, $q\in [1,2]$ and $\sigma \in \R$ with $q(\gamma-\sigma)>1$ we clearly have almost sure convergence on $\mathbb C:= \mathbb L \cap \IG \cap  \mathbb M^{\sigma,q}$. However, in that case $\lim_{n \to \infty} X_n$ is not necessary a critical point of $f$. Indeed, let $K$ be a compact set and consider the events
	\begin{align*}
		\IA_n^{N,C,K}=\bigcap_{\ell=N}^n \bigl\{ X_\ell\in K,\,|\Gamma_{\ell+1}|\le  C |\nabla f(X_\ell)| ,\, \E[ |D_{\ell+1}|^{q}\,|\, &\cF_{\ell}]\le C\sigma_{\ell+1}^{q},  \E[D_{\ell+1}\, | \, \cF_n ]=0\bigr\}.
	\end{align*}
	Then, on $\IA_\infty^{N,C,K}= \bigcap_{n\ge N}\IA_n^{N,C,K}$ we have for all $n >N$ 
	\begin{align*}
		|X_n-X_N| \le C \|\nabla f\|_{L^\infty(K)} \sum_{\ell=N+1}^n \gamma_\ell + \Bigl| \sum\limits_{\ell = N+1}^n \gamma_\ell D_\ell \Bigr|.
	\end{align*}
	Moreover, $(M_n)_{n>N}=\big(\sum_{\ell = N+1}^n \1_{\IA_{\ell-1}^{N,C,K}}\gamma_\ell D_\ell\big)_{n>N}$ is a martingale which converges almost surely on $\IA_\infty^{N,C,K}$ due to Lemma~\ref{mart_conv}. The result then follows by taking the countable union over such sets.
\end{rem}

\begin{rem}  \begin{enumerate}
		\item The assumption $q > (2\gamma-\sigma-1)^{-1}$ in Theorem~\ref{theo2} is only a technical condition to control the undershoots of the SGD scheme near a local maximum or saddle point. If, on $\IL \cap \IM^{\sigma,2}$, we have almost surely that $d(X_n, \mathcal M)\to 0$, where $\cM$ denotes the set of all local minima of $f$, then $q=2$ is sufficient for all choices of $\gamma$ and $\sigma$ satisfying $\sfrac 23 (\sigma+1)<\gamma$.
		\item We consider the case, where $(\gamma_n)_{n \in \N}=(C_\gamma n^{-\gamma})_{n \in \N}$ with $C_\gamma >0$ and $\gamma \in (\sfrac 12, 1]$ and $(D_n)_{n \in \N}$ is a sequence of martingale differences with size of order $\cO(1)$.
		In an earlier work, Tadic showed almost sure convergence of stochastic gradient descent \cite{Tadic15} (see also \cite{tadic2009convergence} for an extended version) for analytic objective functions under the assumption that there exists an $r>1$ such that, almost surely,
		\begin{align*} 
			\limsup\limits_{n \to \infty} \max_{k \ge n} \Bigl\| \sum_{i=n}^k \gamma_i t_{i}^r D_i \Bigr\|<\infty,
		\end{align*}
		where $t_n = \sum_{i=1}^{n}\gamma_i$.  While this assumption requires $\gamma > \sfrac 34$ our analysis allows us to go as low as $\gamma > \sfrac 23$.
		\item  The SGD scheme shows similar behaviour as the solution to an SDE with a drift term which is comparable in size and direction to $-\nabla f$ plus a diffusion term with vanishing diffusivity. Such SDEs are analysed in~\cite{dereich2022cooling}, where the diffusive term is assumed to have a bracket process whose magnitude is bounded by $t^{-2\rho} \, \dd t$ with $\rho\in (0,\infty)$. If $\rho>1$, one can device a convergence proof for the SDE under analogous assumptions as imposed here. Conversely,   there exists a counterexample with $\rho=1$ for which the SDE diverges and circles around a sphere infinitely often, see~\cite{dereich2022cooling}.
			When translating the SGD problem (with exponent $\gamma<1$) analysed here into the SDE problem the associated~$\rho$ is $\rho=\frac{\gamma-2\sigma}{2-2\gamma}$. Note that $\frac 23 (\sigma+1)$ is the largest  $\gamma$-value for which the respective~$\rho$ is smaller or equal to $1$. This suggests that in Theorem~\ref{theo2} the assumption  $\gamma>\frac 23 (\sigma+1)$ is natural and that the result does in general not hold for smaller~$\gamma$.
\end{enumerate}
\end{rem}

We give two important applications of the main statement above. 
As found by Ruppert \cite{Rup82} and Polyak \cite{Pol90,PJ92}, in many scenarios  the running average of a Robbins--Monro algorithm yields a better performance as the Robbins--Monro algorithm itself. This is even the case where the potential limit points are non-discrete and form a stable manifold, see~\cite{DK20}. Convergence of the algorithm seems to be  (up to very particular examples) a necessity for the Ruppert--Polyak-averaging to have a positive effect. Our research suggests that  Ruppert--Polyak-averaging may also have a positive effect in our setting. At least it entails that the average converges to the same parameter value and thus gives in the limit the same target value (loss) as the Robbins--Monro algorithm itself. 

A promising line of recent research focusses on the implicit bias of gradient descent algorithms, see e.g. \cite{gunasekar2017implicit, soudry2018implicit, arora2019implicit, chizat2019lazy, woodworth2020kernel} and the references mentioned therein. Although the model is often trained using a fixed training set, stochastic gradient descent seems to pick an empirical risk minimiser that generalises well to unseen data. A common state-of-the-art hypothesis in the machine learning community is that implicit bias might be one of the reasons for the outstanding performance of stochastic gradient descent in practice. Our results further motivates the analysis of the implicit bias. In particular, we show that SGD converges to a limit point and one is left to investigate key properties of the limit that may be important to achieve a good generalisation error (such as flatness of the minima, e.g. \cite{wu2018sgd,zhu2019anisotropic, wojtowytsch2021stochasticII}).

The article is arranged as follows. In Section~\ref{sec:MSGD} we apply our results to show convergence of momentum stochastic gradient descent. In Section~\ref{sec2}, we provide the proof of Theorem~\ref{theo1}. Section~\ref{sec3} proceeds with the proof of Theorem~\ref{theo2}. Finally, Section~\ref{sec4} discusses  analytic neural networks and, in particular, analyticity of a particular deep learning network is established in Theorem~\ref{theo3}. Recall that analyticity of the objective function $f$ implies that $f$ is a \L ojasiewicz-function. Therefore, our main results apply for the objective functions that arise from a regression task in supervised learning when using analytic activation functions and an analytic loss function, see Theorem~\ref{theo3}.

\section{Convergence of momentum stochastic gradient descent} \label{sec:MSGD}
Consider the second order system satisfying
\begin{align}\begin{split} \label{eq:MSGDintro}
		X_{n} &= X_{n-1} + \gamma_{n}(V_{n-1}+D_n^{(1)}), \\
		V_{n}&= V_{n-1} - \gamma_{n} (\mu V_{n-1} + \nabla f(X_{n-1}) -  D_{n}^{(2)}),
	\end{split}
\end{align}
for $n \in \N$,
where $\mu>0$, $(D_n^{(1)})_{n \in \N}$ and $(D_n^{(2)})_{n \in \N}$ are $(\mathcal F_{n})_{n \in \N}$-adapted sequences of random variables,  $(\gamma_n)_{n \in \N}$ is a sequence of strictly positive reals and $X_0, V_0$ are $\mathcal F_0$-measurable random variables. Set 
\begin{align*}
	\Gamma_n := \begin{pmatrix}
		V_{n-1} \\ -\mu V_{n-1} - \nabla f(X_{n-1})
	\end{pmatrix}
	\quad \text{ and } \quad D_n:= \begin{pmatrix}
		D_n^{(1)} \\ D_n^{(2)}
	\end{pmatrix}
\end{align*}
and note that (\ref{eq:MSGDintro}) can be written as the first order system (\ref{eq:main}).

\begin{theorem} \label{theoMSGD}
	Let $f$ be a $C^2$-\loja-function such that $\Hess f$ is locally Lipschitz continuous.		
	Under the assumptions of Theorem~\ref{theo2}, we have almost surely on the event $\IC := \IL \cap \IM^{\sigma, q}$ that $V_n \to 0$ and $(X_n)_{n \in \N_0}$ converges to a critical point of $f$.
\end{theorem}

\begin{proof}
	Let $K \subset \R^d$ be a compact set, define $\IL^K = \bigcap_{n \in \N_0}\{X_n \in K\}$ and note that it suffices to show the statement on the event $\IL^K \cap \IM^{\sigma, p}$ since $\IL = \bigcup_{n \in \N} \IL^{K_n}$, where $K_n$ denotes an increasing sequence of compact sets with $\bigcup_{n \in \N} K_n=\R^d$.
	Let $a >0$ and define the energy function
	\begin{align*}
		\cE(x,v) := \frac 12 \|v\|^2+f(x) + a \langle \nabla f(x),v \rangle
	\end{align*}
	so that
	\begin{align*}
		\nabla \cE(x,v) = \begin{pmatrix}
			\nabla f(x) + a \,\Hess f(x) v \\ v+a \,\nabla f(x)
		\end{pmatrix}
		.
	\end{align*}
	Set $C := \sup_{x \in K} \|\Hess f(x)\|$ and note that for all $n \in \N$ we have  $|\Gamma_{n}|^2\le (1+2\mu^2)|V_{n-1}|^2+2|\nabla f(X_{n-1})|^2$ as well as
	\begin{align*}
		|\nabla \cE(X_{n-1},V_{n-1})|^2 &\ge (1-2a+a^2)|\nabla f(X_{n-1})|^2+(1-(C^2+1)a)|V_{n-1}|^2,\\
		|\nabla \cE(X_{n-1},V_{n-1})|^2 &\le (1+2a+a^2)|\nabla f(X_{n-1})|^2+(1+(C^2+1)a+C^2a^2)|V_{n-1}|^2
	\end{align*}
	and
	\begin{align*}
		\langle - \nabla \cE(X_{n-1},V_{n-1}),  \Gamma_{n} \rangle  &\ge (\mu-a C-\sfrac a2 \mu^2) |V_{n-1}|^2+\sfrac a2 |\nabla f(X_{n-1})|^2.
	\end{align*}
	Thus, for sufficiently small $a$ we have 
	\begin{align*}
		\P \Bigl(\IL^K \cap \Bigl\{  \liminf\limits_{n \to \infty} \frac{\langle -\nabla \cE(X_{n-1}, V_{n-1}), \Gamma_{n} \rangle }{|\nabla \cE(X_{n-1}, V_{n-1})|^2}\le 0 \text{ or } \liminf\limits_{n \to \infty}\frac{|\nabla \cE(X_{n-1}, V_{n-1})|}{|\Gamma_{n}|}\le 0  \Bigr\} \Bigr)=0.
	\end{align*}
	Next, we show that on $\IL^K\cap \IM^{\sigma, p}$ we almost surely have $\limsup_{n \to \infty} |V_n|<\infty$. For arbitrary $C_\sigma, C_V>0$ and $N \in \N$ such that $\sup_{n \ge N} \gamma_n < \mu$ consider the adapted sequence of events $(\IA_n)_{n \ge N}$ given by
		\begin{align*}
			\IA_{n} = \{|V_{N}|\le C_V,&\  X_{m} \in K, \  \E[D_{m+1}|\cF_m]=0, \\
			&\E[|D_{m+1}|^2 | \cF_m]\le C_\sigma \sigma_m^2 \text{ for all } m = N, \dots, n \}
		\end{align*}
	for $n \ge N$ and set $C_f := \sup_{x \in K} |\nabla f(x)|$. It suffices to show that the process $(\tilde V_n)_{n \ge N}$ given by $\tilde V_{N} := \1_{\IA_{N}} (|V_{N}| \vee \frac{C_f}{\mu})^2$ and 
	$$
		\tilde V_n := \1_{\IA_{n-1}} \Bigl(|V_n| \vee \frac{C_f}{\mu}\Bigr)^2 - \sum_{i=N+1}^n C_\sigma \gamma_n^2 \sigma_n^2 \quad \text{ for } n > N
	$$
	is almost surely bounded, since $\sum_{n \in \N} \gamma_n^2 \sigma_n^2 < \infty$.
	This immediately follows by Doob's martingale convergence theorem, since $(\tilde V_n)_{n \ge \N}$ is almost surely bounded from below and for all $n > N$
	\begin{align*}
		 \E[\1_{\IA_{n}}|V_n|^2| \, | \cF_{n-1}] &\le \1_{\IA_{n-1}} |(1-\gamma_n \mu)V_{n-1}-\gamma_n \nabla f(X_{n-1})|^2 + C_\sigma \gamma_n^2 \sigma_n^2 \\
		& \le \1_{\IA_{n-1}} |V_{n-1}|^2 + C_\sigma \gamma_n^2 \sigma_n^2
	\end{align*}
	if $|V_{n-1}|>\frac{C_f}{\mu}$, as well as
	$
		\1_{\IA_{n-1}} \E[|V_n|^2| \, | \cF_{n-1}] \le \1_{\IA_{n-1}} \frac{C_f}{\mu} + C\sigma \gamma_n^2 \sigma_n^2
	$ 
	if $|V_{n-1}|\le \frac{C_f}{\mu}$.

	Now, using Theorem~\ref{theo1} we get almost surely on $\IL^K\cap \IM^{\sigma, p}$ that $|\nabla \cE(X_n,V_n)|\to 0$, which for sufficiently small $a$ implies $V_n \to 0$ and $|\nabla f(X_n)| \to 0$. In order to show convergence of $(X_n)_{n \in \N_0}$ we need to show that for sufficiently small $a$ and every $x \in K$ there exists a neighbourhood $U \subset\R^{2d}$ of $(x,0) \in \R^{2d}$ satisfying a \loja-inequality for $\cE$ on $U$. The proof for the latter statement can be found in~\cite[Section~3.2]{chill2009applications}.
\end{proof}

\section{Proof of Theorem~\ref{theo1}}\label{sec2}
In this section, we prove almost sure convergence of $(f(X_n))_{n \in \N_0}$ and $(\nabla f(X_n))_{n \in \N_0}$ under the assumption that the martingale noise is in $L^{1+\alpha_1}$ and $\nabla f$ is $\alpha_2$-Hölder continuous, for $0<\alpha_1\le \alpha_2\le 1$, by analysing the evolution of $(f(X_n))_{n \in \N_0}$ step by step. We use the classical argument, Lemma~\ref{mart_conv}, to show that the influence of the martingale noise is negligible at late times. Then, we prove that Hölder continuity of $\nabla f$ is sufficient for the remainder in the Taylor-approximation to have a negligible effect. We conclude that, for small step-sizes, the system $(f(X_n))_{n \in \N_0}$ has a tendency to decrease and using the boundedness of $f$ on a compact domain we deduce convergence of $(f(X_n))_{n \in \N_0}$. The convergence of $(\nabla f(X_n))_{n \in \N_0}$ and $(X_n)_{n \in \N_0}$ (for functions having isolated critical points) follows from a path-wise analysis using the convergence of $(f(X_n))_{n \in \N_0}$.

\begin{lemma} \label{lem:almostsure}
Let $N\in\N$, $C>0$, $K\subset \R^d$ be a  compact and convex set and denote by $(\IA_n)_{n\ge N}$ a decreasing sequence of events such that for every $n\ge N$, $\IA_n\in\cF_n$ and $\IA_n\subset\{X_n\in K\, , \,  |\nabla f(X_n)|^2\le C \langle \nabla f(X_n), -\Gamma_{n+1} \rangle \, , \,  |\Gamma_{n+1}| \le C |\nabla f(X_n)|\}$.
Further, suppose that $\gamma_n \to 0$, there exists $0<\alpha_1\le \alpha_2\le1$ such that  $\nabla f$ is  $\alpha_2$-Hölder continuous on $K$ and there exists   a sequence $(\sigma_n)_{n \in \N}$ of positive reals such that
\begin{enumerate}
	\item $\E[\1_{\IA_{n-1}} |D_n|^{1+\alpha_1}]^{\frac {1} {1+\alpha_1}}\leq \sigma_n$ for all $n=N+1,\dots$
	\item $\E[\1_{\IA_{n-1}}D_n \,  | \, \cF_{n-1}]=0$ for all $n=N+1, \dots$.
	\item $\sum_{n>N} (\gamma_n\sigma_n)^{1+\alpha_1}<\infty$  and,
	\item in the case where $\alpha_2<1$, $\sum_{n >N} \gamma_n^{\frac{1+\alpha_2}{1-\alpha_2}}<\infty$.
\end{enumerate}
Then on $\IA_\infty= \bigcap_{n \ge N}\IA_n$, almost surely, $(f(X_n))_{n\in\N_0}$ converges and, if additionally,
\begin{enumerate}
	\item[$\mathrm{5.}$] $\sum_{n>N} \gamma_n =\infty$,
\end{enumerate}
then on $\IA_\infty$, almost surely,
$
\lim_{n\to\infty} \nabla f(X_n)=0.
$
\end{lemma}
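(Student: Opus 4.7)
The strategy is standard: use $F$ as a Lyapunov function, decompose its one-step increment into a deterministic drift, a martingale term, and a higher-order remainder, and then show on $\IA_\infty$ that the latter two are summable and the drift is a summable nonpositive quantity. For the decomposition I would use a Taylor expansion of $F$; on the event $\IA_k\subset\{X_{k-1},X_k\in K\}$, convexity of $K$ keeps the segment joining $X_{k-1}$ and $X_k$ inside $K$, so the $\alpha_2$-H\"older continuity of $f|_K$ with constant $L$ yields
$$
F(X_k)-F(X_{k-1})=-\gamma_k|f(X_{k-1})|^2-\gamma_k\scp{f(X_{k-1})}{D_k}+R_k,\qquad |R_k|\le \tfrac{L}{1+\alpha_2}|X_k-X_{k-1}|^{1+\alpha_2}\text{ on }\IA_k.
$$
Since $\IA_\infty\subset\IA_k$ for every $k>N$, this identity and the bound on $R_k$ are valid throughout $\IA_\infty$, and I would sum from $N+1$ to $n$.

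For the martingale piece, $\tilde M_n:=\sum_{k=N+1}^n\1_{\IA_{k-1}}\gamma_k\scp{f(X_{k-1})}{D_k}$ is a martingale by assumptions~(1)--(2) and the $\cF_{k-1}$-measurability of $\1_{\IA_{k-1}}f(X_{k-1})$; on $\IA_\infty$ it coincides with the unindicated sum. A martingale $L^p$-inequality (Marcinkiewicz--Zygmund / Burkholder) with $p=1+\alpha_1\in(1,2]$, combined with $|f(X_{k-1})|\1_{\IA_{k-1}}\le\|f\|_{L^\infty(K)}$ and assumptions~(1) and~(3), gives $\sup_n\E[|\tilde M_n|^{1+\alpha_1}]<\infty$, so $\tilde M_n$ converges a.s. For the remainder I use $|R_k|\le C\gamma_k^{1+\alpha_2}(|f(X_{k-1})|^{1+\alpha_2}+|D_k|^{1+\alpha_2})$. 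Young's inequality
$$
\gamma_k^{1+\alpha_2}|f(X_{k-1})|^{1+\alpha_2}\le\varepsilon\gamma_k|f(X_{k-1})|^2+C_\varepsilon\gamma_k^{(1+\alpha_2)/(1-\alpha_2)}
$$
(the second term summable by assumption~(4) when $\alpha_2<1$, and trivial when $\alpha_2=1$ via $\gamma_k\to 0$) lets me absorb the first summand on the left after summation. The key point for the second summand is that on $\IA_k$,
$$
\gamma_k|D_k|\le|X_k-X_{k-1}|+\gamma_k|f(X_{k-1})|\le\mathrm{diam}(K)+\gamma_k\|f\|_{L^\infty(K)}
$$
is uniformly bounded; hence $(\gamma_k|D_k|)^{1+\alpha_2}\le C'(\gamma_k|D_k|)^{1+\alpha_1}$ on $\IA_k\subset\IA_{k-1}$, and assumption~(3) controls the latter in $L^1$. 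This is precisely what allows the weaker $(1+\alpha_1)$-th moment hypothesis to suffice. Rearranging the telescoped identity then yields $\sum_k\gamma_k|f(X_{k-1})|^2<\infty$ and a.s.\ convergence of $F(X_n)$ on $\IA_\infty$.

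Under the additional assumption $\sum_n\gamma_n=\infty$, summability of $\gamma_k|f(X_{k-1})|^2$ forces $\liminf|f(X_n)|=0$ on $\IA_\infty$. Upgrading this to a full limit is the main obstacle: arguing by contradiction, if $\limsup|f(X_n)|>2\varepsilon$ I would construct crossing times $m_k<n_k$ with $|f(X_{m_k})|\le\varepsilon$, $|f(X_{n_k})|\ge 2\varepsilon$, and $|f(X_j)|\ge\varepsilon$ for $m_k\le j<n_k$. The H\"older bound then forces $\varepsilon\le L|X_{n_k}-X_{m_k}|^{\alpha_2}$, while splitting $X_{n_k}-X_{m_k}$ into a drift (controlled by Cauchy--Schwarz together with $\sum_{j\in[m_k,n_k]}\gamma_j|f(X_{j-1})|^2\ge\varepsilon^2\sum_{j\in[m_k,n_k]}\gamma_j$) and a martingale tail (vanishing by the a.s.\ convergence of the noise series used above) would give $\sum_{j\in[m_k,n_k]}\gamma_j\to0$, a contradiction. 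The delicate points are this oscillation argument and the convexity-of-$K$ trick in the remainder estimate, both of which are essential to bypass the $(1+\alpha_2)$-th moment control that we do not have.
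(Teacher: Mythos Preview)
Your proposal is correct and follows essentially the same route as the paper: Taylor-decompose $F(X_n)-F(X_{n-1})$ into drift, martingale increment, and H\"older remainder; show the martingale and remainder are summable on $\IA_\infty$ (the paper uses the same $\mathrm{diam}(K)$ trick to pass from $(\gamma_k|D_k|)^{1+\alpha_2}$ to $(\gamma_k|D_k|)^{1+\alpha_1}$, just stated less explicitly, and it absorbs the $|f|^{1+\alpha_2}$-part via H\"older's inequality on the sum rather than your pointwise Young inequality); then deduce $\sum\gamma_k|f(X_{k-1})|^2<\infty$ and run an oscillation argument. For Step~2 the paper builds intervals of prescribed $\sum\gamma$-length on which $|f|\ge\delta/2$, whereas you use crossing times and a H\"older lower bound on the displacement---both are standard variants of the same contradiction.

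One detail to fix: in your Step~2 you invoke ``the a.s.\ convergence of the noise series used above'' to control the martingale part of $X_{n_k}-X_{m_k}$, but what you established in Step~1 is convergence of $\sum\1_{\IA_{k-1}}\gamma_k\scp{f(X_{k-1})}{D_k}$, not of the pure noise series $\sum\1_{\IA_{k-1}}\gamma_k D_k$. The latter is what you need here; the paper introduces it as a separate martingale at the start of Step~2 and proves its convergence by the same $L^{1+\alpha_1}$-summability argument. Add that one line and the argument is complete.
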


For the proof of Lemma~\ref{lem:almostsure} we need the following result taken from \cite{luschgy2012martingale}.

\begin{lemma}[{\cite[Corollary~4.19]{luschgy2012martingale}}]\label{mart_conv}
Let $(M_n)_{n\in\N_0}$ be a martingale and, for $n \in \N$, set $\Delta M_n = M_n-M_{n-1}$. If, for a $\beta\in(0,2]$,
\begin{align*}
	\sum_{n\in\N} \E[|\Delta M_n|^\beta ]<\infty,
\end{align*}
then $(M_n)_{n \in \N_0}$ converges almost surely.
\end{lemma}

\begin{proof}[Proof of Lemma~\ref{lem:almostsure}]  In the following $C_1,C_2,\dots$ denote finite constants that only depend on $(\gamma_n)_{n \in \N}$, $\alpha_2$, $\|\nabla f\|_{L^\infty(K)}$ and the Hölder constants of $\nabla f$ on $K$ for the exponents $\alpha_1$ and $\alpha_2$.\smallskip

\noindent	\underline{Step 1:} We prove almost sure convergence of $(f(X_n))_{n \in \N_0}$ on $\IA_\infty$. For $n>N$ we use a Taylor approximation of first order and write
\begin{align*}
	f(X_n)-f(X_{n-1})&= f(X_{n-1}+ \gamma_n(\Gamma_{n}+D_n))-f(X_{n-1})\\
	&=   \underbrace{ \gamma_n \langle f(X_{n-1}), \Gamma_{n}\rangle }_{=:\Delta A_n} +\underbrace {\gamma_n \langle \nabla f(X_{n-1}) ,D_n\rangle}_{=:\Delta M_n}+\Delta R_n,
\end{align*}
where $\Delta R_n$ is the remainder term in the Taylor approximation.	For $n\ge N$ we set  
\begin{align*}
	A_n=\sum_{m=N+1}^n\1_{\IA_{m-1}} \gamma_m \langle \nabla f(X_{n-1}), \Gamma_{n}\rangle \text{ and } M_n =\sum_{m=N+1}^n\1_{\IA_{m-1}} \gamma_m \langle \nabla f(X_{m-1}),D_m\rangle
\end{align*}
and $R_n=\sum_{m=N+1}^n\1_{\IA_{m-1}} \Delta R_m$.	
Note that by the Taylor formula, for all $n>N$, on $\IA_{n}$, we can write $\Delta R_n=\Delta R^{(1)}_n+\Delta R^{(2)}_n$ with 
\begin{align*}
	|\Delta R_n^{(1)}|\le C_1 \gamma_n^{1+\alpha_2} |\Gamma_{n}|^{1+\alpha_2}\text{ \ and \ } |\Delta R_n^{(2)}|\le  C_1 \gamma_n^{1+\alpha_1}|D_n|^{1+\alpha_1},
\end{align*}
where $C_1$ only depends on the Hölder-constants of $\nabla f$ on $K$ with respect to the exponents $\alpha_1$ and~$\alpha_2$. We define $(R_n^{(1)})_{n \ge N}$ and $(R_n^{( 2)})_{n \ge N}$ in analogy to $(R_n)_{n\ge N}$ and note that
\begin{align*}
	\sum_{m = N+1}^\infty \E[\1_{\IA_{m-1} } \gamma_m^{1+\alpha_1}  |D_m|^{1+\alpha_1}]\le \sum_{m =N+1}^\infty (\gamma_m \sigma_m)^{1+\alpha_1} <\infty
\end{align*}
so that $(R_n^{(2)})_{n\ge N}$ converges, almost surely, to a finite value.
Moreover, $(M_n)_{n\ge N}$
is a martingale that converges, almost surely, to a finite value as consequence of Lemma~\ref{mart_conv} and
\begin{align*}
	\sum_{m=N+1}^\infty \E[\1_{\IA_{m-1}}  |\gamma_m \langle  \nabla f(X_{m-1}) ,D_{m}\rangle|^{1+\alpha_1} ] \leq C_2^{1+\alpha_1} \sum_{m=N+1}^\infty   \gamma_m^{1+\alpha_1} \,\E[\1_{\IA_{m-1}} |D_{m}|^{1+\alpha_1} ],
\end{align*}
with $C_2:=\|f\|_{L^\infty(K)}<\infty$. 
Finally, we compare the contribution of the first remainder term $(R^{(1)}_n)_{n\ge N}$ with $(A_n)_{n \ge N}$. 
In the case where $\alpha_2=1$, we have for all $n\ge N$ with $C_1C^{3}\gamma_n\le 1/2$, on $\IA_{n-1}$,
\begin{align}\label{eq7469}
	|\Delta R_n^{(1)}|\le C_1 C^{2} \gamma_n^{2} |\nabla f(X_{n-1})|^{2} \le - C_1 C^{3} \gamma_n^{2} \langle \nabla f(X_{n-1}), \Gamma_{n} \rangle \le - \frac 12 \Delta A_n.
\end{align}
This is the case for all but finitely many $n$'s and we conclude that by monotonicity of $(A_n)_{n \ge N}$ the random variables $(A_n+ R_n^{(1)})_{n\ge N}$ converge almost surely; possibly to minus infinity. We note that  on $\IA_\infty$
\begin{align*}
	f(X_n)=f(X_N) +A_n+R^{(1)}_n +M_n+ R_n^{(2)}
\end{align*}
and by lower boundedness of $f$ on $K$ the latter limit needs to be finite and we proved convergence of $(f(X_n))_{n\in\N_0}$. Note that as consequence of~(\ref{eq7469}), also $(A_n)_{n\ge N}$ converges to a finite value on $\IA_\infty$.

It remains to consider the case where  $\alpha_2<1$. We apply the Hölder inequality with the adjoint exponents $\frac2{1-\alpha_2}$ and $\frac 2{1+\alpha_2}$ and get that
\begin{align*}
	\sum_{m={N+1}}^n &{\gamma_m}^{1+\alpha_2}\1_{\IA_{m-1}} |\nabla f(X_{m-1})|^{1+\alpha_2}\\
	&\le \underbrace{ \Bigl(\sum_{m={N+1}}^\infty {\gamma_m}^{\frac{1+\alpha_2}{1-\alpha_2}}\Bigr)^{\frac {1-\alpha_2}2}}_{<\infty}\,\Bigl(\sum_{m=N+1}^n \1_{\IA_{m-1}}\gamma_m|\nabla f(X_{m-1})|^2\Bigr)^{\frac{1+\alpha_2}{2}}.
\end{align*}
Since $\frac {1+\alpha_2}2$ is less than one we get with monotonicity of $(A_n)_{n \ge N}$ that $(A_n+R^{(1)}_n)_{n\ge N}$ converges almost surely; again possibly to an infinite value. We proceed as above to conclude that the latter limit is finite and that the sequences $(f(X_n))_{n\ge N}$ and $(A_n)_{n\ge N}$ are almost surely convergent on $\IA_\infty$ with finite limits. 
\smallskip

\noindent
\underline{Step 2:} We prove almost sure convergence of $(\nabla f(X_n))_{n \in \N_0}$ to $0$ on $\IA_\infty$ in the case that $\sum \gamma_n =\infty.$
First note that in analogy to above  the process $(\tilde M_n)_{n\ge N}$ given by
\begin{align*}
	\tilde M_n = \sum_{m=N+1}^n \1_{\IA_{m-1}} \gamma_m D_m
\end{align*}
is  a martingale that converges, almost surely, as consequence of Lemma~\ref{mart_conv}.
Consequently, we get with the first step that
\begin{align*}
	\Omega_0:=\Bigl\{\inf_{n\ge N} A_n > - \infty\Bigr\}\cap \Bigl\{\lim_{n\to\infty} \tilde M_n \text{ exists  in }\R^d\Bigr\}
\end{align*}
is an almost sure event.
Suppose now that there exists $\omega\in \Omega_0\cap \IA_\infty$ for which $(\nabla f(X_n(\omega)))_{n \in \N_0}$ does not converge to zero. Then there exist $\delta>0$ and a strictly increasing sequence $(n_k)_{k\in\N}$ with 
$
| \nabla f(X_{n_k}(\omega))|>\delta
$
for all $k\in \N$. Note that $|\nabla f|$ is uniformly bounded over the set $K$ by $\kappa$ and choose $\eps>0$ such that 
\begin{align*}
	\sup_{x,y\in K: |x-y|\leq \eps} |\nabla f(x)-\nabla f(y)|\leq \delta/2.
\end{align*}
We let $(m_k)_{k\in\N}$ such that
\begin{align*}
	\sum_{\ell=n_k+1}^{m_k} \gamma_\ell \leq \frac{\eps } {2C \kappa} < \sum_{l=n_k+1}^{m_k+1} \gamma_\ell.
\end{align*}
By thinning the original sequence $(n_k)_{k \in \N}$ we can ensure that $([n_k+1,m_k]: k\in\N)$ are disjoint intervals. Moreover, by discarding the first terms from the sequence $(n_k)_{k \in \N}$ we can ensure that for all $k\in\N$
\begin{align*}
	\sup_{m\geq n_k} |\tilde M_m(\omega)-\tilde M_{n_k}(\omega)|< \eps/2.
\end{align*}
Now note that for $\ell=n_k,\dots, m_k$,
\begin{align*}
	|X_\ell(\omega)-X_{n_k}(\omega)|\leq   C \kappa  \sum _{i=n_k+1}^{m_k} \gamma_i +|\tilde M_\ell(\omega)-\tilde M_{n_k}(\omega)| \leq \eps.
\end{align*}
Consequently, for these $\ell$
\begin{align*}
	|\nabla f(X_\ell(\omega))| \geq |\nabla f(X_{n_k}(\omega))|- \delta/2 \geq \delta/2,
\end{align*}
and hence 
\begin{align*}
	\sum_{\ell=n_k+1}^{m_k} \gamma_{\ell} |\nabla f(X_{\ell-1}(\omega))|^2 \geq \bigl(\sfrac \delta2\bigr)^2 \sum_{\ell=n_k+1}^{m_k} \gamma_\ell \to \bigl(\sfrac \delta2\bigr)^2\sfrac\eps{2\kappa}.
\end{align*}
Since the intervals $([n_k+1,m_k]: k\in\N)$ are pairwise disjoint and since $\omega\in\IA_\infty$  we get that 
$
\sum_{k=N+1}^{\infty} \gamma_k \1_{\IA_{k-1}}(\omega) \, |\nabla f(X_{k-1}(\omega))|^2
=\infty
$
which contradicts $\omega\in \Omega_0$. Thus we proved that $(\nabla f(X_n))_{n \in \N_0}$ converges to zero on $\Omega_0\cap\IA_\infty$.
\end{proof}

\begin{proof}[Proof of Theorem~\ref{theo1}]
Let $N\in\N,\, C>0$ and $K\subset \R^d$ compact and consider the sets $(\IA_{n}^{N,C, K}:n\ge N)$ given by 
\begin{align*}
	\IA_n^{N,C,K}=\bigcap_{\ell=N}^n \bigl\{  X_\ell\in K,\,  & |\nabla f(X_\ell)|^2\le C \langle \nabla f(X_\ell), -\Gamma_{\ell+1} \rangle, \, |\Gamma_{\ell+1}| \le C |\nabla f(X_\ell)|, \\
	& \E[ |D_{\ell+1}|^{1+\alpha_1}\,|\, \cF_{\ell}]\le C\sigma_{\ell+1}^{1+\alpha_1}, \, \E[D_{\ell+1}\, | \, \cF_n ]=0\bigr\}.
\end{align*}
The sets satisfy the assumptions of Lemma~\ref{lem:almostsure} and we thus have on $\IA_\infty^{N,C,K} =\cap_{n\ge N}  \IA_n^{N,C,K}$ almost sure convergence of $(f(X_n))_{n\in\N}$ and, in the case $\sum \gamma_n =\infty$,
$
\lim_{n\to\infty} \nabla f(X_n)=0.
$
Now the statement follows, since for an increasing sequence of compact sets $(K_N: N\in\N)$ with $\bigcup K_N=\R^d$ we have
\begin{align}\label{eq9359}
	\mathbb C= \bigcup_{N\in\N} \IA_\infty^{N, N,K_N}.
\end{align}

It remains to prove almost sure convergence of $(X_n)_{n \in \N_0}$ on $\IC$ in the case where the set $\cC=\{x:\nabla f(x)=0\}$,  
does not contain a continuum of points.
By (\ref{eq9359}), it suffices to show almost sure convergence on $\IA_\infty^{N, N,K_N}$ for arbitrary $N\in\N$.
First, we verify that, almost surely, $\lim_{n\to \infty} (X_{n}-X_{n-1})= 0$. Indeed, as shown in the proof of Lemma~\ref{lem:almostsure} one has on $\IA_\infty^{N, N,K_N}$ that $(\sum_{\ell=N+1}^n \gamma_\ell D_\ell)_{n\ge N}$ converges, almost surely, so that $\gamma_n D_n\to 0$ on $\IA_\infty^{N, N,K_N}$. Moreover, $\gamma_n |\Gamma_{n+1}| \le \gamma_n C |\nabla f(X_n)| \to 0$ on $\IA_\infty^{N, N,K_N}$ so that $(X_n-X_{n-1})_{n\in\N}$ almost surely converges to zero on $\IA_\infty^{N,N,K_N}$. 
Now note that
\begin{align*}
	&\P(\{(X_n)_{n \in \N_0} \text{ diverges}\}\cap \IA_\infty^{N, N,K_N})\\
	&\le \sum_{i=1}^d \P((X^{(i)}_n)_{n \in \N_0} \text{ diverges}, \nabla f(X_n)\to 0, X_n-X_{n-1}\to 0, X_m\in K_N \,  \forall \, m\ge N),
\end{align*}
where $X_n^{(i)}$ denotes the $i$-th coordinate of the random vector $X_n \in \R^d$.

If $\P(\{(X_n)_{n \in \N_0} \text{ diverges}\}\cap \IA_\infty^{N, N,K_N})$ were strictly positive, one of the summands on the right-hand side would be strictly positive. Suppose that this were true for the $i$-th summand. In particular, for this $i$ the respective event does not equal the empty set.
We choose 
\begin{align*}
	\omega\in\{(X^{(i)}_n)_{n \in \N_0} \text{ diverges}, \nabla f(X_n)\to 0, X_n-X_{n-1}\to 0, X_m\in K_N \,  \forall \, m\ge N\}.
\end{align*}
Thus, 
\begin{align*}
	a:=\liminf _{n\to\infty}X_n^{(i)}(\omega)\neq  \limsup_{n\to\infty} X_n^{(i)}(\omega)=:b.
\end{align*}
By $X_n(\omega) -X_{n-1}(\omega)\to 0$ we also get $\mathrm{acc}((X^{(i)}_n(\omega))_{n \in \N_0})=[a,b]$ and the principle of nested  intervals produces for every $u\in [a,b]$ an accumulation point of $(X_n(\omega))_{n \in \N_0}$ with $i$-th coordinate equal to $u$. Hence, we defined an injective mapping that maps  each $u\in [a,b]$ to a different accumulation point $\eta(u)$ of $(X_n(\omega))_{n \in \N_0}$. By choice of $\omega$, $\eta(u)$ is a critical point of $f$ and we thus constructed a continuum of critical points.
\end{proof}

\section{Proof of Theorem~\ref{theo2}}\label{sec3}
The proof of Theorem~\ref{theo2} is arranged as follows. In Section~\ref{sec31}, we provide moment estimates for the objective function value seen by the SGD where we fix a particular critical level and ``take out'' realisations that undershoot the critical level by a certain amount (\emph{lower dropdown}). In Section~\ref{sec32}, we provide estimates for   the drift term $(\sum_{\ell=1}^n \gamma _\ell \Gamma_\ell)_{n\in\N}$ in the setting of Section~\ref{sec31} on the event that no dropdown occurs.
In Section~\ref{sec33}, we provide estimates that later allow us to show that when a dropdown occurs for a particular critical level it is very likely that the limit of the  target value of the SGD lies strictly below that critical level. In Section~\ref{sec34}, we provide a technical lemma which shows applicability of the previous results in the context of Theorem~\ref{theo2}. 
In Section~\ref{sec35}, we provide properties of \loja-functions and in Section~\ref{sec36} we combine the results to achieve the proof of Theorem~\ref{theo2}.

\subsection{An estimate for the target value in the case without lower dropdown}\label{sec31}

Technically, we will analyse  the evolution of $(f(X_n))_{n \in \N_0}$ step by step and will thereby restrict attention to certain ``nice'' events. For this we use the concept of \emph{compatible} events.

\begin{defi}Let $N\in\N$, $\delta, C>0$ $U\subset \R^d$, $(\sigma_n)_{n>N}$ a sequence of positive reals and $q\ge2$. We call events $(\IA_n)_{n\ge N}$ \emph{$(U,\delta,C, (\sigma_n),q)$-compatible}, if 
$(\IA_n)_{n \ge  N}$ is a sequence of decreasing events such that for all $n\ge N$
\begin{enumerate}
	\item[(i)]  $\IA_n\in \cF_n$ (adaptivity)
	\item[(ii)]  $\IA_n\subset \{X_n\in U  , \,  |\nabla f(X_n)|^2 \le C \langle \nabla f(X_n), -\Gamma_{n+1} \rangle  , \,   |\Gamma_{n+1}|\le C |\nabla f(X_n)| , \, \delta \ge   \gamma_{n+1} |\Gamma_{n+1}| \}$ (boundedness and drift condition)
	\item[(iii)] $\E[\1_{\IA_n} D_{n+1}|\cF_n]=0$ (martingale condition)
	\item[(iv)] $\E[\1_{\IA_n} |D_{n+1}|^q|\cF_n]\le \sigma_{n+1}^q$  (moment condition)
	\item[(v)] $\IA_{n+1}\subset \{ \gamma_{n+1} |D_{n+1}|\le \delta\}$ (excess condition).  
\end{enumerate}
\end{defi}

The proof of the main result is based on several preliminary results. At first we provide an upper bound for the $f$-value of $(X_n)_{n \in \N_0}$ when approaching a critical level with a lower dropdown, meaning that we only consider realisations of the SGD that do not undershoot the critical level significantly. 
We will use the ``natural'' time scale associated with a stochastic approximation scheme which associates the $n$-th iterate $X_n$ with the time
$
t_n=\sum_{\ell=1}^n \gamma_\ell
$
and see that the $f$-value is bounded from above by a term arising from the stochastic noise and the function $(\Phi_t^R)_{t \ge 0}$, the worst-case bound for the $f$-value of the solution to the ODE (\ref{eq:ODE}) in a region satisfying a \loja-inequality with exponent $\beta \in (\frac 12, 1)$ and constant $\sqrt{\eta}>0$ starting in $R$, i.e. the solution to the ODE
\begin{align*}
	\dot\Phi^{(R)}_{t}=-\eta  \bigl( \Phi^{(R)}_{t} \bigr)^{2\beta}, \quad \Phi_0^{(R)}=R,
\end{align*}
see e.g.~\cite{haraux2012some}.
For a subset $U \subset \R^d$ we denote $\|\nabla f\|_{\mathrm{Lip(U)}}= \sup\limits_{x,y\in U, x \neq y}\frac{|\nabla f(x)-\nabla f(y)|}{|x-y|}$.

\begin{prop}\label{le:loja}Suppose that the following assumptions are satisfied:\smallskip

\noindent{\bf \loja-assumption.}
Let $U$ be an open and bounded  set containing a critical point $x_0$ and suppose that the \loja-inequality holds on $U$ with parameters $\CL>0$ and $\beta\in(\frac 12,1)$  meaning that for all $x\in U$
\begin{align*}
	|\nabla f(x)|\ge \CL |f(x)-f(x_0)|^\beta.
\end{align*} \smallskip

\noindent{\bf Compatible events.} Let $N\in\N_0$, $\delta, C>0$, $q\ge 2$ and let $(\sigma_n)_{n> N}$ and $(w_n)_{n\ge N}$ be  sequences of positive reals. We denote by $(\IA_n)_{n\ge N}$  a  $(U,\delta,C, (\sigma_n),q)$-compatible sequence of events. We let $\IB_N=\IA_N\cap\{ f(X_N)-f(x_0)\ge -w_N\}$ and for $n>N$
\begin{align*}
	\IB_n=\IA_{n-1}\cap\{\gamma_n|D_n|<\delta, f(X_n)-f(x_0)\ge - w_n\},
\end{align*}
and suppose that   $\IB_n\supset \IA_{n}$ for $n\ge N$. \smallskip

\noindent {\bf Main assumptions for asymptotic error term.} Let  $(v_n)_{n\ge N}$ and $(\gamma_n)_{n \in \N}$
be decreasing sequences of positive reals that satisfy for a constant $\kappa>0$ for  all $n>N$
\begin{align}\label{eq235} \frac {v_{n-1}}{v_{n}}-1\le \kappa  \gamma_n v_{n}^{2\beta-1}\end{align}
and
\begin{align}\begin{split}\label{eq835}
		\bigl(\delta^{-1}& \|\nabla f\|_{L^\infty(U)}+2 \|\nabla f\|_{\mathrm{Lip}(U^{2\delta})} \bigr)(\gamma_n\sigma_n)^2\\
		&+\bigl( \|\nabla f\|^q_{L^\infty(U)}+(2\|\nabla f\|_{\mathrm{Lip}(U^{2\delta})}\bigr)^{q/2}) w_n^{-(q-1)}(\gamma_n\sigma_n)^q\le \gamma_n v_n^{2\beta},
\end{split}	\end{align}
where $U^{2\delta}=\{x\in \R^d: d(x,U)<2\delta\}$.\smallskip

\noindent	{\bf Additional technical assumptions.}
We assume that for a constant $\delta'\in(0,1)$ we have $2C^3\|\nabla f\|_{\mathrm{Lip}(U^{2\delta})} \gamma_n\le \delta'$ for all $n>N$ as well as
\begin{align} \label{eq:332211}
	\frac 12 w_{n-1} \le w_n\le v_n\wedge 1  \text{ \, and \, } ((C \vee C^{-1})\gamma_n+2\|\nabla f\|_{\mathrm{Lip}( U^{2\delta})} C^2\gamma_n^2)  \|\nabla f\|_{L^\infty(U)} ^2\le w_n.
\end{align}

\noindent{\bf Assumptions for the associated master equation.}
Let $y_0>0, \eta>0$ such that for
\begin{align*}
	g: \R\to \R, \,y\mapsto \kappa y-(1-\delta')C^{-1} \CL^2|y|^{2\beta}+1
\end{align*}
one has for all $y\ge0$
\begin{align*}
	\Bigl(\sup_{n>N}\frac{v_n^{2\beta}}{v_{n-1}^{2\beta}}\Bigr) g(y+y_0)\le -\eta |y|^{2\beta}.
\end{align*}

\noindent{\bf Associated differential equation.} For $R\ge 0$ and $t\ge0$ we let 
\begin{align*}
	\Phi^{(R)}_t=R\Bigl(\big((2\beta-1)\eta R^{2\beta-1}\big)t+1\Bigr)^{-1/(2\beta-1)}. \end{align*}

\noindent	{\bf Result.} Suppose that the above assumptions hold and that for a $R>0$ with  $R+(y_0+8)v_N\le ((1-\delta')2\beta C^{-1}\CL^2 \gamma_{N+1})^{-1/(2\beta-1)}$ one has $\E[\1_{\IB_N} (f(X_N)-f(x_0)) ]\le R$. Then one has, for all $n\ge N$,
\begin{align}\label{eq:result}
	\E\bigl[\1_{\IB_n} (f(X_n)-f(x_0)) \bigr]\le \Phi^{(R)}_{t_n-t_N}+ (y_0+8)v_n.
\end{align}
\end{prop}

\begin{proof}We assume without loss of generality that $f(x_0)=0$ and we write, for $n >N$, 
\begin{align} \label{eq:92736453}
	f(X_n)=f(X_{n-1}) + \gamma_n\langle \nabla f(X_{n-1}),\Gamma_{n}+D_n\rangle+R_n.
\end{align}
Taylor's formula gives that in the case where the segment joining $X_{n-1}$ and $X_n$ lies in $\tilde U:=U^{2\delta}$, the remainder satisfies
\begin{align} \label{eq:445344}
	|R_n|\le 2\|\nabla f\|_{\mathrm{Lip}(\tilde U)} \gamma_n^2(|\Gamma_{n}|^2+|D_n|^2).
\end{align}
We note that this is indeed the case when $\IB_n$ enters. With 
$
\xi_n:=\E[\1_{\IB_n}f(X_n)]
$
one has
$
\xi_n= \mathrm{I}+\mathrm{II},
$
where
\begin{align*}
	\mathrm{I}= -\E[\1_{\IA_{n-1}\cap \{\gamma_n |D_n|<\delta\}\cap\{ f(X_n)<-w_n\}} f(X_n)]
\end{align*}
and
\begin{align*}
	\mathrm{II}=\E[\1_{\IA_{n-1}\cap \{\gamma_n |D_n|<\delta\}} f(X_n)].
\end{align*}
We analyse the terms separately.\medskip

\noindent\emph{Analysis of I}.	
On $\IA_{n-1}\cap\{\gamma_n|D_n|<\delta\}$, one has that
\begin{align*}
	|f(X_n)-f(X_{n-1})|\le & (C \gamma_n +2 \|\nabla f\|_{\mathrm{Lip}(\tilde U)} C^2 \gamma_n^2)  \|\nabla f\|_{L^\infty(U)} ^2\\
	&+ \|\nabla f\|_{L^\infty(U)} \gamma_n|D_n|+2\|\nabla f\|_{\mathrm{Lip}(\tilde U)} \gamma_n^2 |D_n|^2.
\end{align*}
Using that by assumption $(C\gamma_n+2\|\nabla f\|_{\mathrm{Lip}(\tilde U)} C^2 \gamma_n^2)  \|\nabla f\|_{L^\infty(U)} ^2\le w_n$, we get that on  $\IA_{n-1}\cap\{\gamma_n|D_n|<\delta\}$,
\begin{align*}
	& \1_{ \{|f(X_n)-f(X_{n-1})|\ge 3w_n\}}(|f(X_n)-f(X_{n-1})|- 3w_n)\\
	& \le  \1_{\{\|\nabla f\|_{L^\infty (U)} \gamma_n|D_n|\ge w_n\}}\|\nabla f\|_{L^\infty( U)} \gamma_n|D_n|    +\1_{\{2\|\nabla f\|_{\mathrm{Lip}(\tilde U)} \gamma_n^2 |D_n|^2\ge w_n\}} 2\|\nabla f\|_{\mathrm{Lip}(\tilde U)} \gamma_n^2 |D_n|^2.
\end{align*}
Now,
\begin{align*}
	\E[\1_{\IA_{n-1}}\1_{\{\|\nabla f\|_{L^\infty(U)} \gamma_n|D_n|\ge w_n\}}\|\nabla f\|_{L^\infty(U)} \gamma_n|D_n|]\le   \|\nabla f\|_{L^\infty(U)}^q w_n^{-(q-1)}(\gamma_n \sigma_n)^q
\end{align*}
and, analogously,
\begin{align*}
	\E[\1_{\IA_{n-1}}&\1_{\{2\|\nabla f\|_{\mathrm{Lip}(\tilde U)} \gamma_n^2 |D_n|^2\ge w_n\}} 2\|\nabla f\|_{\mathrm{Lip}(\tilde U)} \gamma_n^2 |D_n|^2]\\
	&\le   (2\|\nabla f\|_{\mathrm{Lip}(\tilde U)})^{q/2}w_n^{-(q/2-1)} (\gamma_n \sigma_n)^q.
\end{align*}
Setting $C_1= \|\nabla f\|^q_{L^\infty(U)}+(2\|\nabla f\|_{\mathrm{Lip}(\tilde U)})^{q/2}$ and using $w_n\le 1$ we get that
\begin{align*}
	&\E[\1_{\IA_{n-1}\cap\{\gamma_n |D_n|<\delta,  |f(X_n)-f(X_{n-1})|\ge 3w_n\}} (|f(X_n)-f(X_{n-1})|-3w_n)]\\
	&\le  C_1 w_n^{-(q-1)}(\gamma_n\sigma_n)^q.
\end{align*}
Recall that $\IA_{n-1}\subset \IB_{n-1}$ so that, on $\IA_{n-1}$, $f(X_{n-1})\ge -w_{n-1}$. Thus, using $w_{n-1}\le 2 w_n$ we get
	\begin{align}\begin{split} \label{eq957638}
			\mathrm{I}&\le \E[\1_{\IA_{n-1}\cap \{\gamma_n |D_n|<\delta\}\cap\{ f(X_n)<-w_n\}} (|f(X_n)-f(X_{n-1})|+w_{n-1})] \\
			&\le  { 5w_n} \,\P(\IA_{n-1}\cap \{\gamma_n |D_n|<\delta\}\cap\{ f(X_n)<-w_n\})+C_1 w_n^{-(p-1)}(\gamma_n\sigma_n)^p. 
		\end{split}
	\end{align}
\medskip

\noindent \emph{Analysis of II.} 	
Again, one has
\begin{align*}
	\mathrm{II}
	&=\E\bigl[\1_{\IA_{n-1}\cap \{\gamma_n |D_n|<\delta\}} \bigl(f(X_{n-1})+ \gamma_n \langle \nabla f(X_{n-1}), \Gamma_{n}+D_n\rangle +R_n\bigr)\bigr],
\end{align*}
where the remainder satisfies inequality (\ref{eq:445344})
since on $\IA_{n-1}\cap \{\gamma_n |D_n|<\delta\}$ the segment connecting $X_{n-1}$ and $X_n$ lies in $\tilde U$. By assumption, $2C^3 \|\nabla f\|_{\mathrm{Lip}(\tilde U)}\gamma_n \le \delta'$ so that
\begin{align*}
	\mathrm{II}\le \E\bigl[\1_{\IA_{n-1}\cap \{\gamma_n |D_n|<\delta\}} \bigl(&f(X_{n-1})-(1-\delta') C^{-1} \gamma_n |\nabla f(X_{n-1})|^2 \\
	&+\gamma_n\langle \nabla f(X_{n-1}),D_n\rangle +2 \|\nabla f\|_{\mathrm{Lip}(\tilde U)} \gamma_n^2|D_n|^2\bigr)\bigr].
\end{align*}
We note that
\begin{align*}
	\E\bigl[\1_{\IA_{n-1}\cap \{\gamma_n |D_n|<\delta\}}\langle \nabla f(X_{n-1}),D_n\rangle\bigr]&=-\E\bigl[\1_{\IA_{n-1}\cap \{\gamma_n |D_n|\ge\delta\}}\langle \nabla f(X_{n-1}),D_n\rangle\bigr]\\
	&\le  \delta^{-1} \|\nabla f\|_{L^\infty(U)} \gamma_n \sigma_{n}^2.
\end{align*}
Moreover, using that $C^{-1} \gamma_n \|\nabla f\|_{L^\infty(U)}^2\le w_n$ and $w_{n-1}\le 2 w_n$ we conclude that
\begin{align*}
	&-\E\bigl[\1_{\IB_{n-1}\cap (\IA_{n-1}^c\cup\{\gamma_n |D_n|\ge \delta\})} \bigl(f(X_{n-1})-(1-\delta') C^{-1} \gamma_n |\nabla f(X_{n-1})|^2\bigr)\bigr]\\
	&\ \le  \P(\IB_{n-1}\cap (\IA_{n-1}^c\cup\{\gamma_n |D_n|\ge \delta\}) ( 2w_n+C^{-1} \gamma_n \|\nabla f\|_{L^\infty(U)} ^2)\\
	&\ \le 3\P(\IB_{n-1}\backslash \IB_n) w_n.
\end{align*}
Consequently,
\begin{align*}
	& \E\bigl[\1_{\IA_{n-1}\cap \{\gamma_n |D_n|<\delta\}} \bigl(f(X_{n-1})-(1-\delta') C^{-1} \gamma_n |\nabla f(X_{n-1})|^2\bigr)\bigr]\\
	&=\E\bigl[\1_{\IB_{n-1}} \bigl(f(X_{n-1})-(1-\delta')C^{-1} \gamma_n |\nabla f(X_{n-1})|^2\bigr)\bigr]\\
	&\qquad -\E\bigl[\1_{\IB_{n-1}\cap (\IA_{n-1}^c\cup\{\gamma_n |D_n|\ge \delta\})} \bigl(f(X_{n-1})-(1-\delta')C^{-1} \gamma_n |\nabla f(X_{n-1})|^2\bigr)\bigr]\\
	&\le  \E\bigl[\1_{\IB_{n-1}} \bigl(f(X_{n-1})-(1-\delta')C^{-1} \gamma_n |\nabla f(X_{n-1})|^2\bigr)\bigr]
	+3\P(\IB_{n-1}\backslash \IB_n) w_n.
\end{align*}
We conclude that altogether 
\begin{align*}
	\mathrm{II}& \le \E\bigl[\1_{\IB_{n-1}} \bigl(f(X_{n-1})-(1-\delta') C^{-1} \gamma_n |\nabla f(X_{n-1})|^2\bigr)\bigr]+3\P(\IB_{n-1}\backslash \IB_n) w_n\\
	&\hspace{2.8cm}  +(\delta^{-1} \|\nabla f\|_{L^\infty(U)} +2 \|\nabla f\|_{\mathrm{Lip}(\tilde U)} )(\gamma_n\sigma_n)^2.
\end{align*}
\medskip

\noindent \emph{Estimating $\xi_n$ in terms of a difference equation.}	
Combining the estimates for I and II, see~(\ref{eq957638}) above, we get that
\begin{align}\begin{split}\label{eq985761}
		\xi_n&\le   \E\bigl[\1_{\IB_{n-1}} \bigl(f(X_{n-1})-(1-\delta') C^{-1} \gamma_n |\nabla f(X_{n-1})|^2\bigr)\bigr]+8w_n \P(\IB_{n-1}\backslash \IB_{n})  \\
		&\qquad +\underbrace{(\delta^{-1} \|\nabla f\|_{L^\infty(U)}+2 \|\nabla f\|_{\mathrm{Lip}(\tilde U)} )(\gamma_n\sigma_n)^2+C_1 w_n^{-(q-1)}(\gamma_n\sigma_n)^q}_{\le \gamma_n v_n^{2\beta}}.
\end{split}\end{align}
Using the \loja-inequality together with the convexity of $x\mapsto |x|^{2\beta}$ we get that
\begin{align*}
	\xi_n&\le  \xi_{n-1}- (1-\delta')C^{-1} \CL^2 \gamma_n |\xi_{n-1}|^{2\beta}+ 8w_n \P(\IB_{n-1}\backslash \IB_{n})  +\gamma_n v_n^{2\beta}.
\end{align*}
By assumption, one has for $n>N$ that
$
\frac {v_{n-1}}{v_n}-1\le \kappa \gamma_n v_n^{2\beta-1}
$
and using that $(v_n)_{n \ge N}$ is monotonically decreasing and $w_n \le v_n$ we get for  $\zeta_n:=\xi_n/v_n$
and $	g(y)=\kappa y- (1-\delta')C^{-1} \CL^2|y|^{2\beta}+1$
that
\begin{align}\label{eq:dif} \zeta_n\le \zeta_{n-1}+\gamma_nv_n^{2\beta-1} g(\zeta_{n-1})+8 \P(\IB_{n-1}\backslash \IB_n).
\end{align}
\medskip

\noindent \emph{A comparison argument for the difference equation~(\ref{eq:dif}).}
For $s>0$, the mapping 
\begin{align*}
	T_s:\R\to \R, \, y\mapsto y+s\, g(y)
\end{align*}
is monotonically increasing on $(-\infty,((1-\delta')2\beta C^{-1} \CL^2 s)^{-1/(2\beta-1)}]$.
We let 
\begin{align*}
	(\bar \zeta_n)_{n\ge N}=(\Phi_{t_n-t_N}^{(R)}/v_n+y_0)_{n\ge N}
\end{align*}
and note that, for $n>N$,
\begin{align*}
	(\bar \zeta_{n-1}+8) ((1-\delta') & 2\beta C^{-1}\CL^2 \gamma_n v_n^{2\beta-1})^{1/(2\beta-1)}\\
	&\le (v_{n-1}\bar \zeta_{n-1}+8v_{n-1}) ((1-\delta') 2\beta C^{-1}\CL^2 \gamma_n )^{1/(2\beta-1)}\\
	&\le (v_{N}\bar \zeta_{N}+8v_{N}) ((1-\delta') 2\beta C^{-1}\CL^2 \gamma_{N+1} )^{1/(2\beta-1)}\le1,
\end{align*}
so that $T_{\gamma_n v_n^{2\beta-1}}$ is monotonically increasing on $(-\infty, \bar \zeta_{n-1}+8]$. We prove by induction that for all $n\ge N$,
$
\zeta_n\le \bar \zeta_n+8\,\P(\IB_N\backslash \IB_n).
$
By assumption, $\zeta_N=\xi_N/v_N\le R/v_N\le \bar \zeta_N$. Moreover, supposing that the statement is true for $n-1\ge N$, we conclude that $\zeta_{n-1}\le \bar \zeta_{n-1}+8$ so that we get with~(\ref{eq:dif}) and  the monotonicity of $T_{\gamma_n v_n^{2\beta-1}}$ on $(-\infty, \bar \zeta_{n-1}+8]$   that
\begin{align}\begin{split}
		\zeta_n&\le T_{\gamma_n v_n^{2\beta-1}} (\bar \zeta_{n-1}+8\,\P(\IB_N\backslash \IB_{n-1})) +8\P(\IB_{n-1}\backslash\IB_{n}) \\
		&= \bar \zeta_{n-1} + \gamma_n v_n^{2\beta-1} g(\bar \zeta_{n-1}+8\,\P(\IB_N\backslash \IB_{n-1}) ) +8\P(\IB_{N}\backslash\IB_{n}) \\
		&\le \frac{\Phi_{t_{n-1}-t_N}^{(R)}}{v_{n-1}} -\eta \gamma_n v_n^{2\beta-1} \Bigl( \frac{\Phi^{(R)}_{t_{n-1}-t_N}}{v_{n}}\Bigr)^{2\beta} +y_0+8\P(\IB_{N}\backslash\IB_{n})\\
		&\le \frac 1{v_n} \bigl(\Phi_{t_{n-1}-t_N}^{(R)}  -\eta \gamma_n \bigl( \Phi^{(R)}_{t_{n-1}-t_N} \bigr)^{2\beta}\bigr) +y_0+8\P(\IB_{N}\backslash\IB_{n}).\label{eq98357}\end{split}
\end{align}
Noting that $\dot\Phi^{(R)}_{t_{n-1}-t_N}=-\eta  \bigl( \Phi^{(R)}_{t_{n-1}-t_N} \bigr)^{2\beta}$ we get that 
\begin{align*}
	-\eta\gamma_n\bigl( \Phi^{(R)}_{t_{n-1}-t_N} \bigr)^{2\beta}\le \int_{t_{n-1}-t_N}^{t_n-t_N}\dot\Phi^{(R)}_u\, du= \Phi_{t_{n}-t_N}^{(R)}-\Phi_{t_{n-1}-t_N}^{(R)}.
\end{align*}
With~(\ref{eq98357}) it follows that $\zeta_n\le \bar \zeta_n+8\P(\IB_N\backslash \IB_n)$ which finishes the proof.
\end{proof}

\begin{prop}\label{prop583}Let  $\beta\in(\frac 12,1)$ and $(\gamma_n)_{n \in \N}$ and $(\sigma_n)_{n \in \N}$ be sequences of strictly positive reals such that $(\gamma_n)_{n \in \N}$ is monotonically decreasing with  $\lim_{n\to\infty} \gamma_n=0$.
If
\[ 
\mathrm{(a) \ } \sigma_n^2 = \cO \Bigl(  \gamma_n^{-1}  t_n^{-\frac {2\beta}{2\beta-1}} \Bigr),   \ \mathrm{(b) \ }  \sigma_n^q = \cO \Bigl( \gamma_n^{1-q} t_n^{-\frac{2\beta+q-1}{2\beta-1}} \Bigr) \text{  \  and \  } \ \mathrm{(c) \ }   \gamma_n = \cO \Bigl( t_n^{-\frac 1{2\beta-1}} \Bigr),
\]
then for every Lipschitz function $\nabla f:\R^d\to\R^d$, $U\subset \R^d$ bounded and $\delta>0$ for sufficiently large constant $C_w$ there exist constants $\kappa$ and $C_v$ such that for 
\begin{align*}
	v_n=C_v  t_n^{-\frac 1{2\beta-1}}\text{ \ and \ }w_n=C_w  t_n^{-\frac 1{2\beta-1}}
\end{align*}
the main and additional assumptions of Proposition~\ref{le:loja} are satisfied for large $N$. Moreover, for this choice of parameters, one can replace in Proposition~\ref{le:loja}, inequality~(\ref{eq:result})  by
\begin{align*}
	\E\bigl[\1_{\IB_n} (f(X_n)-f(x_0)) \bigr]\le \Bigl(1+ (y_0+8) \frac {C_v}{R} \Bigl( (2\beta-1)\eta R^{2\beta-1}+\frac {1}{t_N} \Bigr)^{\frac{1}{2\beta-1}} \Bigr) \Phi^{(R)}_{t_n-t_N}.
\end{align*}
\end{prop}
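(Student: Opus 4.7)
The goal is to match the two-parameter ansatz $v_n=C_v t_n^{-1/(2\beta-1)}$, $w_n=C_w t_n^{-1/(2\beta-1)}$ against each hypothesis of Proposition~\ref{le:loja}, and then compare the additive correction $(y_0+7)v_n$ to $\Phi^{(R)}_{t_n-t_N}$. The quantifier structure will be: pick $C_w$ first, then $C_v\ge C_w$ large enough to absorb the constants from $f$, $\delta$ and $U$ in~(\ref{eq835}), then $\kappa$ depending on $C_v$, then $\eta$, $y_0$ in that order, then take $N$ large.

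\textbf{Verifying the decay conditions.} For (\ref{eq235}) observe
$$ \frac{v_{n-1}}{v_n}=\Bigl(1+\frac{\gamma_n}{t_{n-1}}\Bigr)^{1/(2\beta-1)}. $$
Since $\gamma_n\prec t_n^{-1/(2\beta-1)}$ (assumption~(c)) and $t_n\to\infty$, we have $\gamma_n/t_{n-1}\to 0$, so a Taylor expansion gives $v_{n-1}/v_n-1\le C_\beta\gamma_n/t_n$ for large $n$, with $C_\beta$ depending only on $\beta$. Using $v_n^{2\beta-1}=C_v^{2\beta-1}/t_n$ this matches $\kappa\gamma_n v_n^{2\beta-1}$ as soon as $\kappa\ge C_\beta C_v^{-(2\beta-1)}$. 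For (\ref{eq835}) the RHS is $\gamma_n v_n^{2\beta}=C_v^{2\beta}\gamma_n t_n^{-2\beta/(2\beta-1)}$. For the LHS, assumption~(a) gives $(\gamma_n\sigma_n)^2=\gamma_n^2\sigma_n^2\prec \gamma_n t_n^{-2\beta/(2\beta-1)}$, and assumption~(b) combined with the explicit form of $w_n$ gives
$$ w_n^{-(q-1)}(\gamma_n\sigma_n)^q = C_w^{-(q-1)} t_n^{(q-1)/(2\beta-1)}\gamma_n^q\sigma_n^q \prec \gamma_n t_n^{-2\beta/(2\beta-1)}. $$
Choosing $C_v$ large enough (depending on $C_w$, $f$, $U$, $\delta$, $q$) both terms are dominated by $\gamma_n v_n^{2\beta}$ for $n\ge N$ large. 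The technical conditions~(\ref{eq:332211}) are equally routine: $w_{n-1}/w_n=(t_n/t_{n-1})^{1/(2\beta-1)}\to 1$; $w_n\le v_n$ holds by $C_w\le C_v$; $w_n\le 1$ and $2\gamma_n\|f\|_{\mathrm{Lip}(U^{2\delta})}\le\delta'$ hold for $N$ large; and the bound on $(\gamma_n+2\|f\|_{\mathrm{Lip}}\gamma_n^2)\|f\|_{L^\infty(U)}^2$ is exactly the assertion $\gamma_n\|f\|_{L^\infty(U)}^2\ll w_n$, which follows from~(c).

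\textbf{Master equation.} Since $v_n<v_{n-1}$, the prefactor $\sup_{n>N}v_n^{2\beta}/v_{n-1}^{2\beta}$ is $\le 1$, so it suffices to exhibit $\eta>0$ and $y_0$ with $g(y+y_0)\le -\eta y^{2\beta}$ for all $y\ge 0$. By superadditivity of $y\mapsto y^{2\beta}$ on $[0,\infty)$ (using $2\beta>1$), $(y+y_0)^{2\beta}\ge y^{2\beta}+y_0^{2\beta}$, so
$$ g(y+y_0)\le \kappa y-\bigl((1-\delta')\CL^2-\eta\bigr)y^{2\beta}\;-\;\eta y^{2\beta}\;+\;\bigl(\kappa y_0+1-(1-\delta')\CL^2 y_0^{2\beta}\bigr). $$
Choosing any $\eta\in(0,(1-\delta')\CL^2)$ the first bracketed expression is bounded above by a finite constant $M(\eta)$ (maximum over $y\ge 0$); then choosing $y_0$ large enough that $M(\eta)+\kappa y_0+1-(1-\delta')\CL^2 y_0^{2\beta}\le 0$ (which is possible since $2\beta>1$) gives the required bound.

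\textbf{Strengthening the conclusion.} Proposition~\ref{le:loja} yields $\E[\1_{\IB_n}(F(X_n)-F(x_0))]\le \Phi^{(R)}_{t_n-t_N}+(y_0+7)v_n$; it remains to show
$$ (y_0+7)v_n \;\le\; (y_0+7)\frac{C_v}{R}\Bigl((2\beta-1)\eta R^{2\beta-1}+\tfrac{1}{t_N}\Bigr)^{1/(2\beta-1)}\Phi^{(R)}_{t_n-t_N}. $$
Writing $A=(2\beta-1)\eta R^{2\beta-1}$, this reduces after taking $(2\beta-1)$-th powers to the algebraic inequality $A(t_n-t_N)+1\le t_n(A+1/t_N)$, equivalently $t_N(1-At_N)\le t_n$, which is immediate: either $At_N\ge 1$ and the LHS is non-positive, or $At_N<1$ and the LHS is $<t_N\le t_n$. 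This completes the proof.

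\textbf{Anticipated obstacle.} The bookkeeping is the only real difficulty: each of the constants $C_w,C_v,\kappa,\eta,y_0,N$ is chosen conditionally on the previous ones, and the constants from~(a),(b),(c) are hidden in ``$\prec$'' with estimates depending on $f$, $U$ and $\delta$. Keeping the order of quantifiers straight—in particular that $C_v$ must be large compared with both $C_w^{-(q-1)}$ and the Lipschitz/sup-norm constants of $f$ before $\kappa$ is chosen—is where a careful write-up is needed; the analytic content of every individual inequality is elementary once the ansatz is made.
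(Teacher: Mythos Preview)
Your proof is correct and follows essentially the same route as the paper: verify (\ref{eq235}) via a Taylor expansion of $v_{n-1}/v_n$, verify (\ref{eq835}) by matching the orders of $(\gamma_n\sigma_n)^2$ and $w_n^{-(q-1)}(\gamma_n\sigma_n)^q$ against $\gamma_n t_n^{-2\beta/(2\beta-1)}$ using (a) and (b), and verify (\ref{eq:332211}) using (c) and $\gamma_n\to 0$. Your algebraic reduction of the final inequality to $t_N(1-At_N)\le t_n$ is exactly the computation the paper suppresses when it simply asserts $v_n\le \frac{C_v}{R}(\ldots)^{1/(2\beta-1)}\Phi^{(R)}_{t_n-t_N}$.

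One small logical slip: in the master-equation step you write ``the prefactor $\sup_{n>N}v_n^{2\beta}/v_{n-1}^{2\beta}$ is $\le 1$, so it suffices to exhibit $\eta,y_0$ with $g(y+y_0)\le -\eta y^{2\beta}$''. The direction is wrong: if the prefactor were strictly less than $1$ and $g(y+y_0)<0$, multiplying by it would make the left side \emph{less} negative, not more. What saves you is that $(v_n/v_{n-1})_{n>N}$ is strictly increasing with limit $1$ (since $\gamma_n/t_n$ is decreasing to $0$), so the supremum actually equals $1$; alternatively, one can simply replace $\eta$ by $(\text{prefactor})\cdot\eta$ at the end. Either fix is one line. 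The paper's own proof does not even address the master-equation hypothesis explicitly, so your write-up is in fact more complete than the original on this point.
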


\begin{proof}
By choice of $(v_n)_{n \in \N}$, we get with the Taylor formula 
\begin{align} \label{eq:9909}
	-\frac{\Delta v_n}{v_n} \sim  \frac{1}{2\beta-1} \gamma_n  t_n^{-1} = \cO \Bigl( \gamma_n v_n^{2\beta-1} \Bigr)
\end{align}
so that (\ref{eq235}) is satisfied for large $\kappa $ for all large $n$.	
Note that~(\ref{eq835}) is satisfied for $v_n=C_v  t_n^{-1/(2\beta-1)}$ for a sufficiently large $C_v$ for all large $n$ if $(\gamma_n\sigma_n)^2$ and $(w_n^{-(q-1)}(\gamma_n\sigma_n)^q)$ are both of order $\cO(\gamma_n t_n^{-2\beta /(2\beta-1)})$. Elementary computations show that the first, resp.\  second term is of order $\cO(\gamma_n t_n^{-2\beta /(2\beta-1)})$ iff condition (a), resp.\  (b) holds. 
For the second inequality in~(\ref{eq:332211}), note that $w_n \le v_n $ if $C_v \ge C_w$ and $w_n\le 1$ for large enough $n$, since (c) implies that $t_n=\sum_{i=1}^n \gamma_i \to \infty$.
The first inequality  in~(\ref{eq:332211}) holds for large $n$ since
$
t_n=  t_{n-1}+\gamma_n\sim  t_{n-1}.
$
Moreover, the third inequality holds for large $n$ since $\gamma_n\to0$.
The rest follows since by definition of $\Phi$ and $(v_n)_{n \in \N}$ one has for $n\ge N$
\begin{align*}
	v_n\le \frac {C_v}{R} \Bigl( (2\beta-1)\eta R^{2\beta-1}+\frac {1}{t_N} \Bigr)^{1/(2\beta-1)}  \Phi^{(R)}_{t_n-t_N}.
\end{align*}
\end{proof}

\begin{rem}\label{rem:253}
\begin{enumerate}
	\item[(i)] In the case $\gamma_n= C_\gamma n^{-\gamma}$ and $\sigma_n = C_\sigma n^{\sigma}$ for $\gamma \in (1/2,1)$, $\sigma \in \R$ and $C_\gamma, C_\sigma >0$ assumptions (a), (b) and (c) are satisfied if
	\[ 	\mathrm{(a') \ } \  2\sigma\le \frac{(4\beta-1)\gamma-2\beta}{2\beta-1},   \ \mathrm{(b') \ }  \ 		\sigma  \le \frac{2\beta  \gamma -1}{2\beta-1}-\frac 1q \text{  \  and \  } \ \mathrm{(c') \ } \  \gamma \ge \frac{1}{2\beta}.
	\]
	In that case, we have $t_n \sim \frac{C_\gamma}{1-\gamma} n^{1-\gamma}$.
	\item[(ii)] In the case $\gamma_n= C_\gamma n^{-1}$ and $\sigma_n = C_\sigma n^{\sigma}$ for $\sigma \in \R$ and $C_\gamma, C_\sigma >0$ assumptions (a), (b) and (c) are satisfied if
	$\sigma<\frac 12$ and $q \ge 2$. In that case, $t_n \sim  C_\gamma\log(n)$.
\end{enumerate}
\end{rem}

\subsection{Bounding the drift term in the case where no dropdown occurs}\label{sec32}

\begin{prop}\label{prop83256} Assume all assumptions of Proposition~\ref{le:loja}.
Additionally  assume that $(-\Delta v_n/\gamma_n)_{n > N}$ is decreasing and $(\gamma_n/(-\Delta \bar\xi_n))^{1/2} \bar \xi_n\to 0$, with $\bar \xi_n := \Phi^{(R)}_{t_n-t_N}+(y_0+8) v_n$. Then 
\begin{align*}
	\sqrt{\frac{1-\delta'}{C^3}}&\sum_{n=N+1}^\infty \gamma_n \E[\1_{\IB_{n-1}} |\Gamma_{n}|] \le \gamma_{N+1} \sqrt{\eta} R^\beta +\sqrt \eta \int _0^\infty (\Phi_u^{(R)})^{\beta}\, du \\
	&   +\frac1{\sqrt{y_0+8}}\sum_{n=N+1}^\infty\sqrt{\frac {\gamma_n} {-\Delta v_n}}\bigl(-(y_0+8) \Delta  v_n+\gamma_n v_n^{2\beta}+8v_n \P(\IB_{n-1}\backslash \IB_n)\bigr).
\end{align*}
\end{prop}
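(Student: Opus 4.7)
My plan starts from the one-step drift estimate already established in the proof of Proposition~\ref{le:loja}.  Inequality~(\ref{eq985761}) of that proof (read before the \loja-inequality is invoked) can be rearranged as
$$
(1-\delta')\gamma_n \E[\1_{\IB_{n-1}}|f(X_{n-1})|^2]\le a_n:=\xi_{n-1}-\xi_n+\gamma_n v_n^{2\beta}+7 w_n\P(\IB_{n-1}\backslash\IB_n),
$$
with $\xi_n:=\E[\1_{\IB_n}(F(X_n)-F(x_0))]$.  A single application of Cauchy--Schwarz, $\E[\1_{\IB_{n-1}}|f|]^2\le\E[\1_{\IB_{n-1}}|f|^2]$, yields the pointwise bound $\sqrt{1-\delta'}\,\gamma_n\E[\1_{\IB_{n-1}}|f(X_{n-1})|]\le\sqrt{\gamma_n a_n}$, and the task reduces to bounding $\sum_{n\ge N+1}\sqrt{\gamma_n a_n}$ by the three pieces appearing in the statement.

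The heart of the argument is a decomposition of $\sqrt{\gamma_n a_n}$ into three terms.  Using the bound $\xi_n\le\bar\xi_n=\Phi^{(R)}_{t_n-t_N}+(y_0+7)v_n$ from Proposition~\ref{le:loja} and $w_n\le v_n$, one estimates $a_n\le A_n+S_n+(\eta_{n-1}-\eta_n)_+$ where $\eta_n:=\bar\xi_n-\xi_n\ge 0$, $A_n:=-\Delta\Phi^{(R)}_{t_n-t_N}$, and
$$
S_n:=(y_0+7)(-\Delta v_n)+\gamma_n v_n^{2\beta}+7 v_n\P(\IB_{n-1}\backslash\IB_n).
$$
Then $\sqrt{x+y+z}\le\sqrt x+\sqrt y+\sqrt z$ for non-negative $x,y,z$ gives $\sqrt{\gamma_n a_n}\le\sqrt{\gamma_n A_n}+\sqrt{\gamma_n S_n}+\sqrt{\gamma_n(\eta_{n-1}-\eta_n)_+}$.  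The decisive algebraic identity
$$
\sqrt{\gamma_n S_n}=\sqrt{\tfrac{\gamma_n}{-\Delta v_n}}\sqrt{(-\Delta v_n)S_n}\le \sqrt{\tfrac{\gamma_n}{-\Delta v_n}}\cdot\tfrac{S_n}{\sqrt{y_0+7}},
$$
whose last step uses $S_n\ge(y_0+7)(-\Delta v_n)$, produces exactly the second sum of the claim.  For $\sqrt{\gamma_n A_n}$ the ODE relation $\dot\Phi^{(R)}=-\eta(\Phi^{(R)})^{2\beta}$ yields $A_n\le\eta\gamma_n(\Phi^{(R)}_{t_{n-1}-t_N})^{2\beta}$ and hence $\sqrt{\gamma_n A_n}\le \gamma_n\sqrt\eta\,(\Phi^{(R)}_{t_{n-1}-t_N})^\beta$; the monotone decrease of $(\gamma_n)$ and of $(\Phi^{(R)})^\beta$ converts the sum over $n\ge N+2$ into the integral comparison $\sum_{n\ge N+2}\gamma_n(\Phi^{(R)}_{t_{n-1}-t_N})^\beta\le\int_0^\infty(\Phi^{(R)}_u)^\beta du$, while the $n=N+1$ contribution is $\gamma_{N+1}\sqrt\eta R^\beta\le\sqrt{\eta\gamma_{N+1}}R^\beta$ (valid once $\gamma_{N+1}\le 1$, i.e.\ for $N$ large enough since $\gamma_n\to 0$), producing the boundary term.

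The main technical obstacle is absorbing the residual slack $\sum\sqrt{\gamma_n(\eta_{n-1}-\eta_n)_+}$: the increments $\eta_{n-1}-\eta_n$ have no definite sign, so they do not telescope pointwise.  This is where the three extra hypotheses come in.  The monotonicity of $(\gamma_n)$ and of $(-\Delta v_n/\gamma_n)$ makes the weight $\sqrt{\gamma_n/(-\Delta v_n)}$ monotone, which enables an Abel summation by parts on $\sum c_n(\eta_{n-1}-\eta_n)$ with $c_n$ of this form; the boundary term that this generates is comparable to $(\gamma_n/(-\Delta\bar\xi_n))^{1/2}\bar\xi_n$, which vanishes as $n\to\infty$ by the third hypothesis, while the interior sum can be absorbed into the other two pieces thanks to the dominance $-\Delta v_n\le(-\Delta\bar\xi_n)/(y_0+7)$.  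Once this careful bookkeeping is carried out, the three contributions combine into the claimed inequality.
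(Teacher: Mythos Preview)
Your treatment of the $A_n$ and $S_n$ contributions is fine and matches the paper, but the handling of the residual term is a genuine gap.  After the pointwise split $\sqrt{\gamma_n a_n}\le\sqrt{\gamma_n A_n}+\sqrt{\gamma_n S_n}+\sqrt{\gamma_n(\Delta\eta_n)_+}$ (note the sign: since $-\Delta\xi_n=-\Delta\bar\xi_n+(\eta_n-\eta_{n-1})$, the residual is $(\eta_n-\eta_{n-1})_+$, not $(\eta_{n-1}-\eta_n)_+$), you are left with $\sum_{n}\sqrt{\gamma_n(\eta_n-\eta_{n-1})_+}$.  This is \emph{not} a weighted linear combination of the increments $\Delta\eta_n$: the square root and the positive part destroy linearity, so Abel summation on ``$\sum c_n(\eta_{n-1}-\eta_n)$'' simply does not apply to it.  Without linearity there is nothing preventing $(\eta_n)$ from oscillating between $0$ and $\bar\xi_n$, in which case the positive parts do not telescope and the residual sum is of order $\sum_n\sqrt{\gamma_n\bar\xi_n}$, which under the standing hypotheses is in general divergent.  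The claimed absorption ``into the other two pieces'' is not justified.

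The paper avoids this problem by reversing the order of operations.  Instead of using subadditivity of $\sqrt{\cdot}$ termwise, it first applies a \emph{global} weighted Cauchy--Schwarz,
\[
\sum_n\sqrt{\gamma_n a_n}\le\Bigl(\sum_n\sqrt{\gamma_n(-\Delta\bar\xi_n)}\Bigr)^{1/2}\Bigl(\sum_n\sqrt{\tfrac{\gamma_n}{-\Delta\bar\xi_n}}\,a_n\Bigr)^{1/2},
\]
so that $a_n$ (and hence $-\Delta\xi_n$) enters the second factor \emph{linearly}.  The monotonicity hypotheses make $c_n:=\sqrt{\gamma_n/(-\Delta\bar\xi_n)}$ increasing, and then a single Abel summation on $\sum c_n(-\Delta\xi_n)$, together with $\xi_n\le\bar\xi_n$ and the vanishing hypothesis $c_{n+1}\bar\xi_n\to 0$, replaces $-\Delta\xi_n$ by $-\Delta\bar\xi_n$ throughout.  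Since the first factor is dominated by the second, the product collapses to $\sum_n\sqrt{\gamma_n/(-\Delta\bar\xi_n)}(-\Delta\bar\xi_n+\gamma_n v_n^{2\beta}+7v_n\P(\IB_{n-1}\setminus\IB_n))$, after which the bound $-\Delta\bar\xi_n\ge(y_0+7)(-\Delta v_n)$ and the integral comparison for $\sqrt{\gamma_n A_n}$ give exactly the three terms of the statement.  In short: do the Cauchy--Schwarz on the sum before you decompose, so that the Abel step acts on a linear expression in $\xi_n$ rather than on a square root of its increments.
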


\begin{proof}
Using that $w_n\le v_n$ we get with~(\ref{eq985761}) that
\begin{align*}
	(1-\delta')C^{-1} \gamma_n \E[\1_{\IB_{n-1}} |\nabla f(X_{n-1})|^2]\le \xi_{n-1}-\xi_n+\gamma_n v_n^{2\beta}+8v_n \P(\IB_{n-1}\backslash \IB_n)
\end{align*}
and, hence,
\begin{align}\begin{split}
		\sqrt{(1-\delta')C^{-1}} \gamma_n  \E[\1_{\IB_{n-1}} |\nabla f(X_{n-1})|]&\le \sqrt{\gamma_n} \sqrt{(1-\delta')C^{-1}\gamma_n  \E[\1_{\IB_{n-1}} |\nabla f(X_{n-1})|^2])}\\
		&\le \sqrt{\gamma_n} \sqrt{-\Delta \xi_n+\gamma_n v_n^{2\beta}+8v_n \P(\IB_{n-1}\backslash \IB_n)}.\label{eq24785}
\end{split}\end{align}
With the Cauchy-Schwarz inequality we get that
\begin{align}\begin{split}\label{eq8563}
		&\sum_{n=N+1}^\infty\sqrt{\gamma_n} \sqrt{-\Delta \xi_n+\gamma_n v_n^{2\beta}+8v_n \P(\IB_{n-1}\backslash \IB_n)}\\
		\le \Bigl(\sum_{n=N+1}^\infty & \sqrt{\gamma_n (-\Delta \bar\xi_n)} \Bigr)^{1/2} \Bigl(\sum_{n=N+1}^\infty\sqrt{\frac {\gamma_n} {-\Delta\bar\xi_n}}(-\Delta \xi_{n}+\gamma_n v_n^{2\beta}+8v_n \P(\IB_{n-1}\backslash \IB_n))  \Bigr)^{1/2},
\end{split}	\end{align}
with $\bar \xi_n := \Phi^{(R)}_{t_n-t_N}+(y_0+8) v_n$. Recall that $\xi_n\le \bar \xi_n$  by Proposition~\ref{le:loja}.

We use partial summation to get an estimate for~(\ref{eq8563}). By assumption, the sequence $(-\Delta v_n/\gamma_n)_{n > N}$ is decreasing. Moreover, for $\phi_n:=\Phi_{t_n-t_N}^{(R)}$ we get with the mean value theorem that there exists $u_n\in (t_{n-1}-t_N,t_{n}-t_N)$ with $\frac{-\Delta \phi_n}{\gamma_n}=-\dot\Phi^{(R)}_{u_n}$ so that $(\frac{-\Delta \phi_n}{\gamma_n})_{n> N}$ is also decreasing.  Altogether, we thus get that  $(a_n)_{n > N}:=((\gamma_n/(-\Delta \bar\xi_n))^{1/2})_{n> N}$ is increasing so that its differences are non-negative.
We conclude with partial  summation  that
\begin{align*}
	\sum_{n=N+1}^\infty a_n (-\Delta \xi_n) &=a_{N+1}\xi_{N}+\sum_{n=N+1}^\infty \Delta a_{n+1}  \,\xi_{n}\\
	&\le  a_{N+1}\bar \xi_{N} +\sum_{n=N+1}^\infty \Delta a_{n+1}  \,\bar \xi_{n}=\sum_{n=N+1}^\infty a_n (-\Delta \bar \xi_n),
\end{align*}
where we used that $a_{n+1}\bar \xi_n$ tends to zero. 
Consequently, we get with~(\ref{eq24785}) and~(\ref{eq8563}) that
\begin{align*}
	\sqrt{(1-\delta')C^{-1}} &\sum_{n=N+1}^\infty \gamma_n \E[\1_{\IB_{n-1}} |\nabla f(X_{n-1})|] \\
	&\le  \sum_{n=N+1}^\infty\sqrt{\frac {\gamma_n} {-\Delta \bar\xi_n}}\bigl(-\Delta \bar \xi_n+\gamma_n v_n^{2\beta}+8v_n \P(\IB_{n-1}\backslash \IB_n)\bigr) .
\end{align*}
Note that 
\begin{align*}
	\sqrt{\frac{\Phi_{0}^{(R)}-\Phi_{t_{N+1}-t_N}^{(R)}}{\gamma_{N+1}}}\le \sqrt{-\dot\Phi_0^{(R)}}\le \sqrt \eta R^{\beta}
\end{align*}
and, 
for $n>N+1$,
\begin{align*}
	\gamma_n\sqrt{\frac{\Phi_{t_{n-1}-t_N}^{(R)}-\Phi_{t_n-t_N}^{(R)}}{\gamma_n}}\le \gamma_n \sqrt{-\dot\Phi^{(R)}_{t_{n-1}-t_N}}\le \sqrt{\eta} \int_{t_{n-2}-t_N}^{t_{n-1}-t_N} (\Phi^{(R)}_{u})^{\beta}\, du,
\end{align*}
where we used that $-\dot\Phi^{(R)}$ and $(\gamma_n)_{n > N}$ are monotonically decreasing. 
Hence,	using that
$-\Delta\bar \xi_n= \Phi_{t_{n-1}-t_N}^{(R)}-\Phi_{t_n-t_N}^{(R)}+(y_0+8) (-\Delta v_n)$  we get that
\begin{align*}
	\sum_{n=N+1}^\infty\gamma_n \sqrt{ \frac{-\Delta \bar\xi_n}{\gamma_n}} \le& \gamma_{N+1} \sqrt{\eta } R^\beta +\sqrt\eta \int_0^\infty (\Phi_u^{(R)})^\beta\, du\\
	&+ \sqrt{y_0+8}\sum_{n=N+1}^\infty\sqrt{\gamma_n(-\Delta v_n)}.
\end{align*}
Consequently,
\begin{align*}
	&\sqrt{(1-\delta')C^{-1}}\sum_{n=N+1}^\infty \gamma_n \E[\1_{\IB_{n-1}} |\nabla f(X_{n-1})|] \le \gamma_{N+1} \sqrt{\eta} R^\beta +\sqrt \eta \int _0^\infty (\Phi_u^{(R)})^{\beta}\, du \\
	&   +\frac1{\sqrt{y_0+8}}\sum_{n=N+1}^\infty\sqrt{\frac {\gamma_n} {-\Delta v_n}}\bigl(-(y_0+8) \Delta  v_n+\gamma_n v_n^{2\beta}+8v_n \P(\IB_{n-1}\backslash \IB_n)\bigr)
\end{align*}
\end{proof}

\subsection{Technical analysis of lower dropdowns}\label{sec33}

Roughly speaking, the following two lemmas will later be used to show that for a certain critical level of the objective function, a lower dropdown (in the sense of the previous two propositions) entails that the SGD's target value converges to a value strictly below the respective critical level with high probability.

\begin{lemma}\label{le:mart}
Let $(M_n)_{n \in \N}$ be a $L^2$-martingale started in zero. Then for every $\kappa>0$
\begin{align*}
	\P\Bigl(\sup_{\ell\in\N} (M_\ell-\langle M\rangle_\ell) \ge \kappa\Bigr)\le \frac 4{\kappa^2}+\sum_{n\in\N_0} \frac{2^{n+3}}{(2^n+\kappa)^2}=:\phi(\kappa).
\end{align*}
In particular, there exists for every $\eps>0$ a $\kappa>0$ such that the above right-hand side is smaller than $\eps$.
\end{lemma}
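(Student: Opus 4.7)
The plan is to reduce the claim to Doob's $L^2$-maximal inequality via a dyadic stopping-time decomposition driven by the size of the predictable quadratic variation. Exploiting predictability of $\langle M\rangle$ in the discrete-time filtration (so that $\langle M\rangle_{\ell+1}\in\cF_\ell$), I would introduce, for $n\in\N_0$, the stopping times
$$\tau_n:=\inf\{\ell\geq 0:\langle M\rangle_{\ell+1}>2^n\},$$
together with $\tau_{-1}:=0$. By construction $\langle M\rangle_{\tau_n}\leq 2^n$, hence the stopped martingale $M^{\tau_n}$ is bounded in $L^2$ with $\E[(M^{\tau_n}_\infty)^2]=\E[\langle M\rangle_{\tau_n}]\leq 2^n$ and converges almost surely and in $L^2$.

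Setting $\sigma:=\inf\{\ell\geq 1:M_\ell-\langle M\rangle_\ell\geq\kappa\}$, I would decompose the event of interest as
$$\{\sigma<\infty\}=\{\sigma\leq\tau_0\}\cup\bigcup_{n\geq 0}\{\tau_n<\sigma\leq\tau_{n+1}\}$$
and handle each piece separately. On $\{\sigma\leq\tau_0\}$, since $\langle M\rangle_\sigma\geq 0$, one has $M_\sigma\geq\kappa$, so Doob's submartingale inequality applied to $(M^{\tau_0}_\ell)^2$ bounds this piece by $1/\kappa^2$. On $\{\tau_n<\sigma\leq\tau_{n+1}\}$ the definition of $\tau_n$ forces $\langle M\rangle_\sigma\geq\langle M\rangle_{\tau_n+1}>2^n$, hence $M_\sigma\geq\kappa+2^n$, and Doob applied to $M^{\tau_{n+1}}$ then bounds this piece by $2^{n+1}/(\kappa+2^n)^2$. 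Summing yields a quantity not exceeding $\phi(\kappa)$, with ample slack relative to the stated constants $4$ and $8$.

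The second assertion will follow from $\phi(\kappa)\to 0$ as $\kappa\to\infty$: the first term $4/\kappa^2$ manifestly vanishes, while each summand $2^{n+3}/(2^n+\kappa)^2$ tends to $0$ pointwise in $n$ and is dominated by the summable sequence $8\cdot 2^{-n}$, so dominated convergence closes the argument. The only step requiring genuine care will be the optional-stopping identity $\E[(M^{\tau_n}_\infty)^2]=\E[\langle M\rangle_{\tau_n}]$ in the case that $\tau_n$ is possibly infinite; this I would deduce from $L^2$-boundedness of $M^{\tau_n}$ and the martingale convergence theorem, which together also legitimise the use of Doob's inequality on the (infinite-horizon) stopped processes above.
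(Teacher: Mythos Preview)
Your proof is correct and follows essentially the same approach as the paper: the stopping times $\tau_n$ are exactly the paper's $T_n$, and the dyadic decomposition followed by Doob's inequality on the stopped martingales $M^{\tau_{n+1}}$ is identical. The only cosmetic difference is that you organise the decomposition via the hitting time $\sigma$ rather than by directly bounding the supremum over each interval $(\tau_n,\tau_{n+1}]$, and you apply the submartingale maximal inequality to $(M^{\tau_n})^2$ instead of combining Doob's $L^2$-inequality with Markov, which is why you obtain constants $1$ and $2$ in place of the paper's $4$ and $8$.
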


\begin{proof}
For $n \in \N_0$, let $T_n=\inf\{\ell\in\N: \langle M\rangle _{\ell+1}>2^n\}$. Then,
\begin{align*}
	\sup_{\ell=T_n+1,\dots,T_{n+1}} (M_\ell-\langle M\rangle_\ell)\le  \sup_{\ell=1,\dots,T_n} M_\ell -2^n
\end{align*}
and
\begin{align*}
	\sup_{\ell=1,\dots,T_{0}} (M_\ell-\langle M\rangle_\ell)\le \sup_{\ell=1,\dots,T_{0}} M_\ell.
\end{align*}
We use Doob's $L^2$-inequality to deduce that
\begin{align*}
	\P\Bigl({ \sup_{\ell=1,\dots,T_{n+1}} M_\ell \ge 2^{n}+\kappa }\Bigr)\le 4  (2^{n}+\kappa)^{-2} \E[M_{T_{n+1}}^2] \le    \frac{2^{n+3}}{(2^{n}+\kappa)^{2}}
\end{align*}
and
$
\P(\sup_{\ell=1,\dots,T_{0}} M_\ell\ge \kappa)\le \frac 4{\kappa^2}.
$
Therefore,
\begin{align*}
	\P\Bigl(\sup_{\ell\in\N} (M_\ell-\langle M\rangle_\ell) \ge \kappa\Bigr)\le \frac 4{\kappa^2}+\sum_{n\in\N_0} \frac{2^{n+3}}{(2^n+\kappa)^2}.
\end{align*}
\end{proof}

\begin{lemma}\label{le:47236} Let $N\in\N$, $ \delta, C>0$, $(\sigma_n)_{n>N}$ be a sequence of positive reals and let $(\IA_n)_{n\ge N}$ be a sequence of $(U,\delta,C, (\sigma_n),2)$-compatible events. Moreover, suppose that $(\gamma_n\sigma_n^2)_{n>N}$ is decreasing and that for $\delta'\in(0,1)$, $2 C^3\|\nabla f\|_{\mathrm{Lip}(U^{2\delta})}\gamma_n\le \delta'$.  Then one has for $T>0$ that \begin{align*}
	\P\Bigl(\sup_{n> N}& \1_{\IA_{n-1}\cap\{\gamma_n|D_n|<\delta\}} (f(X_n)-f(X_N)) \ge 2T\Bigr) \\
	&\le \phi\Bigl(\frac{1-\delta'}{C \gamma_{N+1}\sigma_{N+1}^2}T\Bigr)+2 \|\nabla f\|_{\mathrm{Lip}(U^{2\delta})}  \sum_{\ell=N+1}^\infty (\gamma_\ell \sigma_\ell)^2\, \frac1T,
\end{align*}
where $\phi$ is as in Lemma~\ref{le:mart}. 
\end{lemma}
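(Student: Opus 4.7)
The plan is to unfold $F(X_n)-F(X_N)$ as a telescoping sum and apply a one-step Taylor expansion identical to the one used in the proof of Proposition~\ref{le:loja}. Because $(\IA_\ell)$ is decreasing and the excess condition gives $\IA_\ell\subset\{\gamma_\ell|D_\ell|\le\delta\}$, on the event $\IA_{n-1}\cap\{\gamma_n|D_n|<\delta\}$ every intermediate segment $[X_{\ell-1},X_\ell]$ for $\ell=N+1,\dots,n$ lies in $U^{2\delta}$. Combining the Lipschitz estimate on the Taylor remainder with the hypothesis $2\|f\|_{\mathrm{Lip}(U^{2\delta})}\gamma_\ell\le\delta'$ yields the one-step bound
\[
F(X_\ell)-F(X_{\ell-1}) \le -(1-\delta')\gamma_\ell|f(X_{\ell-1})|^2 - \gamma_\ell\langle f(X_{\ell-1}),D_\ell\rangle + 2\|f\|_{\mathrm{Lip}(U^{2\delta})}\gamma_\ell^2|D_\ell|^2.
\]

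Introduce the drift, martingale and remainder aggregates $A_n=\sum_{\ell=N+1}^n\1_{\IA_{\ell-1}}\gamma_\ell|f(X_{\ell-1})|^2$, $M_n=-\sum_{\ell=N+1}^n\1_{\IA_{\ell-1}}\gamma_\ell\langle f(X_{\ell-1}),D_\ell\rangle$, and $R_n=\sum_{\ell=N+1}^n\1_{\IA_{\ell-1}}\gamma_\ell^2|D_\ell|^2$. Summing the one-step bound, the event $\{\sup_{n>N}\1_{\IA_{n-1}\cap\{\gamma_n|D_n|<\delta\}}(F(X_n)-F(X_N))\ge 2T\}$ is contained in the union
\[
\bigl\{2\|f\|_{\mathrm{Lip}(U^{2\delta})}R_\infty\ge T\bigr\}\cup\bigl\{\sup_{n>N}(M_n-(1-\delta')A_n)\ge T\bigr\}.
\]
The first event is dispatched by Markov: the moment condition (iv) gives $\E R_\infty\le\sum_{\ell=N+1}^\infty(\gamma_\ell\sigma_\ell)^2$, producing the second summand in the claim.

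For the second event I will invoke Lemma~\ref{le:mart} in a Freedman-type fashion. The martingale condition (iii), together with the $\cF_{\ell-1}$-measurability of $\1_{\IA_{\ell-1}}$, makes $(M_n)$ an $L^2$-martingale started at zero. Computing angle brackets step by step and using (iv) with $q=2$ followed by the monotonicity of $(\gamma_\ell\sigma_\ell^2)$ yields
\[
\langle M\rangle_n\le\sum_{\ell=N+1}^n\gamma_\ell^2\sigma_\ell^2\,|f(X_{\ell-1})|^2\1_{\IA_{\ell-1}}\le\gamma_{N+1}\sigma_{N+1}^2\,A_n.
\]
Hence $(1-\delta')A_n\ge c\langle M\rangle_n$ with $c=(1-\delta')/(\gamma_{N+1}\sigma_{N+1}^2)$, and rescaling $\tilde M_n:=cM_n$ (so that $\langle\tilde M\rangle_n=c^2\langle M\rangle_n$) places the second event inside $\{\sup_{n>N}(\tilde M_n-\langle\tilde M\rangle_n)\ge cT\}$. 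Lemma~\ref{le:mart} then delivers the bound $\phi(cT)=\phi((1-\delta')T/(\gamma_{N+1}\sigma_{N+1}^2))$, which is precisely the first summand.

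The main obstacle is the Freedman-type coupling of the cross-term martingale with the drift: the argument depends on arranging that $\langle M\rangle_n$ is dominated by a small multiple of $A_n$, which is exactly why the monotonicity hypothesis on $(\gamma_\ell\sigma_\ell^2)$ is imposed. Everything else reduces to careful bookkeeping with the indicators $\1_{\IA_{\ell-1}}$ and with the constants produced by the Taylor and Lipschitz estimates.
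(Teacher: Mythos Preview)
Your proposal is correct and matches the paper's proof essentially line by line: the same one-step Taylor bound on $\IA_{\ell-1}\cap\{\gamma_\ell|D_\ell|<\delta\}$, the same split into a drift/martingale part and a quadratic remainder handled by Markov, and the same rescaling $M\mapsto cM$ (the paper writes it as $a^{-1}M$ with $a=c^{-1}$) to feed $\sup_n(M_n-(1-\delta')A_n)$ into Lemma~\ref{le:mart} via the bound $\langle M\rangle_n\le\gamma_{N+1}\sigma_{N+1}^2\,A_n$ obtained from the monotonicity of $(\gamma_\ell\sigma_\ell^2)$. Your identification of that monotonicity as the key device enabling the Freedman-type comparison is exactly the point of the argument.
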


\begin{proof}
By (\ref{eq:92736453}) and (\ref{eq:445344}) we have on $\IA_{n-1}\cap\{\gamma_n |D_n|\le \delta\}$
\begin{align*}
	f(X_n)\le &f(X_{n-1})- \gamma_n((1-\delta')C^{-1} |\nabla f(X_{n-1})|^2+\langle \nabla f(X_{n-1}),D_n\rangle)\\
	&+2 \|\nabla f\|_{\mathrm{Lip}(U^{2\delta})}\gamma_n^2 |D_n|^2.
\end{align*}
For $n \ge N$, we let 
\begin{align*}
	\Xi_n=-\sum_{\ell=N+1}^n \1_{\IA_{\ell-1}} \gamma_\ell \big((1-\delta')C^{-1}|f(X_{\ell-1})|^2+\langle f(X_{\ell-1}),D_\ell\rangle \big)
\end{align*}
and 
$
\Xi_n'=2\|\nabla f\|_{\mathrm{Lip}(U^{2\delta})} \sum_{\ell=N+1}^n \1_{\IA_{\ell-1}}\gamma_\ell^2 |D_\ell|^2
$
and observe that on $\IA_{n-1}\cap \{\gamma_n|D_n|\le \delta\}$
\begin{align*}
	f(X_n)-f(X_N)\le \Xi_n+\Xi_n'.
\end{align*}
Next, we deduce an estimate for the supremum of the process $(\Xi_n)_{n > N}$.
In terms of the martingale $M_n=-\sum_{\ell=N+1}^n \1_{\IA_{\ell-1}} \gamma_\ell \langle \nabla f(X_{\ell-1}), D_\ell\rangle$ we have
\begin{align*}
	\langle M\rangle_n=\sum_{\ell=N+1}^n \gamma_\ell^2  \1_{\IA_{\ell-1}} \E[\langle \nabla f(X_{\ell-1}), D_\ell\rangle^2|\cF_{\ell-1}] \le \sum_{\ell=N+1}^n \gamma_\ell^2  \1_{\IA_{\ell-1}} |\nabla f(X_{\ell-1})|^2 \sigma_\ell^2
\end{align*}
Using that $(\gamma_n \sigma_n^2)_{n > N}$ is monotonically decreasing we deduce that
\begin{align*}
	\langle M\rangle _n\le \gamma_{N+1}\sigma_{N+1}^2 \sum_{\ell=N+1}^n \1_{\IA_{\ell-1}}\gamma_\ell|\nabla f(X_{\ell-1})|^2.
\end{align*}
Consequently,
\begin{align*}
	\Xi_n \le M_n- \frac {1-\delta'}{C \gamma_{N+1} \sigma_{N+1}^2}\langle M\rangle _n=a\Bigl( \frac 1{a}  M_n-\langle \frac 1{a} M\rangle _n\Bigr),
\end{align*}
for $a:= \frac {C \gamma_{N+1} \sigma_{N+1}^2}{1-\delta'}$. 
With Lemma~\ref{le:mart} we get that
\begin{align}\label{eq98664}
	\P\Bigl(\sup_{n> N} \Xi_n\ge T\Bigr) \le \P\Bigl(\sup_{n > N}  \frac 1{a}  M_n-\langle \frac 1{a} M\rangle _n\ge \frac Ta\Bigr)\le \phi\Bigl(\frac {T}a\Bigr).
\end{align}
Conversely,
\begin{align*}
	\E\Bigl[\sup_{n>N} \Xi_n'\Bigr]\le 2 \|\nabla f\|_{\mathrm{Lip}(U^{2\delta})} \E\Bigl[ \sum_{\ell=N+1}^\infty \1_{\IA_{\ell-1}}\gamma_\ell^2 |D_\ell|^2\Bigr]\le 2 \|\nabla f\|_{\mathrm{Lip}(U^{2\delta})}  \sum_{\ell=N+1}^\infty (\gamma_\ell \sigma_\ell)^2 .
\end{align*}
and by the Markov inequality
\begin{align*}
	\P\Bigl(\sup_{n> N} \Xi_n'\ge T\Bigr)\le 2 \|\nabla f\|_{\mathrm{Lip}(U^{2\delta})}  \sum_{\ell=N+1}^\infty (\gamma_\ell \sigma_\ell)^2\,\frac1T.
\end{align*}
In combination with~(\ref{eq98664}) we obtain the result.
\end{proof}

\subsection{Satisfiability of the assumptions}\label{sec34}

\begin{prop} \label{prop:assu}
Let  $\gamma \in (\frac 12, 1]$, $\sigma\in\R$ and $q \ge 2$ with
\begin{align}\label{eq357}
	3\gamma-2\sigma>2  , \qquad  q > \frac{1}{2\gamma-\sigma-1}
\end{align}
and let, for positive constants $C_\gamma$ and $C_\sigma$,	$(\gamma_n)_{n \in \N} =(C_\gamma n^{-\gamma})_{n \in \N}$  and $(\sigma_n)_{n \in \N}=(C_\sigma n^{\sigma})_{n \in \N}$.
Suppose that $f$ satisfies a \loja-inequality on a bounded set $U$. Then there exists $\beta\in(\frac 12,1)$ such that for $(v_n)_{n \in \N}=(C_v t_n^{-\frac 1{2\beta-1}})_{n \in \N}$ and $(w_n)_{n \in \N}=(C_wt_n^{-\frac 1{2\beta-1}})_{n \in \N}$ with   $C_w, C_v>0$ sufficiently large, one has for sufficiently large $N$ and all compatible events $(\IB_n)_{n\ge N}$ in the sense of Prop.~\ref{le:loja} that
\begin{align*}
	\sum_{n=N+1}^\infty \gamma_n \,\E[\1_{\IB_{n-1}}|\Gamma_{n}|]<\infty.
\end{align*}
Moreover, as $N\to\infty$, $w_N/(\gamma_{N+1}\sigma_{N+1}^2)\to\infty$ and $\sum_{\ell=N+1}^\infty(\gamma_\ell\sigma_\ell)^2 w_N^{-1}\to0$.
\end{prop}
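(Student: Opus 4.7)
The strategy is to obtain the first assertion as a direct consequence of Proposition~\ref{prop83256}, whose hypotheses will be supplied by Proposition~\ref{prop583} once the \loja-exponent $\beta$ is chosen correctly. My plan is to first fix $\beta$, then feed the power-law choices into Proposition~\ref{prop583} to realize the main and additional assumptions of Proposition~\ref{le:loja}, then verify the extra monotonicity and vanishing conditions of Proposition~\ref{prop83256}, and finally read off summability and the two auxiliary limits from elementary exponent bookkeeping.

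For the choice of $\beta$: in the case $\gamma\in(1/2,1)$ the sufficient conditions $(a'),(b'),(c')$ of Remark~\ref{rem:253}(i) reduce at $\beta=1$ to $2\sigma\le 3\gamma-2$, $\sigma\le 2\gamma-1-1/q$ and $\gamma\ge 1/2$, all of which are implied \emph{strictly} by the hypotheses $3\gamma-2\sigma>2$, $q(2\gamma-\sigma-1)>1$ and $\gamma>1/2$. By continuity in $\beta$ one may take $\beta\in(1/2,1)$ close enough to $1$ that $(a'),(b'),(c')$ continue to hold; for $\gamma=1$ one uses Remark~\ref{rem:253}(ii) and observes that $\sigma<1/2$ follows from $3-2\sigma>2$ while $q\ge 2$ is given. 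Proposition~\ref{prop583} then yields $\kappa$ and $C_v$, and for $C_w$ sufficiently large the main and additional assumptions of Proposition~\ref{le:loja} are satisfied with $v_n=C_v t_n^{-1/(2\beta-1)}$, $w_n=C_w t_n^{-1/(2\beta-1)}$ for all large $N$. Since $F$ is continuous on the bounded set $U$, an $R$ independent of the compatible family dominates $\E[\1_{\IB_N}(F(X_N)-F(x_0))]$, and the smallness constraint $R+(y_0+7)v_N\le ((1-\delta')2\beta\CL^2\gamma_{N+1})^{-1/(2\beta-1)}$ is met once $N$ is large because its right-hand side diverges.

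Next I would verify the extra hypotheses of Proposition~\ref{prop83256}. Monotonicity of $(\gamma_n)$ is immediate, and a first order Taylor expansion gives $-\Delta v_n\sim \frac{C_v}{2\beta-1}\gamma_n t_n^{-2\beta/(2\beta-1)}$, making $(-\Delta v_n/\gamma_n)$ decreasing. Using $\Phi_s^{(R)}\sim c s^{-1/(2\beta-1)}$ as $s\to\infty$, one finds $\bar\xi_n\sim c' t_n^{-1/(2\beta-1)}$ and $-\Delta\bar\xi_n\sim c''\gamma_n t_n^{-2\beta/(2\beta-1)}$, so $(\gamma_n/(-\Delta\bar\xi_n))^{1/2}\bar\xi_n\sim t_n^{(\beta-1)/(2\beta-1)}\to 0$ precisely because $\beta<1$. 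Proposition~\ref{prop83256} therefore applies and its three summands can be controlled in turn: $\int_0^\infty (\Phi_u^{(R)})^\beta\,du$ converges because $\beta/(2\beta-1)>1$; the two sums $\sum_n\sqrt{\gamma_n(-\Delta v_n)}$ and $\sum_n\sqrt{\gamma_n/(-\Delta v_n)}\,\gamma_n v_n^{2\beta}$ both reduce via the above asymptotics to $\sum_n\gamma_n t_n^{-\beta/(2\beta-1)}$, Abel-comparable with the same convergent integral; and the dropout term has prefactor $v_n\sqrt{\gamma_n/(-\Delta v_n)}\sim t_n^{(\beta-1)/(2\beta-1)}$, which is bounded, while $\sum_n \P(\IB_{n-1}\setminus\IB_n)\le 1$ by telescoping of the decreasing family $(\IB_n)$. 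All three bounds are uniform in the compatible family, yielding $\sum_n\gamma_n\E[\1_{\IB_{n-1}}|f(X_{n-1})|]<\infty$.

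For the two auxiliary limits, substituting the power laws (using $t_N\sim \frac{C_\gamma}{1-\gamma}N^{1-\gamma}$ when $\gamma<1$) gives
\[
\frac{w_N}{\gamma_{N+1}\sigma_{N+1}^2}\asymp N^{\gamma-2\sigma-(1-\gamma)/(2\beta-1)},\qquad w_N^{-1}\sum_{\ell>N}(\gamma_\ell\sigma_\ell)^2\asymp N^{1-2\gamma+2\sigma+(1-\gamma)/(2\beta-1)}.
\]
At $\beta=1$ the exponents collapse to $2\gamma-2\sigma-1$ and $-(3\gamma-2\sigma-2)$; the first is strictly positive because $3\gamma-2\sigma>2$ and $\gamma<1$ give $2\gamma-2\sigma>2-\gamma>1$, and the second is strictly negative by hypothesis. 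By continuity both signs persist for the $\beta$ chosen above. The case $\gamma=1$ produces $N^{1-2\sigma}$ modulated by logarithmic factors and is closed by $\sigma<1/2$. The main technical obstacle I anticipate is the simultaneous balancing of exponents in the choice of $\beta$: $\beta$ must be close enough to $1$ to validate Remark~\ref{rem:253} and the two exponent inequalities above, yet strictly less than $1$ so that $\beta/(2\beta-1)>1$ supplies the integrability driving Proposition~\ref{prop83256}; the strict inequalities $3\gamma-2\sigma>2$ and $q(2\gamma-\sigma-1)>1$ provide exactly the slack required.
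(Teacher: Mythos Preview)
Your proposal is correct and follows essentially the same route as the paper: choose $\beta$ close to $1$ so that the conditions of Remark~\ref{rem:253} hold, invoke Proposition~\ref{prop583} to feed Proposition~\ref{le:loja}, verify the extra hypotheses of Proposition~\ref{prop83256}, and then do exponent bookkeeping for the two auxiliary limits. One small gap: writing ``a first order Taylor expansion gives $-\Delta v_n\sim \frac{C_v}{2\beta-1}\gamma_n t_n^{-2\beta/(2\beta-1)}$, making $(-\Delta v_n/\gamma_n)$ decreasing'' is not a valid inference as stated, since asymptotic equivalence to a decreasing sequence does not force eventual monotonicity. The paper closes this by a second-order expansion showing $-\Delta v_n\gamma_{n+1}+\Delta v_{n+1}\gamma_n>0$ for large $n$; alternatively, the mean value theorem gives $-\Delta v_n/\gamma_n=\frac{C_v}{2\beta-1}\tau_n^{-2\beta/(2\beta-1)}$ for some $\tau_n\in(t_{n-1},t_n)$, and since $\tau_{n+1}>t_n>\tau_n$ the monotonicity follows directly---this is in fact cleaner than the paper's argument.
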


\begin{proof}
First note that since $U$ is bounded we can conclude that validity of a \loja-inequality for a $\beta_0\in(\frac12,1)$ entails a \L ojasiewicz-inequality for every $\beta\in[\beta_0,1)$.
By Proposition~\ref{prop583} and Remark~\ref{rem:253}, one can choose for every large $C_w$, appropriate constants  $C_v$ and $\kappa$ such that all assumptions of Prop.~\ref{le:loja} are satisfied, if 
\[ 	\mathrm{(a') \ } \  2\sigma\le \frac{(4\beta-1)\gamma-2\beta}{2\beta-1},   \ \mathrm{(b') \ }  \ 		\sigma  \le \frac{2\beta  \gamma -1}{2\beta-1}-\frac 1q \text{  \  and \  } \ \mathrm{(c') \ } \  \gamma \ge \frac{1}{2\beta}\]
By sending $\beta$ to one, one easily verifies by using~(\ref{eq357}) and $\gamma>\frac12$, that (a'), (b') and (c') are satisfied for a $\beta\in[\beta_0,1)$. In view of Prop.~\ref{prop83256}, it suffices to verify the additional condition and finiteness of the series appearing in the latter proposition.

We still need to consider the  additional conditions imposed in Proposition~\ref{prop83256}. First we show that the sequence $(-\Delta v_n/\gamma_n)_{n \ge 2}$ is eventually decreasing.
One has 
\begin{align*}
	\frac {-\Delta v_n}{\gamma_n}-\frac {-\Delta v_{n+1}}{\gamma_{n+1}} =\frac {-\Delta v_n \gamma_{n+1}+\Delta v_{n+1}\gamma_n}{\gamma_n\gamma_{n+1}},
\end{align*}
\begin{align*}
	-\Delta v_n= \frac{C_v}{2\beta-1} t_n^{-\frac{2\beta}{2\beta-1}} \gamma_n + \frac{C_v \beta}{(2\beta-1)^2} t_n^{-\frac{4\beta-1}{2\beta-1}} \gamma_n^2+ o(t_n^{-\frac{4\beta-1}{2\beta-1}}\gamma_n^2)
\end{align*}
and
\begin{align*}
	-\Delta v_{n+1}= \frac{C_v}{2\beta-1} t_n^{-\frac{2\beta}{2\beta-1}} \gamma_{n+1} -  \frac{C_v \beta}{(2\beta-1)^2} t_n^{-\frac{4\beta-1}{2\beta-1}} \gamma_{n+1}^2+ o(t_n^{-\frac{4\beta-1}{2\beta-1}}\gamma_n^2),
\end{align*}
so that 
\begin{align*}
	-\Delta v_n \gamma_{n+1} + \Delta v_{n+1}\gamma_n= \frac{C_v \beta}{(2\beta-1)^2} t_n^{-\frac{4\beta-1}{2\beta-1}}\gamma_n \gamma_{n+1}(\gamma_n+\gamma_{n+1})+o(t_n^{-\frac{4\beta-1}{2\beta-1}}\gamma_n^3)
\end{align*}
which is positive for large $n$.

Further, using (\ref{eq:9909}) we get $(\gamma_n/(-\Delta v_n))^{1/2}v_n = \cO ( (v_n^{-1} t_n)^{1/2} v_n ) = \cO ( v_n^{1-\beta})$ and with $v_n \sim C_v\bigl( (2\beta-1)\eta \bigr)^{1/(2\beta-1)} \Phi_{t_n-t_N}^{(R)} $ we have 
$
(\gamma_n/(-\Delta\bar \xi_n))^{1/2}\bar \xi_n \to0.
$

Next, we show that the  series appearing in Proposition~\ref{prop83256} is finite. Indeed, we have $\sqrt{\gamma_n(-\Delta v_n)}\sim  \sqrt{\frac{C_v}{2\beta-1}}  \gamma_n t_n^{-\frac{\beta}{2\beta-1}}$ and 
\begin{align*}
	\sqrt{\gamma_n/(-\Delta v_n)} \gamma_n v_n^{2\beta}\sim C_v^{2\beta-1/2}\sqrt{2\beta-1} \gamma_n t_n^{-\frac{\beta}{2\beta-1}},
\end{align*}
so that for $\gamma<1$  summability follows with $\beta/(2\beta-1)>1$ for all $\beta \in (1/2,1)$ and in the case $\gamma=1$ summability follows from Cauchy's condensation test.
Moreover, as stated before $(\gamma_n/(-\Delta v_n))^{1/2} v_n= \cO( v_n^{1-\beta})$ tends to zero.

It remains to discuss the asymptotic statements in the proposition. The term $w_N/(\gamma_{N+1}\sigma_{N+1}^2)\sim \frac{C_w}{C_\gamma C_\sigma^2} (t_n)^{-\frac{1}{2\beta-1}} n^{\gamma-2\sigma} $ tends to infinity iff $\beta>\frac{1-2\sigma}{2\gamma-4\sigma}$. As consequence of~(\ref{eq357}) the right-hand side is smaller than $3/4$ and for sufficiently large $\beta$, $w_N/(\gamma_{N+1}\sigma_{N+1}^2)$ tends to infinity. Moreover, $\sum_{\ell=N+1}^\infty (\gamma_\ell\sigma_\ell)^2= \cO( N^{-2\gamma+2\sigma+1})$ is of order $o(w_n)$ iff $\beta>\frac{\gamma-2\sigma}{4\gamma-4\sigma-2}$ with the right hand side being strictly smaller than one.
\end{proof}

\subsection{Properties of \loja-functions}\label{sec35}
We summarise some properties of \L ojasiewicz-functions. Note that the proofs only use that $f$ is $C^1$ but not that the differentials are locally Lipschitz continuous. A similar statement for a uniform \loja-inequality around a compact level set can be found in~\cite[Lemma 1]{attouch2009convergence}.

\begin{lemma}\label{prop:loja1}Let $f:\R^d\to\R$ be a \loja-function and $K\subset \R^d$ be an arbitrary compact set. 
\begin{enumerate}
	\item[(i)] The set of \emph{critical levels}
	\begin{align*}
		\cL_K=\{f(x):x\in \nabla f^{-1}(\{0\})\cap K\}
	\end{align*}
	is finite so that $f$ has at most a countable number of critical levels.
	\item[(ii)] For every critical level $\ell\in \cL_K$ there exists an open neighbourhood $U\supset  f^{-1}(\{\ell\})\cap K$, $\CL>0$, $\beta\in[\frac 12 ,1)$ such that for every $y\in U$
	\begin{align*}
		|\nabla f(y)|\ge \CL |f(y)-\ell|^\beta.
	\end{align*}
	\item[(iii)] For a neighbourhood as in (ii), there exists $\eps>0$ such that
	\begin{align*}
		f^{-1}((\ell-\eps,\ell+\eps))\cap K\subset U.
	\end{align*}
\end{enumerate}
\end{lemma}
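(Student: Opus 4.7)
The plan is to treat the three parts in turn, using compactness to turn the local \loja-inequalities supplied by Definition~\ref{def:loja} into the required global statements. For (1), I fix $x \in \cC_F \cap K$ and invoke the local \loja-inequality on some neighbourhood $U_x$ with parameters $\CL_x, \beta_x$. The crucial observation is that any other critical point $y \in U_x$ satisfies $0 = |f(y)| \ge \CL_x |F(y) - F(x)|^{\beta_x}$, which forces $F(y) = F(x)$; hence every critical point in $U_x$ shares the single critical value $F(x)$. Since $\cC_F \cap K$ is closed in the compact set $K$ it is itself compact, so finitely many of the $U_x$ already cover it and the associated values satisfy $\cL_K \subseteq \{F(x_1),\dots,F(x_k)\}$. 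The global countability statement then follows by exhausting $\R^d$ with an increasing sequence of compacta and taking the countable union of the corresponding (finite) level sets.

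For (2), fix a critical level $\ell \in \cL_K$ and consider the compact set $A := F^{-1}(\{\ell\}) \cap K$. To each $y \in A$ I would associate an open neighbourhood $V_y$ on which a \loja-type inequality with reference value $\ell$ holds: if $y \in \cC_F$ this is immediate from the definition since $F(y) = \ell$; if $y \notin \cC_F$, the remark following Definition~\ref{def:loja} combined with continuity of $|f|$ provides such a neighbourhood trivially (with any prescribed $\beta \in [\tfrac{1}{2}, 1)$ and $\CL := |f(y)|/2$). Extracting a finite subcover $V_{y_1}, \dots, V_{y_m}$ of $A$, I would take $U := V_{y_1} \cup \dots \cup V_{y_m}$. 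To obtain uniform constants I would first shrink each $V_{y_i}$ so that $|F - \ell| \le 1$ on all of $U$, then set $\beta := \max_i \beta_{y_i} \in [\tfrac{1}{2}, 1)$ and $\CL := \min_i \CL_{y_i} > 0$. For $z \in V_{y_i}$ the bound $|F(z) - \ell|^{\beta_{y_i}} \ge |F(z) - \ell|^\beta$ then holds because $|F(z) - \ell| \le 1$ and $\beta_{y_i} \le \beta$, which delivers the required uniform \loja-inequality throughout $U$.

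Part (3) is essentially a one-liner: $K \setminus U$ is compact and, by construction, disjoint from $F^{-1}(\{\ell\})$, so the continuous function $y \mapsto |F(y) - \ell|$ attains a strictly positive minimum $\eps > 0$ on $K \setminus U$ (take $\eps := 1$, say, in the degenerate case $K \subset U$). Any $y \in K$ with $F(y) \in (\ell - \eps, \ell + \eps)$ must therefore lie in $U$. The main bookkeeping hurdle is part (2), where different cover elements deliver different exponents $\beta_{y_i}$ and constants $\CL_{y_i}$; the trick of first shrinking neighbourhoods to ensure $|F - \ell| \le 1$ and only then taking $\beta$ as the maximum exponent (so that $t^{\beta_{y_i}} \ge t^{\beta}$ for $t \in [0,1]$) is what fuses them into a single admissible pair of parameters.
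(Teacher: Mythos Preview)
Your proof is correct and follows essentially the same strategy as the paper for all three parts: compactness of $\cC_F\cap K$ and the observation that critical points in a \loja-neighbourhood share the same $F$-value for (1), a finite cover of $F^{-1}(\{\ell\})\cap K$ by \loja-neighbourhoods with exponent-matching via the maximum $\beta$ for (2), and a compactness argument for (3). The only cosmetic difference is in (3), where the paper argues by contradiction using a convergent subsequence in $K\cap U^c$, while you take the equivalent direct route via the positive minimum of $|F-\ell|$ on the compact set $K\setminus U$.
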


\begin{proof}(i): 	Let $K\subset U$ be a compact set. For every $x\in \nabla f^{-1}(\{0\})$ we can choose an open neighbourhood $U_x$ of $x$ on which the \loja-inequality holds for parameters $\CL^{(x)}>0$ and $\beta^{(x)}\in[\frac 12,1)$. By compactness of $\nabla f^{-1}(\{0\})\cap K$ we conclude that
$
\nabla f^{-1}(\{0\})\cap K \subset \bigcup_{x\in \nabla f^{-1}(\{0\})\cap K} U_x
$
has a finite cover, say $\bigcup_{x\in \cX} U_x$. We show that
$\cL_K=F(\cX)$.
Indeed, every $y\in \nabla f^{-1}(\{0\})\cap K$ lies in a neighbourhood $U_x$ with $x\in\cX$ and one has
\begin{align*}
	0=|\nabla f(y)|\ge \CL^{(x)} |f(y)-f(x)|^{\beta^{(x)}} 
\end{align*}
so that $f(y)=f(x)\in f(\cX)$.\smallskip

(ii): Let $x\in K\cap f^{-1}(\{\ell\})$. If $x$ lies in $\nabla f^{-1}(\{0\})$, then $x$ admits an open and bounded neighbourhood $U_x$ on which the \loja-inequality holds with appropriate parameters $\CL^{(x)}>0$ and $\beta^{(x)}\in[\frac 12,1)$. If $x$ does not lie in $\nabla f^{-1}(\{0\})$, then $|\nabla f(x)|>0$ and by continuity we can  choose a neighbourhood $U_x$ so that for every $y\in U_x$ , $|\nabla f(y)|>|\nabla f(x)|/2$ and $|f(y)-f(x)|^{1/2}<|\nabla f(x)|/2$. Thus the \loja-inequality holds on $U_x$ with parameters $1$ and $1/2$.
By compactness of $K\cap f^{-1}(\{\ell\})$ there exists a finite set $\cX\subset K\cap f^{-1}(\{\ell\})$ with
$
f^{-1}(\{\ell\}) \cap K \subset \bigcup_{x\in \cX} U_x.
$
We let $\beta:=\max_{x\in \cX} \beta^{(x)}$ and
assuming that on $U:=\bigcup_{x\in \cX} U_x$, $|f-\ell|$ is bounded by $R$ we conclude that for every $y\in U$ there exists $x\in \cX$ with $y\in U_x$ so that
\begin{align*}
	|\nabla f(y)|\ge \CL^{(x)} |f(y)-\ell|^{\beta^{(x)}}\ge \frac {\CL^{(x)}}{R^{\beta-\beta^{(x)}}} |f(y)-\ell|^{\beta}
\end{align*}
and the \loja-inequality holds on $U$ with parameters $\min_{x\in\cX}{\CL^{(x)}}/{R^{\beta-\beta^{(x)}}}$ and  $\beta$.
\smallskip

(iii): Suppose that for every $\eps>0$, $K\cap f^{-1}((\ell-\eps,\ell+\eps))\not \subset U$. Then we can pick a $K\cap U^c$-valued sequence $(x_n)_{n \in \N}$ with
$
|f(x_n)-\ell|\le \frac 1n
$.
Since $K$ is compact we can assume without loss of generality that $(x_n)_{n \in \N}$ converges (otherwise we choose an appropriate subsequence). Then by continuity of $f$, $f(x_n)\to f(x)=\ell$ and $x\in K\cap f^{-1}(\{\ell\})$. But this entails that all but finitely many of the entries of $(x_n)_{n \in \N}$ have to lie in $U$ since $U$ is an open neighbourhood of $ f^{-1}(\{\ell\})\cap K$ causing a contradiction.
\end{proof}

\subsection{Proof of Theorem~\ref{theo2}}\label{sec36}
Let $K$ be a compact set  and $\Upsilon\in \cL_K$ a critical level of $f$ on $K$. 
By Lemma~\ref{prop:loja1} we can choose an open and bounded set $U\supset f^{-1}(\{\Upsilon\})\cap K$ and parameters $\CL_0>0$ and $\beta_0\in(\frac 12,1)$ such that $f$ admits the \loja-inequality on $U$, i.e. that for all $y\in U$
\begin{align*}
|\nabla f(y)|\ge \CL_0 |f(y)-\Upsilon|^{\beta_0}.
\end{align*}
Again by Lemma~\ref{prop:loja1} we can pick $\eps>0$ with
$
U\supset  f^{-1}((\Upsilon-\eps,\Upsilon+\eps)).
$
Based on parameters  $N\ge N_0$ and $C>0$ we consider events $(\IA_n)_{n\ge N}$ given by 
\begin{align*}
\IA_N=\bigl\{X_N\in U, |\nabla f(X_N)|^2 \le &C \langle \nabla f(X_N), -\Gamma_{N+1} \rangle  , \,   |\Gamma_{N+1}|\le C |\nabla f(X_N)|, \\
&\E[|D_{N+1}|^q|\cF_{N}]\le C\sigma_n^q\text{ and } \E[D_{N+1}|\cF_{N}]=0\bigr\}
\end{align*}
and, for $n>N$,
\begin{align*}
\IA_n=\IA_{n-1}\cap\bigl\{X_n & \in U, |\nabla f(X_n)|^2 \le C \langle \nabla f(X_n), -\Gamma_{n+1} \rangle  , \,   |\Gamma_{n+1}|\le C |\nabla f(X_n)| \\
& \gamma_n|D_n|< \delta,\E[|D_{n+1}|^q|\cF_{n}]\le C\sigma_n^q\text{ and } \E[D_{n+1}|\cF_{n}]=0\bigr\}.
\end{align*}
Provided that $N$ is sufficiently large such that, on $\IA_n$, $\gamma_{n+1} |\Gamma_{n+1}| \le \gamma_{n+1} C \|\nabla f\|_{L^\infty(U)} \\ \le \delta$ for all $n\ge N$, the family $(\IA_n)_{n\ge N}$ is $(U,\delta,C,  (C\sigma_n), q)$-compatible.

Now  for every $\beta \in[\beta_0,1)$ there exists a $\CL>0$ such that $f$ also admits the \loja-inequality with parameters $\beta$ and $\CL$ on $U$ and we can choose 
a quadruple $(N_0, 
\beta, C_v, C_w)$ which satisfies the statement of Proposition~\ref{prop:assu} for $\sigma_n=Cn^\sigma$.	
We assume without loss of generality that $\Upsilon=0$ mentioning that we could consider $\bar f=f-\Upsilon$ instead of $f$ and fix a sufficiently large $N \ge N_0$ so that $(\IA_n)_{n\ge N}$ is $(U,\delta,C,  (C\sigma_n), q)$-compatible.

Additionally, we let $\IB_N=\IA_N \cap\{-w_N \le f(X_N)\}$ and for $n>N$
\begin{align*}
\IB_n=\IB_{n-1}\cap \IA_{n-1} \cap \bigl\{\gamma_n|D_n|< \delta,
-w_n \le f(X_n)\bigr\}.
\end{align*}
We note that $(\IB_n)_{n\ge N}$ satisfies a representation as in Lemma~\ref{le:loja} for $(\IA_n')_{n\ge N}$ given by 
$\IA_N'=\IA_N$
and, for $n>N$,
\begin{align*}
\IA_n'=\IB_n\cap\bigl\{X_n\in U,  |\nabla f(X_n)|^2 &\le C \langle \nabla f(X_n), -\Gamma_{n+1} \rangle  , \,   |\Gamma_{n+1}|\le C |\nabla f(X_n)|, \\
&\E[|D_{n+1}|^q|\cF_{n}]\le C\sigma_n^q\text{ and } \E[D_{n+1}|\cF_{n}]=0\bigr\}.
\end{align*}
In particular, one has $\IA_n'\subset \IA_n$ and $(\IA_n')_{n \ge N}$ is $(U,\delta, C, (C\sigma_n), q)$-compatible if this is the case for $(\IA_n)_{n \ge N}$.
Note that  for $\IB_\infty = \bigcap_{n \ge N} \IB_n$ (and respectively for $\IA_\infty$ and $\IA_\infty'$) we have that
\begin{align*}
\IB_\infty=\IA_\infty'=\IA_\infty \cap\{F(X_n)\ge -w_n\text{ for all }n\ge N\}.
\end{align*}
By  Propositions~\ref{prop83256} and~\ref{prop:assu}, one has that on $\IA_\infty'$, almost surely, $\sum_{n>N} \gamma_n |\Gamma_{n}|<\infty$.
Moreover,  the $L^2$-martingale $(\bar M_n)_{n\ge N}=(\sum_{\ell=N+1}^n \gamma_\ell \1_{\IA_{\ell-1}} D_\ell)_{n \ge N}$ converges, a.s., since 
$
\E[\langle \bar M\rangle_\infty]\le \sum_{\ell=N+1}^\infty C^{2}(\gamma_\ell \sigma_\ell)^2$.
Hence, we have almost sure convergence of $(X_n)_{n \in \N_0}$ on $\IA'_\infty$.
Now consider the stopping time $T$ given by
\begin{align*}
T(\omega)=\inf\{n\ge N: f(X_n(\omega))<-w_n, \omega\in \IA_n\}.
\end{align*}
For fixed $N'\ge N$ we apply Lemma~\ref{le:47236} to estimate the probability
\begin{align*}
\P\bigl(\sup_{n\ge N'} f(X_n)-f(X_{N'})\ge w_{N'}   \big|T=N' \bigr).
\end{align*}
Note that $(\IA_{n}'')_{n\ge N'}$ with $\IA_n''=\{T=N'\}\cap \IA_n$ is $(U,\delta,C, (C\sigma_n),2)$-compatible under the conditional distribution $\P(\,\cdot\, |T=N')$. Recall that $2C^3\|\nabla f\|_{\mathrm{Lip}(U^{2\delta})}\gamma_n\le \delta'$ 
so that
\begin{align*}
\P\bigl(\sup_{n>N'}&\1_{\IA_{n-1}\cap\{\gamma_n|D_n|<\delta\}} (f(X_n)-f(X_{N'}))\ge w_{N'}   \big|T=N' \bigr)\\
&\le \phi\Bigl(\frac{1-\delta'}{2C^3 \gamma_{N'+1}\sigma_{N'+1}^2}w_{N'}\Bigr)+4 C^2\|\nabla f\|_{\mathrm{Lip}(U^{2\delta})}  \sum_{\ell=N'+1}^\infty (\gamma_\ell \sigma_\ell)^2\, \frac1{w_{N'}}=:\rho_{N'}.
\end{align*}
By Proposition~\ref{prop:assu}, $\rho_{N'}\to 0$ as $N'\to \infty$ so that also $\bar \rho_N:=\sup_{n\ge N}\rho_n$ converges to zero. We conclude that
\begin{align*}
&\P\bigl((\IA_\infty\backslash\IB_\infty) \cap \bigl\{\lim_{n\to\infty} f(X_n)=0\bigr\}\bigr) =\P\bigl(\IA_\infty\cap \bigl\{T<\infty, \lim_{n\to\infty} f(X_n)=0\bigr\}\bigr)\\
&\le \sum_{N'=N}^\infty \P(T=N') \P\Bigl(\sup_{n>N'} \1_{\IA_{n-1} \cap\{\gamma_n|D_n|<\delta\}}(f(X_n)-f(X_{N'})) \ge w_{N'} \Big|T=N'\Bigr)\\
&\le \bar \rho_N.
\end{align*}
We recall that $\IA_\infty$ and $\IB_\infty$ depend on the choice of $N$ and in the following we write $\bar \IA_N$ and $\bar\IB_N$ for the respective events.
Moreover, we denote
\begin{align*}
\bar \IM^{C, \sigma ,q}_ N:=\bigcap_{n=N}^\infty \bigl\{|\nabla f(X_n)|^2 &\le C \langle \nabla f(X_n), -\Gamma_{n+1} \rangle  , \,   |\Gamma_{n+1}|\le C |\nabla f(X_n)|, \\
&\E[|D_{n+1}|^q|\cF_{n}]\le (C\sigma_{n+1})^q \text{ and } \E[D_{n+1}|\cF_{n}]=0\bigr\},
\end{align*}
\begin{align*}
\bar \IM^{C, \sigma,q}=\bigcup _{N\in \N}\bar \IM^{C,\sigma,q}_ N, \ \IU_N:=\bigcap_{n=N}^\infty \{X_n\in U\} \text{ and } \IU :=\bigcup _{N \in \N} \IU_N.
\end{align*}
We note that $(\bar \IM^{C,\sigma,q}_ N\cap \IU_N )\backslash \bar \IA_N=\bar \IM^{C,\sigma,q}_ N\cap \IU_N \cap\{\exists n>N: \gamma_n |D_n|\ge \delta\}$ and estimate
\begin{align*}
&\P(\bar\IM^{C,\sigma,q}_ N\cap \IU_N \cap\{\exists n>N: \gamma_n |D_n|\ge \delta\})\\
&\le \sum_{n=N+1}^\infty \P\bigl( \gamma_n |D_n|\ge \delta\,\big|\,\E[|D_n|^2 |\cF_{n-1}]\le  (C\sigma_n)^2\bigr) \le \delta^{-1} \sum_{n=N+1}^\infty C^2(\gamma_n\sigma_n)^2.
\end{align*}
By Theorem~\ref{theo1}, we have a.s.\ convergence of $(f(X_n))_{n \in \N_0}$ on $\bar\IM^{C,\sigma,q}_ N\cap \IU_N$ and since $f$ has a unique critical level on $U$ we thus have $f(X_n)\to0$, a.s.,  on $\bar\IM^{C,\sigma,q}_ N\cap \IU_N$.
We conclude that
\begin{align*}
\P(\bar\IM^{C,\sigma,q}_ N\cap &\IU_N\cap\{(X_n)\text{ does not converge}\})\le \P((\bar\IM^{C\sigma ,q}_ N\cap \IU_N )\backslash \bar \IA_N)+\P(\bar \IA_N\backslash\bar \IB_N)\\
&\le \delta^{-1} \sum_{n=N+1}^\infty C^2(\gamma_n\sigma_n)^2 +\bar \rho_N \to 0, \text{ \ as }N\to\infty.
\end{align*}
Thus we obtain almost sure convergence of $(X_n)_{n \in \N_0}$ on the monotone limit  $\bar \IM^{C\sigma ,q}\cap \IU=\bigcup_{N \in \N} \bar \IM^{C,\sigma,q}_N\cap \IU_N$. Obviously, this is also true on $\IG \cap \IM^{\sigma,q}\cap \IU$ since 
\begin{align*}
\IG \cap \IM^{\sigma,q} =\bigcup_{C\in \N} \bar \IM^{C,\sigma,q}.
\end{align*}

Now note that there is only a finite number of distinct critical levels on $K$, say  $\ell_1,\dots,\ell_J$, and for each critical level $\ell_j$ $(j=1,\dots,J)$ we can choose an open \loja-neighbourhood $U _j$ as above. 
By elementary analysis it follows that for all sequences $(x_n)_{n \in \N_0}$ with
(a) all but finitely many $x_n$ lie in $K$, (b) $\lim_{n\to\infty} f(x_n)$ exists and (c) $\lim_{n\to\infty} \nabla f(x_n)=0$, one has that $(f(x_n))_{n \in \N_0}$ converges to one of the critical levels $\ell_1,\dots,\ell_{J}$ and all but finitely many of its entries are in the respective open set $U_j$.
By Theorem~\ref{theo1}, we have that in the case where $\IG \cap \IM^{\sigma,q}$ enters and all but finitely many of the entries $(X_n)_{n \in \N_0}$ lie in $K$ that all but finitely many entries of $(X_n)_{n \in \N_0}$ lie in a single set~$U_j$ and we thus obtain with the above that $(X_n)_{n \in \N_0}$ converges almost surely. Since this is true for every compact set $K$ we obtain the result.

\section{Analytic neural networks}\label{sec4}

In this section, we discuss neural networks that satisfy the assumptions of Theorem~\ref{theo2}. We use similar notation as in~\cite{PV17}. Let $\rho:\R\to\R$ denote a function, the \emph{activation function}.
We fix a \emph{depth} $L\in\N$ and the number of neurons $d_\mathrm{in}:=N_0,\dots,N_L=:d_{\mathrm{out}}$ in each of the layers and we denote by
\begin{align*}
\cP_N:=\prod_{\ell=1}^L(\R^{N_\ell\times N_{\ell-1}}\times \R^{N_\ell}),
\end{align*}
the set of all parametrizations of networks with \emph{architecture} $N=(N_0,\dots,N_L)$.
The architecture~$N$ is related to the \emph{realization map} $\cR_N:\cP_N\to C(\R^{d_\mathrm{in}},\R^{d_\mathrm{out}})$ given by
\begin{align*}
\Theta=(A_\ell,b_\ell)_{\ell=1}^L\mapsto \cR_N(\Theta)=\mathrm{Aff}_{A_L,b_L} \circ\rho^{\otimes N_{L-1}}\circ \mathrm{Aff}_{A_{L-1},b_{L-1}} \circ  \ldots \circ\rho^{\otimes N_1}\circ \mathrm{Aff}_{A_1,b_1},  
\end{align*}
where for $p,q\in\N$ for each $A\in\R^{p,q}$ and $b\in\R^p$,
$
\mathrm{Aff}_{A,b}:\R^q\to\R^p, x\mapsto Ax+b
$
and $\rho^{\otimes p}:\R^p\to \R^p$ is the mapping that applies $\rho$ component-wise.

\begin{prop}\label{prop25} If $\rho:\R\to\R$ is analytic, then
the mapping
\begin{align*}
	\cP_N\times \R^{d_\mathrm{in}} \to \R^{d_\mathrm{out}},  (\Theta,x) \mapsto \cR_N(\Theta,x)=\cR_N(\Theta)(x)
\end{align*}
is analytic.
\end{prop}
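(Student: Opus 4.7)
The plan is to prove this by induction on the depth $L$ of the network, exploiting the well-known fact that compositions of real-analytic functions are real-analytic, together with the observation that the affine maps in each layer are polynomial (hence analytic) jointly in the parameters and the input.

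First I would record two elementary facts that will serve as the backbone of the induction. Fact one: for any $p, q \in \N$, the map
\[
\R^{p \times q} \times \R^p \times \R^q \to \R^p, \qquad (A, b, x) \mapsto Ax + b
\]
is polynomial in its entries and hence real-analytic. Fact two: if $g : \R^m \to \R^n$ is analytic and $h : \R^n \to \R^k$ is analytic, then $h \circ g$ is analytic (this is a standard result for real-analytic functions between Euclidean spaces). Combined with Fact two, componentwise application $\rho^{\otimes p}$ is analytic on $\R^p$, since each coordinate $(y_1, \dots, y_p) \mapsto \rho(y_i)$ is the composition of the analytic coordinate projection with $\rho$, and a map into $\R^p$ is analytic iff each coordinate is.

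Next I would set up the induction on the depth. For $\ell = 0, 1, \dots, L$, define the partial realization
\[
\Phi_\ell : \cP_N \times \R^{d_{\mathrm{in}}} \to \R^{N_\ell}
\]
by $\Phi_0(\Theta, x) = x$ and, for $\ell \ge 1$,
\[
\Phi_\ell(\Theta, x) = \begin{cases} \mathrm{Aff}_{A_\ell, b_\ell}(\Phi_{\ell - 1}(\Theta, x)) & \ell = L, \\ \rho^{\otimes N_\ell}\bigl(\mathrm{Aff}_{A_\ell, b_\ell}(\Phi_{\ell - 1}(\Theta, x))\bigr) & \ell < L. \end{cases}
\]
The base case $\ell = 0$ is trivially analytic (it is a coordinate projection). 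For the inductive step, assume $\Phi_{\ell - 1}$ is analytic. Then the map $(\Theta, x) \mapsto (A_\ell, b_\ell, \Phi_{\ell - 1}(\Theta, x))$ is analytic (projections onto the $\ell$-th block of parameters are analytic and we just invoked the induction hypothesis), and composing with the analytic affine evaluation map from Fact one yields an analytic $\R^{N_\ell}$-valued function. If $\ell < L$, one further post-composes with the analytic $\rho^{\otimes N_\ell}$. Applying this for $\ell = L$ gives $\cR_N(\Theta, x) = \Phi_L(\Theta, x)$ analytic.

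I do not anticipate a serious obstacle: the argument is essentially a bookkeeping exercise once one has the two basic facts on real-analytic functions between Euclidean spaces. The only point that requires a moment's care is that analyticity is claimed \emph{jointly} in $(\Theta, x)$ rather than separately in each; this is why I would write the affine map $\mathrm{Aff}$ as a polynomial in $(A, b, x)$ together at the outset, rather than fixing $\Theta$ and varying $x$ alone.
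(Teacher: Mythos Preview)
Your proof is correct and follows essentially the same approach as the paper: both arguments observe that the realization map is a composition of analytic building blocks (the affine maps, which are polynomial jointly in $(A,b,x)$, and the componentwise activation $\rho^{\otimes p}$) and then invoke closure of analyticity under composition. The paper packages this slightly differently---it carries $\Theta$ along in each intermediate map $\cP_N\times\R^{N_{\ell-1}}\to\cP_N\times\R^{N_\ell}$ so the whole thing is a single chain of compositions---whereas you do an explicit induction on the layer index, but the content is the same.
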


\begin{proof}Note that for each $\ell=1,\dots,L$ the mapping
\begin{align*}
	\cP_N\times \R^{N_{\ell-1}} \to \cP_N\times \R^{N_{\ell}}, (\Theta,x)\mapsto (\Theta,
	A_\ell x+b_\ell),
\end{align*}
with $(A_\ell,b_\ell)$ being the $\ell$-th entry of $\Theta$ is analytic. Moreover, for each $\ell=1,\dots,L-1$ the mapping
\begin{align*}
	\cP_N\times \R^{N_{\ell}} \to \cP_N\times \R^{N_{\ell}}, (\Theta,x)\mapsto (\Theta, \rho^{\otimes N_{\ell}}(x))
\end{align*}
is analytic. The mapping in the proposition may be written as composition of the above analytic functions. Hence, it is also analytic. 
\end{proof}

We give sufficient conditions that imply analyticity and, thus, the existence of local \loja-inequalities for the objective function $f(\Theta)=\E[\mathcal L(\mathcal R(\Theta,X),Y)]$, where $X$ and $Y$ are compactly supported random variables. In practice, the expectation is often taken as an empirical average over a finite data set, so that $X$ and $Y$ are clearly compactly supported.

\begin{theorem}\label{theo3}We assume the above setting and let $\rho:\R\to\R$ and $\cL:\R^{d_\mathrm{out}}\times \R^{d_\mathrm{out}}\to \R$ be analytic functions. Then for compactly supported $\R^{d_\mathrm{in}}$- and $\R^{d_\mathrm{out}}$-valued random variables $X$ and $Y$, the function
\begin{align*}
	f:\cP_N\to \R,\, \Theta\mapsto \E[\cL(\cR(\Theta,X),Y)]
\end{align*}
is analytic.
\end{theorem}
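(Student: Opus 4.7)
The plan is to combine Proposition~\ref{prop25} with a compactness argument that produces a uniform (in the training data) radius of analyticity, after which one may interchange expectation and Taylor series.

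First, by Proposition~\ref{prop25} the realization map $(\Theta,x)\mapsto \cR_N(\Theta,x)$ is (real) analytic on $\cP_N\times \R^{d_\mathrm{in}}$. Since $\cL$ is analytic and the composition of analytic functions is analytic, the integrand
$$
g:\cP_N\times \R^{d_\mathrm{in}}\times \R^{d_\mathrm{out}}\to\R,\qquad (\Theta,x,y)\mapsto \cL(\cR_N(\Theta,x),y)
$$
is jointly real analytic.

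Fix $\Theta_0\in\cP_N$ and let $K\subset \R^{d_\mathrm{in}}\times \R^{d_\mathrm{out}}$ be a compact set containing $\supp((X,Y))$. The key step is to produce a complex polydisc $U_0^{(r)}$ around $\Theta_0$ of polyradius $r>0$, a complex open set $V\supset K$, and a constant $M>0$ such that $g$ extends to a holomorphic function on $U_0^{(r)}\times V$ bounded by $M$. Joint analyticity gives for each $(x_0,y_0)\in K$ such a complex neighbourhood of $(\Theta_0,x_0,y_0)$; by compactness of $\{\Theta_0\}\times K$ finitely many $V$-factors cover $K$, and intersecting the corresponding $\Theta$-factors yields a single polydisc around $\Theta_0$ that works uniformly for $(x,y)\in K$.

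Given this, the Cauchy inequalities on a polydisc imply
$$
\frac{1}{\alpha!}\bigl|\partial^\alpha_\Theta g(\Theta_0,x,y)\bigr|\le \frac{M}{r^{|\alpha|}}
$$
uniformly in $(x,y)\in K$ and in every multi-index $\alpha$ in the $\Theta$-variables. Since $(X,Y)\in K$ almost surely, for $\Theta$ in a real neighbourhood of $\Theta_0$ of radius strictly less than $r$ the series
$$
\sum_\alpha \frac{(\Theta-\Theta_0)^\alpha}{\alpha!}\,\partial^\alpha_\Theta g(\Theta_0,X,Y)
$$
converges absolutely with a non-random geometric majorant, so dominated convergence (or Fubini) allows interchange with the expectation, giving
$$
F(\Theta)=\E\bigl[g(\Theta,X,Y)\bigr]=\sum_\alpha \frac{(\Theta-\Theta_0)^\alpha}{\alpha!}\,\E\bigl[\partial^\alpha_\Theta g(\Theta_0,X,Y)\bigr].
$$
This identifies $F$ with a convergent power series on a neighbourhood of every $\Theta_0$, hence $F$ is real analytic.

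The only non-routine step is the uniform Cauchy estimate: guaranteeing that the polyradius $r$ in $\Theta$ can be chosen independently of $(x,y)\in K$. This is purely a topological matter handled by the compactness of $\{\Theta_0\}\times K$ combined with the joint holomorphic extension of $g$, and once it is established, the interchange of expectation and power series is immediate from the geometric bound on the Taylor coefficients.
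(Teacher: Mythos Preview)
Your proof is correct and follows essentially the same strategy as the paper: both establish joint analyticity of the integrand via Proposition~\ref{prop25}, then use compactness of $\{\Theta_0\}\times K$ to obtain derivative bounds that are uniform in the data $(x,y)$, and conclude analyticity of the expectation. The only cosmetic difference is that the paper invokes the characterisation ``analytic $\Leftrightarrow$ $C^\infty$ with factorial derivative bounds'' (Krantz--Parks) and directly bounds $\E[\partial^\mu_\Theta g(\Theta_0,X,Y)]$, whereas you pass through the holomorphic extension, apply Cauchy's inequalities, and interchange the Taylor series with the expectation explicitly; these are two phrasings of the same argument.
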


\begin{proof}
As consequence of Proposition~\ref{prop25} the mapping
\begin{align*}
	\cP_N\times \R^{d_\mathrm{in}}\times \R^{d_\mathrm{out}}\to\R, (\Theta,x,y)\mapsto \cL(\cR(\Theta,x),y)
\end{align*}
is analytic.
We prove that generally for $p,q\in\N$, an analytic function $G:\R^{p+q}\to \R$ and a compactly supported $\R^q$-valued random variable $Z$, the function
$
\R^p \ni x\mapsto \E[G(x,Z)]
$
is analytic. This then implies the statement of the theorem.

We will use that a function is analytic if and only if it is $C^\infty$ and satisfies locally an estimate as~(\ref{eq741}) below, see for instance Proposition 2.2.10 of \cite{krantz2002primer}.
Pick $x_0\in \R^p$ and a compact set $K\subset \R^q$ on which $Z$ is supported. Then there exists  for each $z_0\in \R^q$, an open set $U_{z_0}\supset \{(x_0,z_0)\}$, $C_z<\infty$ and $R_{z_0}\in(0,1]$ such that for all $(x,z)\in U_{z_0}$ one has for every multiindex $\mu$
\begin{align}\label{eq741}
	\Bigl| \frac{\partial^{|\mu|}G}{\partial x^\mu}(x,z) \Bigr| \le C_{z_0} \cdot \frac{\mu!}{R_{z_0}^{|\mu|}}.
\end{align}
Now $\bar U_{z_0}:=\{z\in\R^q:(x_0,z)\in U_{z_0}\}\supset \{z_0\}$ is open and there is a finite cover (a finite subset $\mathcal Z\subset \R^q$) with
$
\bigcup_{z'\in\mathcal Z}	\bar U_{z'}\supset K.
$
We let $R:=\min _{z'\in\mathcal Z} R_{z'}>0$ and $C=\max_{z'\in\mathcal Z}C_{z'}$ and note that for every $(x,z)\in\bigcup_{z'\in\mathcal Z}	U_{z'}$ one has
\begin{align*}
	\Bigl| \frac{\partial^{|\mu|}G}{\partial x^\mu}(x,z) \Bigr| \le C \cdot \frac{\mu!}{R^{|\mu|}}.
\end{align*}
Now $\bigcup_{z'\in\mathcal Z}	U_{z'}$ is open and covers $\{x_0\}\times K$ so that there exists an open neighbourhood $V$ of $x_0$ with $V\times K\subset \bigcup_{z'\in\mathcal Z}	U_{z'}$ and we observe that for every $x\in V$ and every multiindex $\mu$
\begin{align*}
	\E\Bigl[ \frac{\partial^{|\mu|}G}{\partial x^\mu}(x,Z)\Bigr]\le  C \cdot \frac{\mu!}{R^{|\mu|}}.
\end{align*}
Using that $G(\cdot,z)$ is $C^\infty$ for every $z\in\R^q$ we obtain the result.
\end{proof}

{\bf Acknowledgement.}
We thank Christoph Böhm for pointing out the relevance of  \L ojasiewicz-inequalities in the analysis of ODEs.

Funded by the Deutsche Forschungsgemeinschaft (DFG, German Research Foundation) under Germany's Excellence Strategy EXC 2044--390685587, Mathematics Münster: Dynamics--Geometry--Structure.

\bibliographystyle{alpha}
\bibliography{Lojasiewicz_inequalities}

\newcommand{\etalchar}[1]{$^{#1}$}
\begin{thebibliography}{KMMW19}

\bibitem[AB09]{attouch2009convergence}
H.~Attouch and J.~Bolte.
\newblock On the convergence of the proximal algorithm for nonsmooth functions
  involving analytic features.
\newblock {\em Math. Program.}, 116:5--16, 2009.

\bibitem[ACHL19]{arora2019implicit}
S.~Arora, N.~Cohen, W.~Hu, and Y.~Luo.
\newblock Implicit regularization in deep matrix factorization.
\newblock In {\em Neural Information Processing Systems}, volume~32, 2019.

\bibitem[ADR22]{aujol2022convergence}
J.-F. Aujol, C.~Dossal, and A.~Rondepierre.
\newblock Convergence rates of the heavy-ball method under the
  \uppercase{{\L}}ojasiewicz property.
\newblock {\em Math. Program.}, pages 1--60, 2022.

\bibitem[AMA05]{AMA05}
P.-A. Absil, R.~Mahony, and B.~Andrews.
\newblock Convergence of the iterates of descent methods for analytic cost
  functions.
\newblock {\em SIAM J. Optim.}, 16(2):531--547, 2005.

\bibitem[BBM18]{bassily2018exponential}
R.~Bassily, M.~Belkin, and S.~Ma.
\newblock On exponential convergence of \uppercase{S}\uppercase{G}\uppercase{D}
  in non-convex over-parametrized learning.
\newblock arXiv:1811.02564, 2018.

\bibitem[BM11]{bach2011non}
F.~Bach and E.~Moulines.
\newblock Non-asymptotic analysis of stochastic approximation algorithms for
  machine learning.
\newblock In {\em Neural Information Processing Systems}, 2011.

\bibitem[BM13]{Bach2013}
F.~Bach and E.~Moulines.
\newblock Non-strongly-convex smooth stochastic approximation with convergence
  rate \uppercase{O}(1/n).
\newblock In {\em Neural Information Processing Systems}, volume~26, pages
  773--781, 2013.

\bibitem[BMP90]{BMP90}
A.~Benveniste, M.~M{\'e}tivier, and P.~Priouret.
\newblock {\em Adaptive algorithms and stochastic approximations}, volume~22 of
  {\em Applications of Mathematics}.
\newblock Springer-Verlag, Berlin, 1990.

\bibitem[BT00]{bertsekas2000gradient}
D.~P. Bertsekas and J.~N. Tsitsiklis.
\newblock Gradient convergence in gradient methods with errors.
\newblock {\em SIAM J. Optim.}, 10(3):627--642, 2000.

\bibitem[CFR23]{chouzenoux2023kurdyka}
E.~Chouzenoux, J.-B. Fest, and A.~Repetti.
\newblock A \uppercase{K}urdyka-\uppercase{{\L}}ojasiewicz property for
  stochastic optimization algorithms in a non-convex setting.
\newblock arXiv:2302.06447, 2023.

\bibitem[CHJ09]{chill2009applications}
R.~Chill, A.~Haraux, and M.~A. Jendoubi.
\newblock Applications of the \uppercase{{\L}}ojasiewicz-\uppercase{S}imon
  gradient inequality to gradient-like evolution equations.
\newblock {\em Anal. Appl. (Singap.)}, 7(4):351--372, 2009.

\bibitem[CJR21]{cheridito2021nonconvergence}
P.~Cheridito, A.~Jentzen, and F.~Rossmannek.
\newblock Non-convergence of stochastic gradient descent in the training of
  deep neural networks.
\newblock {\em J. Complexity}, 64:101540, 2021.

\bibitem[COB19]{chizat2019lazy}
L.~Chizat, E.~Oyallon, and F.~Bach.
\newblock On lazy training in differentiable programming.
\newblock In {\em Neural Information Processing Systems}, volume~32, 2019.

\bibitem[Coo21]{cooper2021global}
Y.~Cooper.
\newblock Global minima of overparameterized neural networks.
\newblock {\em SIAM J. Math. Data Sci.}, 3(2):676--691, 2021.

\bibitem[Cur44]{curry1944method}
H.~B. Curry.
\newblock The method of steepest descent for non-linear minimization problems.
\newblock {\em Quart. Appl. Math.}, 2(3):258--261, 1944.

\bibitem[DK22]{dereich2022cooling}
S.~Dereich and S.~Kassing.
\newblock Cooling down stochastic differential equations: Almost sure
  convergence.
\newblock {\em Stochastic Process. Appl.}, 152:289--311, 2022.

\bibitem[DK23a]{DK20}
S.~Dereich and S.~Kassing.
\newblock Central limit theorems for stochastic gradient descent with averaging
  for stable manifolds.
\newblock {\em Electron. J. Probab.}, 28:1--48, 2023.

\bibitem[DK23b]{dereich2023existence}
S.~Dereich and S.~Kassing.
\newblock On the existence of optimal shallow feedforward networks with
  \uppercase{R}e\uppercase{L}\uppercase{U} activation.
\newblock arXiv:2303.03950, 2023.

\bibitem[DLL{\etalchar{+}}19]{du19c}
S.~Du, J.~Lee, H.~Li, L.~Wang, and X.~Zhai.
\newblock Gradient descent finds global minima of deep neural networks.
\newblock In {\em Proceedings of the 36th International Conference on Machine
  Learning}, volume~97, pages 1675--1685. PMLR, 2019.

\bibitem[DMG19]{dereichgronbach2019}
S.~Dereich and T.~M\"uller-Gronbach.
\newblock General multilevel adaptations for stochastic approximation
  algorithms of {R}obbins-{M}onro and {P}olyak-{R}uppert type.
\newblock {\em Numer. Math.}, 142(2):279--328, 2019.

\bibitem[Duf96]{Duf96}
M.~Duflo.
\newblock {\em Algorithmes stochastiques}, volume~23 of {\em Math\'ematiques \&
  Applications}.
\newblock Springer-Verlag, Berlin, 1996.

\bibitem[FEHK22]{fatkhullin2022sharp}
I.~Fatkhullin, J.~Etesami, N.~He, and N.~Kiyavash.
\newblock Sharp analysis of stochastic optimization under global
  \uppercase{K}urdyka-\uppercase{{\L}}ojasiewicz inequality.
\newblock {\em Neural Information Processing Systems}, 35:15836--15848, 2022.

\bibitem[FGJ20]{fehrman2020convergence}
B.~Fehrman, B.~Gess, and A.~Jentzen.
\newblock Convergence rates for the stochastic gradient descent method for
  non-convex objective functions.
\newblock {\em J. Mach. Learn. Res.}, 21:136, 2020.

\bibitem[Gai94]{Gai94}
A.~A. Gaivoronski.
\newblock Convergence properties of backpropagation for neural nets via theory
  of stochastic gradient methods. \uppercase{P}art 1.
\newblock {\em Optim. Methods Softw.}, 4(2):117--134, 1994.

\bibitem[GK23]{gess2023convergence}
B.~Gess and S.~Kassing.
\newblock Convergence rates for momentum stochastic gradient descent with noise
  of machine learning type.
\newblock arXiv:2302.03550, 2023.

\bibitem[Gri94]{Grippo94}
L.~Grippo.
\newblock A class of unconstrained minimization methods for neural network
  training.
\newblock {\em Optim. Methods Softw.}, 4(2):135--150, 1994.

\bibitem[GSL21]{gower2021sgd}
R.~Gower, O.~Sebbouh, and N.~Loizou.
\newblock \uppercase{S}\uppercase{G}\uppercase{D} for structured nonconvex
  functions: Learning rates, minibatching and interpolation.
\newblock In {\em International Conference on Artificial Intelligence and
  Statistics}, pages 1315--1323. PMLR, 2021.

\bibitem[GSZ21]{gurbuzbalaban2021heavy}
M.~Gurbuzbalaban, U.~Simsekli, and L.~Zhu.
\newblock The heavy-tail phenomenon in \uppercase{S}\uppercase{G}\uppercase{D}.
\newblock In {\em International Conference on Machine Learning}, pages
  3964--3975. PMLR, 2021.

\bibitem[GWB{\etalchar{+}}17]{gunasekar2017implicit}
S.~Gunasekar, B.~E. Woodworth, S.~Bhojanapalli, B.~Neyshabur, and N.~Srebro.
\newblock Implicit regularization in matrix factorization.
\newblock In {\em Neural Information Processing Systems}, volume~30, 2017.

\bibitem[Har12]{haraux2012some}
A.~Haraux.
\newblock Some applications of the \uppercase{{\L}}ojasiewicz gradient
  inequality.
\newblock {\em Commun. Pure Appl. Anal}, 11:2417--2427, 2012.

\bibitem[JR22]{jentzen2022}
A.~Jentzen and A.~Riekert.
\newblock On the existence of global minima and convergence analyses for
  gradient descent methods in the training of deep neural networks.
\newblock {\em Journal of Machine Learning}, 1(2):141--246, 2022.

\bibitem[JW23]{jentzen2023overall}
A.~Jentzen and T.~Welti.
\newblock Overall error analysis for the training of deep neural networks via
  stochastic gradient descent with random initialisation.
\newblock {\em Appl. Math. Comput.}, 455:127907, 2023.

\bibitem[KMMW19]{karimi2019non}
B.~Karimi, B.~Miasojedow, E.~Moulines, and H.-T. Wai.
\newblock Non-asymptotic analysis of biased stochastic approximation scheme.
\newblock In {\em Conference on Learning Theory}, pages 1944--1974. PMLR, 2019.

\bibitem[KMT23]{khanh2023new}
P.~D. Khanh, B.~S. Mordukhovich, and D.~B. Tran.
\newblock A new inexact gradient descent method with applications to nonsmooth
  convex optimization.
\newblock arXiv:2303.08785, 2023.

\bibitem[KNS16]{KNS16}
H.~Karimi, J.~Nutini, and M.~Schmidt.
\newblock Linear convergence of gradient and proximal-gradient methods under
  the \uppercase{P}olyak-\uppercase{{\L}}ojasiewicz condition.
\newblock In {\em European Conference on Machine Learning and Knowledge
  Discovery in Databases-Volume 9851}, pages 795--811, 2016.

\bibitem[KP02]{krantz2002primer}
S.~G. Krantz and H.~R. Parks.
\newblock {\em A primer of real analytic functions}.
\newblock Springer Science \& Business Media, 2002.

\bibitem[KR23]{khaled2022better}
A.~Khaled and P.~Richt{\'a}rik.
\newblock Better theory for {SGD} in the nonconvex world.
\newblock {\em Transactions on Machine Learning Research}, 2023.

\bibitem[KY03]{KY03}
H.~J. Kushner and G.~G. Yin.
\newblock {\em Stochastic approximation and recursive algorithms and
  applications}, volume~35 of {\em Applications of Mathematics}.
\newblock Springer-Verlag, New York, second edition, 2003.
\newblock Stochastic Modelling and Applied Probability.

\bibitem[LHLT19]{lei2019stochastic}
Y.~Lei, T.~Hu, G.~Li, and K.~Tang.
\newblock Stochastic gradient descent for nonconvex learning without bounded
  gradient assumptions.
\newblock {\em IEEE Trans. Neural Netw. Learn. Syst.}, 31(10):4394--4400, 2019.

\bibitem[LM22]{li2022unified}
X.~Li and A.~Milzarek.
\newblock A unified convergence theorem for stochastic optimization methods.
\newblock {\em Neural Information Processing Systems}, 35:33107--33119, 2022.

\bibitem[LMQ23]{li2023convergence}
X.~Li, A.~Milzarek, and J.~Qiu.
\newblock Convergence of random reshuffling under the
  \uppercase{K}urdyka--\uppercase{{\L}}ojasiewicz inequality.
\newblock {\em SIAM J. Optim.}, 33(2):1092--1120, 2023.

\bibitem[{\L}oj63]{lojasiewicz1963propriete}
S.~{\L}ojasiewicz.
\newblock Une propri{\'e}t{\'e} topologique des sous-ensembles analytiques
  r{\'e}els.
\newblock {\em Les {\'e}quations aux d{\'e}riv{\'e}es partielles}, 117:87--89,
  1963.

\bibitem[{\L}oj65]{lojasiewicz1965ensembles}
S.~{\L}ojasiewicz.
\newblock Ensembles semi-analytiques.
\newblock {\em Lectures Notes IHES (Bures-sur-Yvette)}, 1965.

\bibitem[LP94]{LuoTseng94}
Z.-Q. Luo and Tseng P.
\newblock Analysis of an approximate gradient projection method with
  applications to the backpropagation algorithm.
\newblock {\em Optim. Methods Softw.}, 4(2):85--101, 1994.

\bibitem[Lus12]{luschgy2012martingale}
H.~Luschgy.
\newblock {\em Martingale in diskreter Zeit: Theorie und Anwendungen}.
\newblock Springer-Verlag, 2012.

\bibitem[MHKC20]{MHKC20}
P.~Mertikopoulos, N.~Hallak, A.~Kavis, and V.~Cevher.
\newblock On the almost sure convergence of stochastic gradient descent in
  non-convex problems.
\newblock In {\em Neural Information Processing Systems}, volume~33, pages
  1117--1128, 2020.

\bibitem[MQ23]{milzarek2023convergence}
A.~Milzarek and J.~Qiu.
\newblock Convergence of a normal map-based
  \uppercase{P}rox-\uppercase{S}\uppercase{G}\uppercase{D} method under the
  \uppercase{K}\uppercase{L} inequality.
\newblock arXiv:2305.05828, 2023.

\bibitem[MS94]{ManSol94}
O.~L. Mangasarian and M.~V. Solodov.
\newblock Serial and parallel backpropagation convergence via nonmonotone
  perturbed minimization.
\newblock {\em Optim. Methods Softw.}, 4(2):103--116, 1994.

\bibitem[Ngu19]{nguyen19a}
Q.~Nguyen.
\newblock On connected sublevel sets in deep learning.
\newblock In {\em Proceedings of the 36th International Conference on Machine
  Learning}, volume~97, pages 4790--4799. PMLR, 2019.

\bibitem[NW99]{nocedal1999numerical}
J.~Nocedal and S.~J. Wright.
\newblock {\em Numerical optimization}.
\newblock Springer, 1999.

\bibitem[PDM12]{palis2012geometric}
J.~Jr Palis and W.~De~Melo.
\newblock {\em Geometric theory of dynamical systems: \uppercase{A}n
  introduction}.
\newblock Springer Science \& Business Media, 2012.

\bibitem[PJ92]{PJ92}
B.~T. Polyak and A.~B. Juditsky.
\newblock Acceleration of stochastic approximation by averaging.
\newblock {\em SIAM J. Control Optim.}, 30(4):838--855, 1992.

\bibitem[Pol63]{Pol63}
B.~T. Poljak.
\newblock Gradient methods for minimizing functionals.
\newblock {\em \v{Z}. Vy\v{c}isl. Mat i Mat. Fiz.}, 3:643--653, 1963.

\bibitem[Pol90]{Pol90}
B.~T. Polyak.
\newblock A new method of stochastic approximation type.
\newblock {\em Avtomat. i Telemekh.}, (7):98--107, 1990.

\bibitem[PRV20]{petersen2020topological}
P.~Petersen, M.~Raslan, and F.~Voigtlaender.
\newblock Topological properties of the set of functions generated by neural
  networks of fixed size.
\newblock {\em Found. Comput. Math.}, pages 1--70, 2020.

\bibitem[PT73]{PolTsy73}
B.~T. Polyak and Ya.~Z. Tsypkin.
\newblock Pseudogradient adaptation and training algorithms.
\newblock {\em Avtomat. i Telemeh.}, (3):45--68, 1973.

\bibitem[PV18]{PV17}
P.~Petersen and F.~Voigtlaender.
\newblock Optimal approximation of piecewise smooth functions using deep
  \uppercase{R}e\uppercase{L}\uppercase{U} neural networks.
\newblock {\em Neural Networks}, 108:296--330, 2018.

\bibitem[QZX20]{qin2020reducing}
T.~Qin, L.~Zhou, and D.~Xiu.
\newblock Reducing parameter space for neural network training.
\newblock {\em Theoretical and Applied Mechanics Letters}, 10(3):170--181,
  2020.

\bibitem[RB23]{rebjock2023fast}
Q.~Rebjock and N.~Boumal.
\newblock Fast convergence to non-isolated minima: four equivalent conditions
  for ${C}^2$ functions.
\newblock arXiv:2303.00096, 2023.

\bibitem[RM51]{RM51}
H.~Robbins and S.~Monro.
\newblock A stochastic approximation method.
\newblock {\em Ann. Math. Statistics}, pages 400--407, 1951.

\bibitem[Rup82]{Rup82}
D.~Ruppert.
\newblock Almost sure approximations to the {R}obbins-{M}onro and
  {K}iefer-{W}olfowitz processes with dependent noise.
\newblock {\em Ann. Probab.}, 10, 1982.

\bibitem[SHN{\etalchar{+}}18]{soudry2018implicit}
D.~Soudry, E.~Hoffer, M.~S. Nacson, S.~Gunasekar, and N.~Srebro.
\newblock The implicit bias of gradient descent on separable data.
\newblock {\em J. Mach. Learn. Res.}, 19(1):2822--2878, 2018.

\bibitem[SSG19]{simsekli2019tail}
U.~Simsekli, L.~Sagun, and M.~Gurbuzbalaban.
\newblock A tail-index analysis of stochastic gradient noise in deep neural
  networks.
\newblock In {\em International Conference on Machine Learning}, pages
  5827--5837. PMLR, 2019.

\bibitem[Tad09]{tadic2009convergence}
V.~B. Tadic.
\newblock Convergence and convergence rate of stochastic gradient search in the
  case of multiple and non-isolated extrema.
\newblock arXiv:0907.1020, 2009.

\bibitem[Tad15]{Tadic15}
V.~B. Tadic.
\newblock Convergence and convergence rate of stochastic gradient search in the
  case of multiple and non-isolated extrema.
\newblock {\em Stochastic Process. Appl.}, 125(5):1715--1755, 2015.

\bibitem[VBS19]{vaswani2019fast}
S.~Vaswani, F.~Bach, and M.~Schmidt.
\newblock Fast and faster convergence of
  \uppercase{S}\uppercase{G}\uppercase{D} for over-parameterized models and an
  accelerated perceptron.
\newblock In {\em The 22nd International Conference on Artificial Intelligence
  and Statistics}, pages 1195--1204. PMLR, 2019.

\bibitem[Wal92]{Walk92}
H.~Walk.
\newblock Foundations of stochastic approximation.
\newblock In {\em Stochastic approximation and optimization of random systems},
  volume~17 of {\em DMV Seminar}, pages 1--51. Birkh\"{a}user Verlag, Basel,
  1992.

\bibitem[WGL{\etalchar{+}}20]{woodworth2020kernel}
B.~Woodworth, S.~Gunasekar, J.~D. Lee, E.~Moroshko, P.~Savarese, I.~Golan,
  D.~Soudry, and N.~Srebro.
\newblock Kernel and rich regimes in overparametrized models.
\newblock In {\em Conference on Learning Theory}, pages 3635--3673. PMLR, 2020.

\bibitem[WME18]{wu2018sgd}
L.~Wu, C.~Ma, and W.~E.
\newblock How \uppercase{S}\uppercase{G}\uppercase{D} selects the global minima
  in over-parameterized learning: A dynamical stability perspective.
\newblock In {\em Neural Information Processing Systems}, volume~31, 2018.

\bibitem[Woj21]{wojtowytsch2021stochasticII}
S.~Wojtowytsch.
\newblock Stochastic gradient descent with noise of machine learning type.
  \uppercase{P}art {II}: Continuous time analysis.
\newblock arXiv:2106.02588, 2021.

\bibitem[Woj23]{wojtowytsch2021stochastic}
S.~Wojtowytsch.
\newblock Stochastic gradient descent with noise of machine learning type
  \uppercase{P}art {I}: Discrete time analysis.
\newblock {\em J. Nonlinear Sci.}, 33(3):45, 2023.

\bibitem[XWW20]{xie2020linear}
Y.~Xie, X.~Wu, and R.~Ward.
\newblock Linear convergence of adaptive stochastic gradient descent.
\newblock In {\em International Conference on Artificial Intelligence and
  Statistics}, pages 1475--1485. PMLR, 2020.

\bibitem[ZWY{\etalchar{+}}19]{zhu2019anisotropic}
Z.~Zhu, J.~Wu, B.~Yu, L.~Wu, and J.~Ma.
\newblock The anisotropic noise in stochastic gradient descent: Its behavior of
  escaping from sharp minima and regularization effects.
\newblock In {\em International Conference on Machine Learning}, pages
  7654--7663. PMLR, 2019.

\end{thebibliography}

\end{document}